\newcommand{\mat}[1]{\mathbf{#1}}
\newcommand{\vect}[1]{\mathbf{#1}}
\newcommand{\norm}[1]{\left\|#1\right\|}
\newcommand{\inner}[1]{\left\langle#1\right\rangle}
\newcommand{\abs}[1]{\left|#1\right|}
\newcommand{\params}{\vect{\theta}}
\newcommand{\relu}[1]{\sigma\left(#1\right)}
\newcommand{\lambdamin}{\lambda_{\min}\left(\mat{K}^{(H)}\right)}
\newtheorem{thm}{Theorem}[section]
\newtheorem{lem}{Lemma}[section]
\newtheorem{condition}{Condition}[section]
\title{ResIST: Layer-Wise Decomposition of ResNets for Distributed Training}
\author[1]{Chen Dun}
\author[1]{Cameron R. Wolfe}
\author[1]{Christopher M. Jermaine}
\author[1]{Anastasios Kyrillidis}
\affil[1]{%
    Computer Science Dept.\\
    Rice University\\
    Houston, Texas, USA
}
\begin{document}
\maketitle

\begin{abstract}
We propose {\rm \texttt{ResIST}}, a novel distributed training protocol for Residual Networks (ResNets).
{\rm \texttt{ResIST}} randomly decomposes a global ResNet into several shallow sub-ResNets that are trained independently in a distributed manner for several local iterations, before having their updates synchronized and aggregated into the global model.
In the next round, new sub-ResNets are randomly generated and the process repeats until convergence.
By construction, per iteration, {\rm \texttt{ResIST}} communicates only a small portion of network parameters to each machine and never uses the full model during training.
Thus, {\rm \texttt{ResIST}} reduces the per-iteration communication, memory, and time requirements of ResNet training to only a fraction of the requirements of full-model training. 
In comparison to common protocols, like data-parallel training and data-parallel training with local SGD, {\rm \texttt{ResIST}} yields a decrease in communication and compute requirements, while being competitive with respect to model performance.
\end{abstract}

\begin{figure*}[ht]
  \centering
\includegraphics[width=\textwidth]{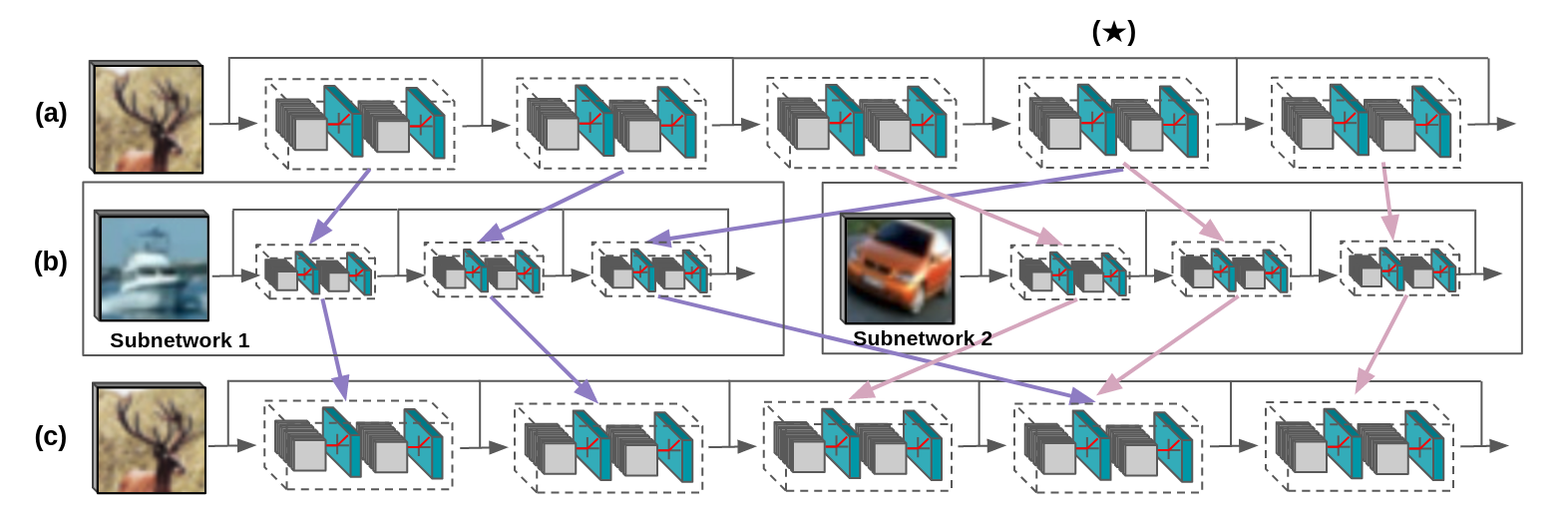}
\vspace{-0.3cm}
\caption{The \texttt{ResIST} model: \textbf{Row $\mathbf{(a)}$} represents the original global ResNet. 
\textbf{Row $\mathbf{(b)}$} shows the creation of two sub-ResNets. Observe that subnetwork 1 contains the residual blocks \#1, \#2 and \#4, while subnetwork 2 contains the residual blocks \#3, \#4 and \#5. 
\textbf{Row $\mathbf{(c)}$} shows the reassembly of the global ResNet, after locally training subnetworks 1 and 2 for some number of local SGD iterations; residual blocks that are common across subnetworks (e.g., residual block \#4, marked with a $\star$) are aggregated appropriately during the reassembly.}
\label{resist}
\vspace{-0.1cm}
\end{figure*}

% this must go after the closing bracket ] following \twocolumn[ ...

% This command actually creates the footnote in the first column
% listing the affiliations and the copyright notice.
% The command takes one argument, which is text to display at the start of the footnote.
% The \mlsysEqualContribution command is standard text for equal contribution.
% Remove it (just {}) if you do not need this facility.

%\printAffiliationsAndNotice{}  % leave blank if no need to mention equal contribution
% \printAffiliationsAndNotice{\mlsysEqualContribution} % otherwise use the standard text.

\section{Introduction}
\textbf{Background.}
The field of Computer Vision (CV) has seen a revolution, beginning with the introduction of AlexNet during the ILSVRC2012 competition. % \citep{alexnet, imagenet}. 
Following this initial application of deep convolutional neural networks (CNNs), 
%more modern architectures were produced, thus rapidly pushing the state of the art in image recognition \citep{zfnet, googlenet, vgg}. 
%In particular, 
the introduction of the residual connection (ResNets) allowed scaling to massive depths without being crippled by issues of unstable gradients during training \citep{resnet}. 
%Such ability to train large networks was only furthered by the development of architectural advancements, like batch normalization \citep{batchnorm}. 
The capabilities of ResNets have been further expanded in recent years, but the basic ResNet architecture has remained widely-used. % \citep{resnext, preactres}.
While ResNets have become a standard building block for the advancement of CV research, % \citep{fasterrcnn, densenets, maskrcnn, retinanet}, 
the computational requirements for training them are significant.
% As compute power has become more advanced, state-of-the-art architectures in computer vision (CV) have become deeper and wider \citep{wideresnet, resnet, stochdepth}.
% Additionally, large-scale annotated vision datasets \citep{imagenet, coco} have resulted in significantly increased training times.
%However, as ResNets enable training on neural networks with massive depth and on much larger scale datasets, the computational requirements for training are significantly increased.
For example, training a ResNet50 on ImageNet with a single NVIDIA M40 GPU takes 14 days \citep{you2018imagenet}.
%The computational expense of training modern computer vision models, such as convolutional neural networks (CNNs), creates a need for efficient methods of distributed training.
%For example, the recent Vision Transformer (ViT) model \citep{16by16} requires 2.5K TPU-v3 core days for pre-training on the JFT-300M dataset, while the Big Transfer (BiT) methodology \citep{BigT} requires over 8 GPU-months of pre-training on similar-sized datasets to achieve reported results.
%Furthermore, training a standard ResNet \citep{resnet} on ImageNet can require several days of GPU time \citep{imagenetinmin, benchmarkgpu}.
%\textcolor{blue}{Tasos: Similar to the Amazon grant, can we "speak" with numbers? e.g., datasets + architectures = that many GPU hours with citations, etc etc. We can include examples that we cannot run yet, but we can conclude with the standard example of ImageNet that requires in the ResNet101 that many GPU hours.}

% what are the current approaches to distributed training?
Distributed training with multiple GPUs is commonly adopted to speed up the training process for ResNets. %distributed training of ResNet with multiple GPUs is commonly adopted. For example, data-parallel training method is the widely used approach to accelerate training of ResNet. \citep{1hrimagenet, increasebatchsize, pytorch-datadistrib, tensorflow-datadistrib}.
%\textcolor{blue}{Tasos: include specific references to Tensorflow and PyTorch modules to signify that.} 
Yet, such acceleration is achieved at the cost of a remarkably large number of GPUs (e.g 256 NVIDIA Tesla
P100 GPU in \citep{1hrimagenet}).
Additionally, frequent synchronization and high communication costs create bottlenecks that hinder such methods from achieving speedups with respect to the number of available GPUs~\citep{distrib-benchmark}.
% High communication costs of such methods hinder them from achieving the ideal, linear speedup \citep{distrib-benchmark, comm-eff-survey}.
%\textcolor{blue}{Tasos: provide examples where more compute sites result into longer training times, with citations if possible.}
Asynchronous approaches avoid the cost of synchronization, but stale updates complicate their optimization process \citep{asynch-summary}.
Other methods, such as data-parallel training with local SGD \citep{localsgdconverge, use_local_sgd, parallel_sgd, fed_avg}, reduce the frequency of synchronization.
Similarly, model-parallel training has gained in popularity by decreasing the cost of local training between synchronization rounds \citep{parallelism_survey, lamp, nonlinear_multigrid_layer_parallel, layer_parallel_resnet, xpipe, multi_gpu_model_parallel}.

% FIXED \textcolor{magenta}{Tasos: would someone complain at this point about model-parallel approaches?}

% What are current methods to reduce communication in distributed training - how is our method different?
% (DONE - chen has these tests) \textcolor{blue}{Tasos: we keep saying that but nobody has tried yet to apply the most established techniques on this. We should do it at some point. }

% what is our proposal to solve these issues?
\textbf{This paper.}
We focus on efficient distributed training of CNNs with residual skip connections.
Our proposed methodology accelerates synchronous, distributed training by leveraging ResNet robustness to layer removal \citep{stochdepth}.
In particular, a group of high-performing subnetworks (sub-ResNets) is created by partitioning the layers of a shared ResNet model to create multiple, shallower sub-ResNets.
These sub-ResNets are then trained independently (in parallel) for several iterations before aggregating their updates into the global model and beginning the next iteration.
%a methodology similar to \citep{IST} can be adopted: %\textcolor{magenta}{Do not assume that people know our work; spend 2-3 sentences to describe the idea without relying on IST; say e.g., we propose the following methodology, blah blah. Then in related works mention that the idea resembles to IST, but highlight the differences.} to train each subnetwork in an independent, distributed manner. 
%Once each shallow subnetwork has been trained for a certain number of iterations, they can be aggregated into a full, global model.
Through the local, independent training of shallow sub-ResNets, this methodology both limits synchronization and communicates fewer parameters per synchronization cycle, thus drastically reducing communication overhead.
% Such methodology limits synchronization by allowing independent training to proceed for many iterations between aggregations.
% Additionally, because only a portion of the network's parameters are partitioned to each sub-ResNet, fewer parameters are communicated per synchronization cycle, thus reducing communication overhead.
% \textit{By rethinking the framework for synchronous distributed training, our proposal addresses the issues that damage its performance most: synchronization and communication}.
We name this scheme \textit{ResNet Independent Subnetwork Training} (\texttt{ResIST}).
The contributions of this work are: \vspace{-0.3cm}
    % give the contributions of this paper
%Some highlights of this work are: 
\begin{itemize}[leftmargin=*]
    \item We propose a distributed training scheme for ResNets, dubbed \texttt{ResIST}, that partitions the layers of a global model to multiple, shallow sub-ResNets, which are then trained independently between synchronization rounds. \vspace{-0.1cm}
    \item We provide theory that \texttt{ResIST} (based on simple ResNet architectures) converges linearly, up to an error neighborhood, using distributed gradient descent with local iterations. We show that the behavior of \texttt{ResIST} is controlled by the overparameterization parameter $m$, as well as the number of workers $S$ in the distributed setting, the number of local iterations, as well as the depth $H$ of the ResNet architecture. Such findings reflect practical observations that are made in the experimental section. \vspace{-0.1cm}
    \item We perform extensive ablation experiments to motivate the design choices for \texttt{ResIST}, indicating that optimal performance is achieved by $i)$ using pre-activation ResNets, $ii)$ scaling intermediate activations of the global network at inference time, $iii)$ sharing layers between sub-ResNets that are sensitive to pruning, and $iv)$ imposing a minimum depth on sub-ResNets during training. \vspace{-0.1cm}
    \item \texttt{ResIST} is shown to achieve high accuracy and time efficiency in all cases. We conduct experiments on several image classification and object detection datasets, including CIFAR10/100, ImageNet, and PascalVOC.  \vspace{-0.1cm}
    \item We utilize \texttt{ResIST} to train numerous different ResNet architectures (e.g., ResNet101, ResNet152, and ResNet200) and provide implementations for each in PyTorch \citep{pytorch}. \vspace{-0.1cm} %\vspace{-0.3cm} 
\end{itemize}

% We conduct experiments on various image classification datasets. 
% Object detection experiments are also provided with \texttt{ResIST} on the PascalVOC dataset.
% The experiments on these datasets involve several ResNet architectures, including ResNet101, ResNet152, and ResNet200.
%\textcolor{blue}{Tasos: we should describe the models + datasets we use: e.g., take this from Demon paper. Experiments are provided on various datasets---including MNIST, FMNIST, CIFAR-10, CIFAR-100, STL-10, Tiny ImageNet, Penn Treebank (PTB); and networks---including Convolutional Networks (CNN) with Residual architecture (ResNet) (Wide ResNet), Non-Residual architecture (VGG-16) , Recurrent Neural Networks (RNN) with Long Short-Term Memory architecture (LSTM) , Variational AutoEncoders (VAE) , Capsule Network , Noise Conditional Score Network (NCSN).}
% give the layout of the paper

\section{Sub-ResNet Training} \label{methods}
\texttt{ResIST} operates by partitioning the layers of a global ResNet to different, shallower sub-ResNets, training those independently, and intermittently aggregating their updates into the global model.
The high-level process followed by \texttt{ResIST} is depicted in Fig. \ref{resist} and outlined in more detail by Algorithm \ref{alg:resist}.
\emph{We note that a naive, uniform partitioning of blocks to each subnetwork, resembling a distributed implementation of \citep{stochdepth}, performs poorly (see Fig. \ref{resist_ablations}).}
To improve upon this procedure, extensive design choices, outlined in Section \ref{design_ablation} in the Appendix, are studied to motivate \texttt{ResIST}, leading to a final methodology that generalizes well across domains and datasets.

\vspace{-0.1cm}
\subsection{Model Architecture} \label{model_arch}
\vspace{-0.1cm}

\begin{figure}
\centering
\includegraphics[width=1\columnwidth]{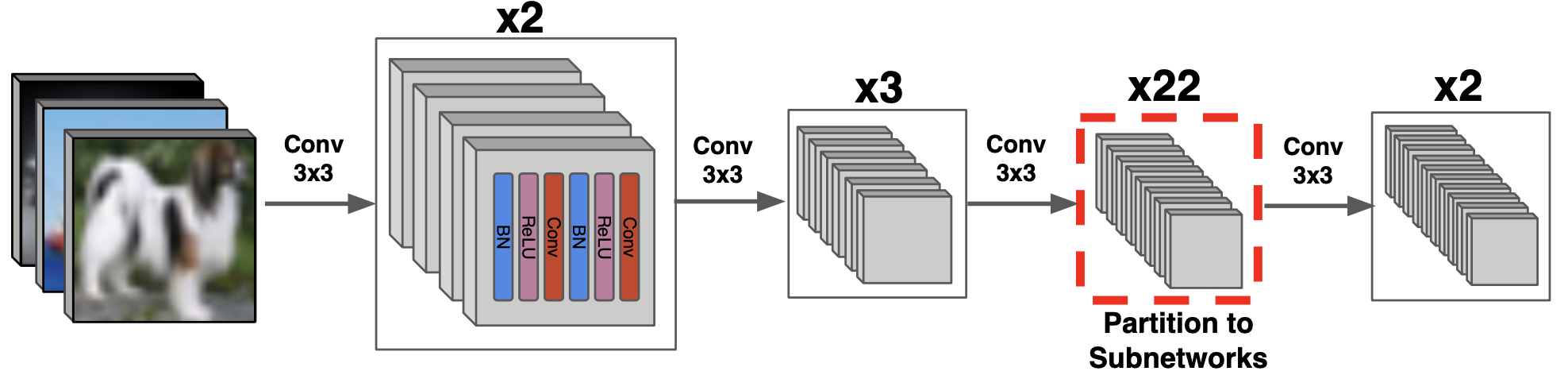}
\vspace{-0.2cm}
\caption{The ResNet101 model used in the majority of experiments. The figure identifies the convolutional blocks that are partitioned to subnetworks. The plot depicts the pre-activation ResNet setting, where we use BN, ReLU, and Conv layers twice in sequence. The network is comprised of four major ``sections'', each containing a certain number of convolutional blocks of equal channel dimension.}
\label{model_depict}
\vspace{-0.4cm}
\end{figure}

To achieve optimal performance with \texttt{ResIST}, the global model must be sufficiently deep.
Otherwise, sub-ResNets may become too shallow after partitioning, leading to poor performance.
For most experiments, a ResNet101 architecture is selected, which balances sufficient depth with reasonable computational complexity. 
Experiments with deeper architectures are provided in Section \ref{S:deep_arch} in the Appendix.
%\textcolor{blue}{Tasos: We will try to include ResNet152 and ResNet200 in future experiments}.
% WE NEED TO PROVIDE EMPIRICAL EVIDENCE OF THIS BEING A GOOD CHOICE! % TODO: show the minimum number of blocks required, but also run some experiments with ResNet 20 ResNet50 etc.
%A more comprehensive, empirical analysis of subnetwork depth is available in Sec. XXX, showing that the ResNet101 is a reasonable choice for the provided experiments.

\texttt{ResIST} performs best with pre-activation ResNets \citep{preactres}.
Intuitively, applying batch normalization prior to the convolution ensures that the input distribution of remaining residual blocks will remain fixed, even when certain layers are removed from the architecture.
The Pre-activation ResNet101, which we utilize for the majority of experiments, is depicted in Fig. \ref{model_depict}.
This model, as well as deeper variants (e.g., ResNet152 and ResNet200), are readily available through deep learning packages like PyTorch \citep{pytorch} and Tensorflow \citep{tensorflow}.

\vspace{-0.1cm}
\subsection{Sub-ResNet Construction} \label{subnet_sec}
\vspace{-0.1cm}
% From \citep{stochdepth}, it is already known that layers can be removed from residual networks, without damaging their performance.
% However, a disjoint partitioning of layers to sub-ResNets was found to perform poorly (see Fig. \ref{resist_ablations}), showing that a naive extension of \citep{stochdepth} into the distributed setting is not sufficient for our methodology.

Pruning literature has shown that strided-, initial-, and final-layers within CNNs are sensitive to pruning \citep{filterprune}.
Additionally, repeated blocks of identical convolutions (i.e., equal channel size and spatial resolution) are less sensitive to pruning \citep{filterprune}.
Drawing upon these results, \texttt{ResIST} only partitions blocks within the third section of the ResNet (see the highlighted section in Fig. \ref{model_depict}), while all other blocks are shared between sub-ResNets.
These blocks are chosen for partitioning because $i)$ they account for the majority of layers; $ii)$ they are not strided; $iii)$ they are located within the middle of the network (i.e., initial/final layers are excluded); and $iv)$ they reside within a long chain of identical convolutions.
By partitioning these blocks, \texttt{ResIST} allows sub-ResNets to be shallower than the global model, while maintaining high performance. % within each sub-ResNet.

The process of constructing sub-ResNets follows a simple procedure; see Figure \ref{resist}.
From row $(a)$ to $(b)$ within Figure \ref{resist}, indices of partitioned layers within the global model are randomly permuted and distributed to sub-ResNets in a round-robin fashion.
Each sub-ResNet receives an equal number of convolutional blocks (e.g., see row $(b)$).
In cases, residual blocks may be simultaneously partitioned to multiple sub-ResNets to ensure sufficient depth (e.g., see $(\star)$ in Figure \ref{resist}).
% Once sub-ResNets have been independently trained, the parameters of each block are copied back into the correct location within the global model, as depicted by the transition from $(b)$ to $(c)$ in Fig. \ref{resist}.
% If a single block was assigned to multiple sub-ResNets (e.g., $(\star)$ in Fig. \ref{resist}), the parameters of this block must be averaged within the global model. 
\texttt{ResIST} produces subnetworks with $\mathcal{O}(\frac{1}{S})$ of the global model depth, where $S$ is the number of independently-trained sub-ResNets.\footnote{A fixed number of blocks is excluded from partitioning (i.e., blocks not in the third section). As a result, this approximation of $\mathcal{O}(\frac{1}{S})$ becomes more accurate as the network becomes deeper (i.e., deeper ResNet variants only add blocks to the third section), as a larger ratio of total blocks are included in the partitioning process.}
To contrast this with existing non-distributed attempts, stochastic depth networks \citep{stochdepth} have an expected depth of 75\% of the global model.

The shallow sub-ResNets created by \texttt{ResIST} accelerate training and reduce communication in comparison to methods that communicate and train the full model.
Table \ref{comm_amounts} shows the comparison of local SGD to \texttt{ResIST} with respect to the amount of data communicated during each synchronization round for different numbers of machines, highlighting the superior communication-efficiency of \texttt{ResIST}.
% DONE \textcolor{blue}{Tasos: we should refer to a table that we will include in the next sections, where we show the amount of information shared per synchronization in our method and in the data parallel case.}

%\vspace{-0.4cm}
\begin{table}[!t]
\centering
\caption{Data communicated during each communication round (in GB) of both local SGD \citep{localsgdconverge} and \texttt{ResIST} across different numbers of machines with ResNet101.}
\vspace{0.2cm}
%\begin{Large}
\begin{tabular}{cccccc}
\toprule
    Method & 2 Machine & 4 Machine & 8 Machine\\ \midrule
    Local SGD & 0.662 GB & 1.325 GB & 2.649 GB\\
    \texttt{ResIST} & \textbf{0.454} GB & \textbf{0.720} GB & \textbf{1.289} GB \\
 \bottomrule
\end{tabular}
\label{comm_amounts}
%\vspace{-0.3cm}
%\end{Large}
\end{table}

%Although such partitioning can be generalized to any number of sites, one must consider that subnetworks may become too shallow as the total number of subnetworks becomes large.
%To prevent such an issue, \texttt{ResIST} enforces a minimum subnetwork depth requirement, which can be satisfied by partitioning certain blocks to multiple subnetworks simultaneously.

%These experiments provide useful insight for choosing ResNet architectures to train with \texttt{ResIST}.
%For example, the ResNet50 only contains five residual blocks that are available for partitioning.
%Although minimum depth requirements can be imposed on each subnetwork to allow the ResNet50 to be trained with \texttt{ResIST}, this minimum depth requirement would result in each subnetwork having a similar depth to the global model.
%As a result, \texttt{ResIST} would, in this case, yield minimal speedups in comparison to data-parallel training schemes.
%The ResNet101 (i.e., the next-largest standard ResNet architecture) contains sufficient residual blocks for partitioning to two, four, and eight subnetworks.
%Additionally, because a model of this depth yields subnetworks with significantly fewer blocks in comparison to the global model, \texttt{ResIST} achieves noticeable acceleration in comparison to vanilla data-parallel training.

\subsection{Distributed Training}
The \texttt{ResIST} training procedure is outlined in Algorithm \ref{alg:resist}.
%Here, $T$ respresents the total number of independent training rounds/synchronizations, $S$ represents the number of sub-ResNets, and $\ell$ represents the number of local training iterations for each sub-ResNet.
Sub-ResNet construction (i.e., \texttt{subResNets$(\cdot)$} in Algorithm \ref{alg:resist}) follows the procedure outlined in Sec. \ref{subnet_sec}.
After constructing the sub-ResNets, they are trained independently in a distributed manner for $\ell$ iterations. % \citep{use_local_sgd}.
% (DONE) \textcolor{blue}{Tasos: we should cite local SGD here.}
% (DONE) \textcolor{blue}{Tasos: we should have a paragraph where we provide an overview of the implementation (e.g., do we run a parameter server or a distributed parameter server implementation? Any details like this should be included somewhere; if there is potential extensions, we should name them: e.g., if we implement a regular parameter server, but there is no reason to believe that we cannot implement a distributed parameter server, we should claim that.}
Following independent training, the updates from each sub-ResNet are aggregated into the global model.
Aggregation (i.e., $\text{\texttt{aggregate}}(\cdot)$ in Algorithm \ref{alg:resist}) sets each global network parameter to its average value across the sub-ResNets to which it was partitioned.
If a parameter is only partitioned to a single sub-ResNet, aggregation simplifies to copying the parameter into the global model.
After aggregation, the global model is re-partitioned randomly to create a new group of sub-ResNets, and this entire process is repeated.

\begin{algorithm}[!htp]
\centering
\caption{\textsc{ResIST} Meta Algorithm}\label{alg:resist}
\begin{algorithmic}
    \STATE \textbf{Parameters}: $T$ synchronization iterations, $S$ sub-ResNets, $\ell$ local iterations, $\mathcal{W}$ ResNet weights. %\vspace{-0.2cm}
    %\\\hrulefill
    \STATE $h(\mathcal{W})$ $\leftarrow$ randomly initialized ResNet.
    \FOR{$t = 0, \dots, T-1$}
    \STATE$\left\{h_s(W_s)\right\}_{s = 1}^S = \text{\texttt{subResNets}}(h(W), ~S)$.
    \STATE Distribute each $h_s(W_s)$ to a different worker.
    \FOR{$s = 1, \dots, S$}
    %\State $subnet$ = $subnets$[$s$]
    \STATE //~Train $h_s(W_s)$ for $\ell$ iterations using local SGD.
    \FOR{$l_t = 1, \dots, \ell$}
    \STATE $W_s=W_s-\eta \frac{\partial L}{W_s}$
    \ENDFOR
    \ENDFOR
    \STATE $h(\mathcal{W}) = \texttt{aggregate}\left(\left\{h_s(W_s)\right\}_{s = 1}^S\right)$.
\ENDFOR        
\end{algorithmic}
\end{algorithm}

\begin{figure}
\vspace{0.1cm}
\centering
\includegraphics[width=0.95\columnwidth]{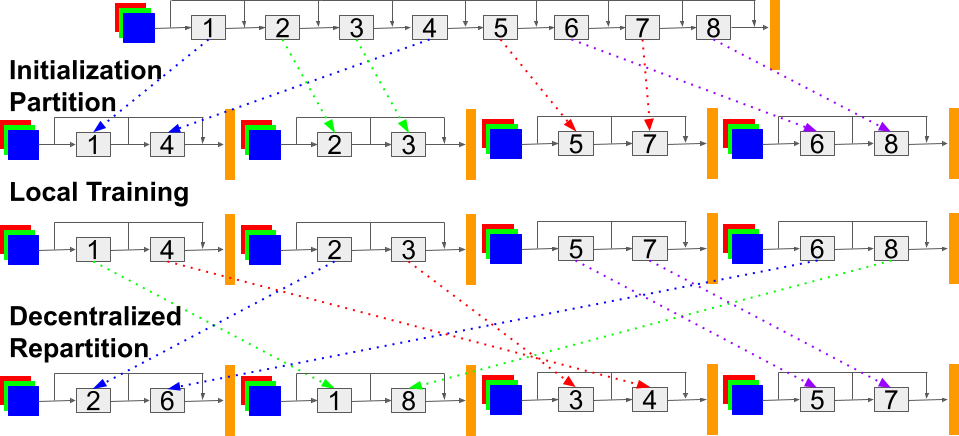}
\vspace{-0.2cm}
\caption{A depiction of the decentralized repartition procedure. This example partitions a ResNet with eight blocks into four different sub-ResNets. The ``blue-green-red'' squares dictate the data that lies per worker; the orange column dictates the last classification layer. As seen in the figure, each worker is responsible for only a fraction of parameters of the whole network. The whole ResNet is never fully stored, communicated or updated on a single worker.}
\label{decentralized}
%\vspace{-0.2cm}
\end{figure}

\begin{figure}[h]
    \vspace{-0.4cm}
    \centering
    \includegraphics[width=1\linewidth]{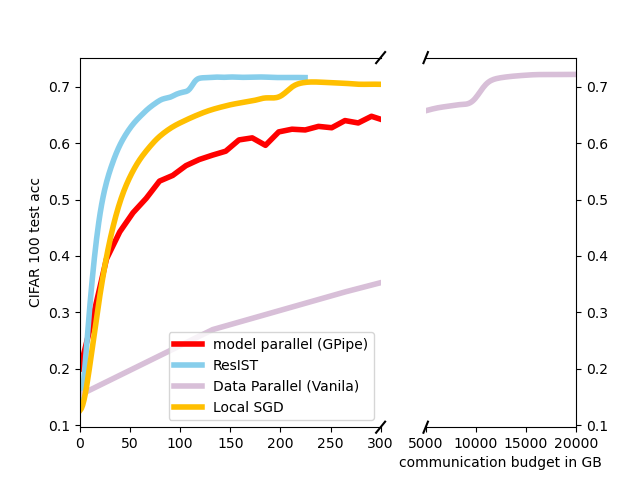}
    \vspace{-0.6cm}
    \caption{Communication efficiency of \texttt{ResIST} versus data parallelism (vanila), model parallelism (GPipe - \citep{huang2019gpipe}) and local SGD (LSGD) on CIFAR100. }
    \label{fig:baseline}
    \vspace{-0.3cm}
\end{figure}

\subsection{Baseline Choice}
Common baselines for distributed training are generally split into data- and model-parallelism protocols. 
Focusing on the former, the communication efficiency of \texttt{ResIST} significantly surpasses data-parallelism.
In particular, data parallel methods need to synchronize the whole model at every training iteration, while \texttt{ResIST} only needs to communicate the weights of sub-ResNets among the workers. 

% Additionally, \texttt{ResIST} is more communication efficient compared to common model parallel methods such as GPipe \citep{huang2019gpipe}. 
Typically, model parallel techniques split the model into modules (such as layers) and distribute these modules to each worker.
%E.g., GPipe splits the model into modules (such as layers) and distributes them to each worker.
At every training iteration, input data is first passed to the worker containing the network's beginning module (e.g., the first layer).
%At every training iteration, the input data is first passed to the worker containing the first module of the model (e.g., the first layer), before it gets propagated through the full model. 
Then, at each module, the worker $i)$ performs a forward pass of its module and $ii)$ sends the resulting output activation to the worker containing the next module.
% data is propagated sequentially through workers containing each of the network's modules.
% Each worker needs to perform a forward pass of its module and pass the full activation map of the last layer. 
After the last module is activated, the final loss and gradient is calculated before the backward pass is performed, where each worker receives gradient information needed for updating module weights.
% In the backward pass, each worker will receive the gradient information, based on which they will perform gradient updates.

Model-parallelism often suffers from higher communication frequency and volume, in comparison to data parallel methods, due to the significant cost of transmitting network activation maps between workers. %during the forward and backward pass. 
E.g., model parallel training of ResNets requires transmission of the full batch activation map between layers, which is more cumbersome than simply communicating network parameters.
\texttt{ResIST} is more communication efficient compared to common model parallel methods (e.g., GPipe \citep{huang2019gpipe}).
% Note that such model parallel methods could suffer from higher communication frequency and larger communication volumes at each synchronization. \textcolor{red}{why?}
% For ResNets specifically, we need to transmit the full batch activation map between layers, which has significantly larger size compared to the network's parameters. %Thus, we also consider model parall as an inappropriate baseline.

Within this work, we adopt local SGD \citep{use_local_sgd}---a strong variant of data parallel training---as our baseline.
Similar to \texttt{ResIST}, local SGD performs local training iterations on each worker between synchronizations, thus largely decreasing communication frequency and volume. 
To justify this selection, we perform a baseline comparison, which is displayed in Figure \ref{fig:baseline} and further detailed in Sections \ref{S:Implementation} and \ref{S:exp_det}. 
As shown in Figure \ref{fig:baseline}, \texttt{ResIST} is significantly more communication efficient in comparison to data-parallelism (vanilla) and model-parallelism (GPipe), thus making local SGD a more appropriate baseline.
%  as shown in Figure \ref{fig:baseline}. 
% The details of this experiment are in Sections \ref{S:Implementation} and \ref{S:exp_det}. 

% We consider the local SGD as our baseline \citep{use_local_sgd}, as it is a competitive baseline compared to \texttt{ResIST}. 
% Local SGD is a strong variant of data parallelism: similar to \texttt{ResIST}, it also adopts local training iteration on each worker before every synchronization, which largely decrease the communication frequency and total communication volume. 

% To justify this selection, we perform a baseline comparison experiment, as shown in Figure \ref{fig:baseline}. 
% The details of this experiment are in Sections \ref{S:Implementation} and \ref{S:exp_det}. 
% As shown, \texttt{ResIST} is significantly more communication efficient compared to data parallelism (vanilla) and model parallelism (GPipe). % such that they are not appropriate and competitive baseline.

\subsection{Implementation Details}
\label{S:Implementation}

\texttt{ResIST} is implemented in PyTorch \citep{pytorch}, using the NCCL communication package. 
We use basic \texttt{broadcast} and \texttt{reduce} operations for communicating blocks in the third section and \texttt{all reduce} for blocks in other sections. %between different subgroup of sub-ResNets containing the same block. The first, second and forth sections are updated using \texttt{all reduce}. 
We adopt the same communication procedure for the local SGD baseline to ensure fair comparison.  
\emph{The implementation of \texttt{ResIST} is decentralized, meaning that it does not assume a single, central parameter server.}
%\textcolor{magenta}{Tasos: I would like to discuss this part.}

As shown in Figure \ref{decentralized}, during the synchronization and repartition step following local training, each sub-ResNet will directly send each of its locally-updated blocks to the designated new sub-ResNet. % (i.e., the parameters are not sent to an intermediate parameter server).
Each worker will only need sufficient memory to store a single sub-ResNet, thus limiting the memory requirements.
% At all time steps, each worker will only need the memory size of the single subnetwork.
Such a decentralized implementation allows parallel communication between sub-ResNets, which leads to further speedups by preventing any single machine from causing slow-downs due to communication bottlenecks. % in the distributed procedure.
The implementation is easily scalable to eight or more machines, either on nodes with multiple GPUs or across distributed nodes with dedicated GPUs. % as it is possible for the training to be distributed across multiple GPUs on a single node, as well as across several compute nodes.

%\emph{This work is focused on the algorithmic level of distributed ResNet training.}
\texttt{ResIST} reduces the number of bits communicated at each synchronization round and accelerates local training with the use of shallow sub-ResNets.
The authors are well-aware of many highly-optimized versions of data-parallel and synchronous training methodologies \citep{pytorch, tensorflow, sergeev2018horovod}. 
\texttt{ResIST} is fully compatible with these frameworks and can be further accelerated by leveraging highly-optimized distributed communication protocols at the systems level, which we leave as future work. % as it only uses basic general \emph{broadcast} and \emph{reduce} operations. 
%Therefore, in comparison to similarly-implemented versions of synchronous methodologies, \texttt{ResIST} yields a massive acceleration.
% We leave this as future work.
Further, the authors are well-aware of advanced recent decentralized distributed computing techniques as in \citep{koloskova2020unified, nedic2009distributed, assran2020asynchronous, koloskova2019decentralized}; %\citep{koloskova2020unified, nedic2009distributed, johansson2010randomized, lian2017can, tang2018d, assran2020asynchronous, koloskova2019decentralized}; 
our aim is to show the benefits of our approach even on simpler distributed frameworks, and we leave the extension of \texttt{ResIST} to such more advanced protocols as future work.

\subsection{Supplemental Techniques}
\label{S:supp_tech}
% \begin{table}[!htp]
% \centering
% \caption{Test accuracy on CIFAR10 and CIFAR100 for ResNet-101 trained with \texttt{ResIST} and different numbers of local iterations. Each test was performed with four learning rates at different magnitudes, and the best-performing result is reported. $\infty$ local iterations refers to aggregating parameters only once at the end of training (i.e., single-shot averaging).}
% \begin{tabular}{ccccc}
% \toprule
%     & \multicolumn{2}{c}{CIFAR10} & \multicolumn{2}{c}{CIFAR100} \\
%     \midrule
%     & \multicolumn{2}{c}{\# Sub-ResNets} & \multicolumn{2}{c}{\# Sub-ResNets}\\
%     \# Local Iter. & 2 & 4 & 2 & 4\\
%      \midrule
%      50 & 92.23 & 90.60 & 70.79 & 67.95 \\
%      100 & 92.33 & 90.35 & 71.10 & 67.90 \\
%      250 & 92.21 & 90.50 & 70.78 & 67.35\\
%      500 & 92.33 & 90.26 & 71.88 & 67.24\\
%      1000 & 92.31 & 90.38 & 71.55 & 66.92 \\
%      1500 & 92.16 & 89.95 & 71.71 & 66.62 \\
%      2000  & 92.24 & 89.60 & 71.26 & 65.57\\
%      4000 & 91.77 & 85.97 & 69.08 & 59.57 \\
%      6000 & 90.23 & 81.71 & 67.20 & 53.14 \\
%      $\infty$ & 70.65 & 61.86 & 31.34 & 22.10 \\
%  \bottomrule
% \end{tabular}
% \label{local_iter_resist}
% \end{table}

\textbf{Scaling Activations.}
%The \texttt{ResIST} algorithm returns a full-depth model, but conducts training with shallow ResNets.
Similar to \citep{stochdepth}, activations must be scaled appropriately to account for the full depth of the resulting network at test time.
To handle this, the output of residual blocks in the third section of the network (see Figure \ref{model_depict}) is scaled by $1/S$, where $S$ is the number of sub-ResNets.
Such scaling allows the global model to perform well, despite using all layers at test time. 
%\textcolor{blue}{Tasos: is there a citation that we refer to that backs up this scaling? If not, no problem.}

\noindent
\textbf{Subnetwork Depth.}
Within \texttt{ResIST}, sub-ResNets may become too shallow as the number of sub-ResNets increases. % (i.e., assuming blocks are partitioned disjointly and the global model depth is fixed). 
To solve this issue, \texttt{ResIST} enforces a minimum depth requirement, which is satisfied by sharing certain blocks between multiple sub-ResNets.
Through experimental analysis, a minimum of five blocks partitioned to each sub-ResNet was found to perform optimally.
Such a finding motivates our choice of the ResNet101 architecture, as ResNet50 contains only five blocks for partitioning.
%Therefore, ResNet101 is the minimum-size architecture for \texttt{ResIST} in which sub-ResNets are significantly shallower than the global model, allowing for noticeable acceleration.
\texttt{ResIST} is extensible to deeper architectures; see Section \ref{S:deep_arch} in the Appendix.

\noindent
\textbf{Tuning Local Iterations.}
%The number of local iterations is a hyperparameter that must be tuned in \texttt{ResIST} (i.e., see $\ell$ in Algorithm \ref{alg:resist}).
We use a default value of $\ell=50$, as $\ell<50$ did not noticeably improve performance.
In some cases, the performance of \texttt{ResIST} can be improved by tuning $\ell$ (see Figure \ref{fig:local_iter}).
% E.g., \texttt{ResIST} performance boosts by increasing $\ell$ in certain cases (see Fig. \ref{fig:local_iter}).
The optimal $\ell$ setting in \texttt{ResIST} is further explored in Section \ref{S:local_iter} in the Appendix.

\noindent
\textbf{Local SGD Warm-up Phase.}
Directly applying \texttt{ResIST} may harm performance on some large-scale datasets (e.g., ImageNet).
To resolve this, we perform a few epochs with data parallel local SGD before training the model with \texttt{ResIST}.\footnote{Activations of blocks within $3^{\text{rd}}$ section are still scaled during local SGD pre-training to maintain consistency with \texttt{ResIST}.}
By simply pre-training a model for a few epochs with local SGD, the remainder of training is completed using \texttt{ResIST} without a significant performance decrease.

\vspace{-0.1cm}
\section{Theoretical Result}
\vspace{-0.1cm}
We provide proof that the gradient descent direction of combined updates from all sub-ResNets, during distributed local training, is close to the hypothetical gradient descent direction of the whole model as if trained centrally.     

\begin{thm}[Convergence Rate of Gradient Descent for \texttt{ResIST}]\label{thm:resist_gd}
Assume there are $S$ workers, $\ell$ local and $T$ global steps. Assume the depth of the whole ResNet is $H$. Assume for all data indices $i \in [n]$, the data input satisfies $\norm{\vect{x}_i}_2 = 1$, the data output satisfies $\abs{y_i} = O(1)$, and the number of hidden nodes per layer satisfies $m = $ 
\begin{align}
\Omega\bigg(\max\bigg\{\tfrac{n^4 }{\lambda_{\min}^4\left(\mat{K}^{(H)}\right)H^6},\tfrac{n^2 }{\lambda_{\min}^2(\mat{K}^{(H)})H^2},\tfrac{n}{\delta}, \tfrac{n^2\log\left(\tfrac{Hn}{\delta}\right)}{\lambda_{\min}^2\left(\mat{K}^{(H)}\right)} \bigg\}\bigg).\nonumber
\end{align}
Set the step size $\eta = O\bigg(\tfrac{\lambdamin H^2 }{n^2 \ell^2 S}\bigg)$ in gradient descent in local training iteration, and follow the procedure as in Algorithm 1. 
Let the squared-norm loss be $L(\theta(t)) := \tfrac{1}{2} \|\vect{y} - f(\theta(t))\|_2^2$, per $t$ global synchronization round, $t=1,2,\ldots T$; here, $\vect{y}$ corresponds to the data ``labels'', and $\theta(t)$ and $f(\theta(t))$ represent the parameters and the output of the whole ResNet, respectively, after $t$-global rounds of \texttt{ResIST}.
Here, $\theta$ includes weights $\mat{W}^{(h)}$ at depth $h$ and the last layer's weights $\mat{a}$.
Then, with probability at least $1-\delta$ over the random initialization, we have:
\begin{align*}
L(\params(t))\le \left(1-\tfrac{\eta \ell \lambdamin}{2}\right)^{t} \cdot L(\params(0)).
\end{align*}
\vspace{-0.8cm}
\end{thm}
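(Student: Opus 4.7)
The plan is to adapt the overparameterized NTK convergence framework for deep ResNets to the \texttt{ResIST} setting, where, in addition to the standard issues of NTK stability under gradient descent, one must handle three new difficulties: random partitioning of blocks into $S$ sub-ResNets, $\ell$ local gradient descent steps per worker, and averaging-based aggregation of block parameters. The overall strategy is to establish a per-global-round contraction
\begin{align*}
L(\params(t+1)) \le \bigl(1 - \tfrac{\eta\ell\lambdamin}{2}\bigr)\,L(\params(t))
\end{align*}
by induction on $t$ and then chain it over the $T$ global rounds.

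First, I would set up the NTK baseline at initialization. Under the Gaussian init, standard ResNet NTK concentration arguments---precisely what drive the $m = \Omega\bigl(n^2\log(Hn/\delta)/\lambdamin^2\bigr)$ term in the overparameterization assumption---give $\lambda_{\min}\bigl(\mat{K}^{(H)}(0)\bigr) \ge \lambdamin/2$ with probability at least $1-\delta$. I would then establish a lazy-training stability region: for any $\theta$ within a radius $R = \tilde O(\sqrt{L(0)}/\lambdamin)$ of the initialization, the empirical NTK still satisfies $\lambda_{\min}\bigl(\mat{K}^{(H)}(\theta)\bigr) \ge \lambdamin/4$. The allowable radius $R$ is what in turn forces the remaining overparameterization bounds, in particular $m = \Omega\bigl(n^4/(\lambdamin^4 H^6)\bigr)$ and $m = \Omega\bigl(n^2/(\lambdamin^2 H^2)\bigr)$. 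The inductive hypothesis is that, up to round $t$, $\params(t)$ lies in this stability region and the loss has been contracting geometrically.

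Next, I would analyze a single global round starting from $\params(t)$. The block-partitioning into $S$ sub-ResNets combined with the $1/S$ scaling of third-section residual branches is designed so that, in the lazy regime where residual branches are near-zero, each sub-ResNet's forward pass agrees with that of the full network to leading order, and the sub-ResNet gradients $\nabla L_s$ aggregated via the round-robin scheme reconstruct $\nabla L$ up to a small error. I would make this precise through a Taylor expansion of the forward pass of each sub-ResNet around $\params(t)$. For the inner $\ell$ local GD iterations, a Gr\"onwall-style drift bound shows each sub-ResNet's cumulative update equals $-\eta\ell\nabla L_s(\params(t)) + E_s$, with $\|E_s\|$ controlled by a polynomial in $n, H$ times $\eta^2\ell^2$. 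The step-size choice $\eta = O\bigl(\lambdamin H^2/(n^2\ell^2 S)\bigr)$ is tuned precisely so that the $\ell^2$ factor absorbs the local drift and the $S$ factor absorbs cross-worker interference.

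Combining these, the per-round update satisfies $\params(t+1) - \params(t) = -\eta\ell\nabla L(\params(t)) + E(t)$ with $\|E(t)\|$ bounded by a small constant times $\eta\ell\sqrt{L(\params(t))}$. A Taylor expansion of $L$ around $\params(t)$, together with the NTK lower bound $\|\nabla L(\params(t))\|^2 \ge 2\lambdamin L(\params(t))$, then gives the claimed contraction. Closing the induction requires verifying that $\params(t+1)$ remains within radius $R$, which follows by summing per-round step sizes and using the geometric decay of the loss to bound the total parameter drift by $R$. The main obstacle I anticipate is the middle step: showing that the aggregate of $S$ sub-ResNet updates, each obtained from $\ell$ local GD iterations on a \emph{different and shallower} architecture, is still close to one $\eta\ell$-sized gradient step on the full model. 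This requires carefully tracking $(i)$ the first-order mismatch between sub-ResNet and full-model forward passes (where the $1/S$ scaling and round-robin partitioning are essential), $(ii)$ the second-order local drift across the $\ell$ inner iterations (which is what forces the $\ell^2$ factor in the denominator of $\eta$), and $(iii)$ the uniform stability of the NTK across all workers' trajectories so that the linearization underlying the NTK lower bound remains valid throughout each round.
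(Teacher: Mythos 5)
Your proposal follows essentially the same route as the paper's proof: an induction over global synchronization rounds within the Du-et-al.\ NTK/lazy-training framework, where the key step is to compare the aggregated sum of $\ell$ local sub-ResNet updates against a hypothetical centralized full-model gradient step of size $\eta\ell$ (the paper's $\vect{I}'_1(t)$ versus $\vect{I}_1(t)$ decomposition), bound the resulting drift and second-order terms as $O(\eta^2)\,\norm{\vect{y}-\vect{\hat{u}}(t)}_2^2$ so the step-size choice absorbs them, and close the induction by showing all weights stay within the stability radius of initialization. The approach and the anticipated obstacles you identify match the paper's argument, so no further comparison is needed.
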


First, some definitions; more details in the Appendix.
Similar to \citep{du2019gradient}, $\mat{K}^{(H)} \in \mathbb{R}^{n \times n}$ 
is a fixed matrix that depends on the input data, neural network architecture and the activation but does not depend on neural network parameters.
Next, we present our method of proving this global result on \texttt{ResIST}. 
Our proof technique is inspired by \citep{du2019gradient}: 
Let the prediction of the network at some $k$-th iteration be $\vect{u}(k)=f(\theta(k))$.\footnote{We use $k$ to abstract the notion of an iteration in \citep{du2019gradient}; in our case, a different analysis includes two different iteration indices, $\ell$ and $t$.}
We formulate the training dynamics as: \vspace{-0.2cm}
\begin{align*}
\vect{y}-\vect{u}(k+1) = (\vect{I}-\eta \mat{G}(k))(\vect{y}-\vect{u}(k)), 
\end{align*}
where $\mat{G}_{ij}(k) = \inner{\frac{\partial u_i(k)}{\partial \params(k)},\frac{\partial u_j(k)}{\partial \params(k)}} = $ \vspace{-0.2cm}
\begin{align*}
&\sum_{h=1}^{H}\inner{\tfrac{\partial u_i(k)}{\partial \mat{W}^{(h)}(k)},\tfrac{\partial u_j(k)}{\partial \mat{W}^{(h)}(k)}} + \inner{\tfrac{\partial u_i(k)}{\partial \vect{a}(k)},\tfrac{\partial u_j(k)}{\partial \vect{a}(k)}} \\
\triangleq & \sum_{h=1}^{H+1}\mat{G}^{(h)}_{ij}(k).
\end{align*}
The proof in \citep{du2019gradient} obeys the following ideas: when the width $m$ of deep ResNet is sufficiently large, $\mat{G}^{(H)}(k)$ will be very close to $\mat{G}^{(H)}(0)$, and all of $\mat{G}^{(H)}(k)$'s will be close to the fixed population gram matrix $\mat{K}^{(H)}$. The exact definition of $\mat{K}^{(H)}$ for ResNet can be found in Section 6 of \cite{du2019gradient}. Further, $\lambda_{\min} (\mat{G}^{(H)}(0))$ is larger than 0. Thus, by standard matrix perturbation analysis, it is shown that $\lambda_{\min}(\mat{G}^{(H)}(0))$ is also strictly positive, which will result in linear convergence of deep ResNet.

Here, we further generalize such technique to distributed \texttt{ResIST} with layer dropout.
The novelty of our proof is that we only conduct gradient descent on sub-ResNets assigned to each local worker. 
\emph{There is no training iteration with the whole model: this includes the generation of random masks that ``champion'' parts of the whole ResNet model per worker.} Handling such constructions is the gist of this proof:
We carefully analyze the convergence of each subnetwork during local training iterations $\ell$, and prove the global convergence of the combined whole model throughout synchronization rounds $t$. 
The full proof is provided in Section \ref{sec:proof} in the Appendix.

\vspace{-0.1cm}
\section{Related Work} \label{related_work}
\vspace{-0.1cm}
% % ResNet models and related literature, especially preactivation resnet!  and shallow resnet approximate deep resnet paper
% Convolutional neural networks (CNNs) were popularized for CV tasks, starting with the proposal of AlexNet \citep{alexnet}, prior to the proposal of the ResNet architecture \citep{resnet}. 
% ResNets quickly became a standard CNN architecture and are widely used . 

Following ResNet, 
%\citep{fasterrcnn, densenets, maskrcnn, retinanet}, 
most novel architectures continued to leverage residual connections, %\citep{mobilenetv2, inceptionresnet},
which became standard in most architectures. %\citep{transformer, bert, rezero}. 
The ResNet architecture has been further modified. %\citep{preactres, resnext, wideresnet}. 
\emph{This work focuses on the pre-activation ResNet variant \citep{preactres}, as it achieves high performance and is well-suited to layer-wise decomposition.}

% possibly go over some other methods of making training faster
%The available literature on distributed training techniques for deep learning is vast. 
The focus of this study is on synchronous methods of distributed optimization, such as data parallel training, parallel SGD \citep{parallelsgd}, or local SGD \citep{localsgdconverge}.
Our methodology is also a variant of model-parallel training \citep{parallelism_survey, lamp,  nonlinear_multigrid_layer_parallel, layer_parallel_resnet, xpipe, multi_gpu_model_parallel}.
%\footnote{\textcolor{blue}{Tasos: Again, we should refer to some model-based approaches. E.g., the description of the grant i shared has some links to model parallel approaches.}}
%\textcolor{magenta}{Maybe we can mention thee case of model parallel.}
Many studies have explored possible techniques of synchronous, distributed optimization, yielding a wide number of viable variants \citep{use_local_sgd}. %, as well as techniques for MLP decomposition \citep{IST}. %\footnote{Our method applies on residual blocks, not an obvious extension of \citep{IST}, and $i)$ can be used on larger architectures; $ii)$ uses a layer-wise decomposition of the global model; $iii)$ selectively partitions network parameters, and $iv)$ allows parameter overlap, so that subnetworks never become too shallow.}

To reduce communication costs in the distributed setting, both quantization \citep{commeff-sgd, comm-comp} and sparsification \citep{sparse-comm, linear-speed-quant, grad-sparse} methods have been explored.
%These methods aim to either reduce the precision of communicated updates or sparsify the updates at certain coordinates.
%\textcolor{blue}{Tasos: we should mention the variants where only uplink is compressed (gradient updates), or both uplink and downlink are compressed (both model sharing to workers and gradient updates are compressed.}
%Both approaches have been shown to successfully reduce communication overhead.
Similarly, other studies have achieved speedups through the use of low-precision arithmetic during training \citep{4minimagenet}.
%\textcolor{blue}{Tasos: we should close this paragraph by stating again that this line of work is orthogonal to our proposal, and can be combined with our ideas.}
However, \emph{this line of work is orthogonal to our proposal and can be easily combined with the provided methodology; see Section \ref{S:quant} in the Appendix.}  

% \begin{table*}[!htp]
% \centering
% \caption{Performance of baseline models and models trained with \texttt{ResIST}. Results for image classification datasets are in terms of test accuracy, while object detection performance is reported as test loss.} %\vspace{0.1cm}
% \begin{tabular}{cccccc}
% \toprule
%     & \# Machines & CIFAR10 & CIFAR100 & SVHN & Pascal VOC\\ \midrule
%     Local SGD & 2 & 92.36\% $\pm$ 0.01 & 70.67\% $\pm$ 0.03 & 96.25\% $\pm$ 0.05 & 6.15 $\pm$ 0.03 \\
%     & 4 & 92.90\% $\pm$ 0.06 & 71.51\% $\pm$ 0.04 & 96.36\% $\pm$ 0.01 &  6.22 $\pm$ 0.06 \\
%     & 8 & 92.00\% $\pm$ 0.07 & 69.64\% $\pm$ 0.05 & 96.24\% $\pm$ 0.03 &  - \\
%     \midrule
%     \texttt{ResIST} & 2 & 91.95\% $\pm$ 0.32 & 70.06\% $\pm$ 0.51 & 96.26\% $\pm$ 0.04 & 5.99 $\pm$ 0.01 \\
%     & 4 & 92.35\% $\pm$ 0.22 & 71.30\% $\pm$ 0.20 & 96.26\% $\pm$ 0.01 & 6.69 $\pm$ 0.17 \\
%     & 8 & 91.45\% $\pm$ 0.30 & 70.26\% $\pm$ 0.21 & 95.87\% $\pm$ 0.01 &  - \\
%  \bottomrule
% \end{tabular}
% \label{cifar10_results}
% %\vspace{-0.3cm}
% \end{table*}

Large batch training is used to amortize communication and increase throughput for distributed training \citep{1hrimagenet, 76minbert}.
%applied large batch training methods to significantly decrease wall clock time to convergence for large image and language models. 
The properties of large batch training have since been studied extensively \citep{15minimagenet, 32Kbatch, you2018imagenet}. 
Large batches alter training dynamics, warranting the use of complex heuristics to maintain comparable performance \citep{32Kbatch}.
Here, \emph{we do not focus on the extension of \texttt{ResIST} to the large-batch training domain.
Rather, we consider this as future work. }
ResNet robustness to layer removal was explored in \citep{stochdepth}, while \citep{shallowensemble} showed that ensembles of shallow ResNets can yield high performance.  
\citep{stochdepth} uses shallow networks during training and scales activations so that all layers may be used for inference. 
However, our approach is distinct in numerous ways.
Primarily, \emph{our method partitions blocks in a stochastic, round-robin fashion, which explicitly prevents the exclusion of layers from training rounds and yields reduced subnetwork depth compared to \citep{stochdepth}.}
%Primarily, \citep{stochdepth} assigns survival probabilities to each layer within the ResNet and samples from a Bernoulli distribution to determine which layers should be removed. 
%In contrast, the proposed approach only partitions certain residual blocks to subnetworks (i.e., all others are shared) and allows some residual blocks to be partitioned simultaneously to multiple subnetworks. 
Inspired by \citep{filterprune}, we also selectively partition residual blocks that are least sensitive to pruning, allowing other layers (i.e., ~30\% of total layers) to be shared between subnetworks.
Unlike \citep{stochdepth}, we avoid partitioning strided layers, which are sensitive to pruning \citep{filterprune}.
Furthermore, our methodology, instead of proposing a form of regularization, focuses on utilizing independent training of shallow sub-ResNets for efficient, distributed training. % - an idea not explored by previous work.

% neural ODE stuff; MF resnet formulation
Our approach also relates to neural ODE literature.
This research connects ResNets as a discrete approximation to a continuous transformation from input to output \citep{beyondfinite}. 
The neural ODE perspective has been studied both empirically \citep{neuralode, augmentedneuralode, beyondfinite} and theoretically \citep{mfresnet, deeplimit}. 
\emph{This provides justification to our approach, as removing ResNet layers can be viewed as approximating the same transformation with a coarser discretization.}

\section{Experimental Details}
\label{S:exp_det}
%\vspace{-0.2cm}
%\texttt{ResIST} is tested empirically across multiple computer vision tasks and datasets.
Hyperparameters are tuned using a holdout validation set and results are obtained using optimal hyperparameters from the validation set. 
All experiments are repeated for three trials, and the average performance is presented. 
We adopt local SGD as our baseline for synchronous, distributed training methods. %, as it is argued to be superior in comparison to vanilla data-parallel training \citep{use_local_sgd}; see Sec. \ref{S:Implementation} for more details.
%We evaluate the proposed training methodology based on model performance and speed.
\emph{In all cases, \texttt{ResIST} achieves comparable performance to local SGD, while lowering the total wall-clock time of training.}
We use AWS p3.8xlarge instances for experiments with two or four machines\footnote{In Section \ref{S:deep_arch}-Appendix and for the Pacal VOC experiment with two machines, we use a cluster with eight V100 GPUs.} and p3.16xlarge instances for experiments with eight machines. 
We use each GPU as a single worker that hosts a different sub-ResNet.
% For Sec. \ref{S:deep_arch}, we are using a local cluster with eight 32GB V100.

\noindent
\textbf{Small-Scale Image Classification.}
Models are trained with \texttt{ResIST} on CIFAR10 and CIFAR100 for image classification.
We adopt standard data augmentation techniques during training and testing \citep{resnet}.
%\textcolor{magenta}{For small-scale datasets, w
We adopt a batch size of 128 for each worker.
%, while for ImageNet a batch size of 32.\footnote{For ImageNet, images are also randomly-cropped to a size of $224\times224$ during training, and center-cropped to a size of $256\times256$ during evaluation.}}
Training is conducted for 80 epochs for experiments with two machines and 160 epochs for experiments with four or eight machines.
% For all datasets, training is conducted for 80 epochs on 2-GPU model experiments and 160 epochs on 4- and 8-GPU model experiments.
The recorded performance reflects the best test accuracy achieved throughout training, averaged across three trials.
The total wall-clock training time is also reported for each experiment.
%Tests are performed using two, four, and eight sub-ResNets for \texttt{ResIST} and localSGD.

\noindent
\textbf{ImageNet Classification.}
Models are trained with \texttt{ResIST} on the 1,000-class ILSVRC2012 image classification dataset \citep{imagenet}.
We adopt standard data augmentation techniques during training and testing, and use a batch size of 256 for each worker  \citep{resnet}.
Training is conducted for 90 epochs.
We initialize the learning rate to 0.1 and decrease it $10\times$ at epochs 30 and 60.
% The learning rate is initialized as 0.1 and decreased by $10\times$ at epochs 30, 60 and 90.
% Because ImageNet is a largeAs mentioned in Sec. \ref{S:supp_tech}, because Imagenet is a large scale and relative difficult dataset,
For all experiments, we set $\ell=15$, adopt a minimum depth of 10 blocks for each sub-ResNet, and warm-up pre-training using local SGD. %(3200 synchronizations of pretraining for 2-GPU model and 2000 synchronizations of pretraining for 4-GPU model).
For both \texttt{ResIST} and baseline experiments, we utilize momentum restarts and aggregate batch statistics every 1300 synchronization rounds.
% We also use momentum restart and synchronize batch statistic every 1300 synchronizations for both \texttt{ResIST} and baseline.   

\iffalse
\begin{table*}[!htp]
\centering
\caption{Performance of baseline models and models trained with \texttt{ResIST}. Results for image classification datasets are in terms of test accuracy, while object detection performance is reported as test loss.} %\vspace{0.1cm}
\begin{tabular}{cccccc}
\toprule
    & \# Machines & CIFAR10 & CIFAR100 & SVHN & Pascal VOC\\ \midrule
    Local SGD & 2 & 92.36\% $\pm$ 0.01 & 70.67\% $\pm$ 0.03 & 96.25\% $\pm$ 0.05 & 6.15 $\pm$ 0.03 \\
    & 4 & 91.66\% $\pm$ 0.12 & 68.39\% $\pm$ 0.13 & 96.36\% $\pm$ 0.01 &  6.28 \\
    \midrule
    \texttt{ResIST} & 2 & 91.95\% $\pm$ 0.32 & 70.06\% $\pm$ 0.51 & 96.26\% $\pm$ 0.04 & 5.99 $\pm$ 0.01 \\
    & 4 & 91.78\% $\pm$ 0.82 & 71.28\% $\pm$ 0.24 & 96.18\% $\pm$ 0.05 &  6.69 $\pm$ 0.17 \\
    & 8 & 91.29\% $\pm$ 0.12 & 70.18\% $\pm$ 0.21 & 95.87\% $\pm$ 0.01 &  - \\
 \bottomrule
\end{tabular}
\label{cifar10_results}
%\vspace{-0.5cm}
\end{table*}
\fi

\noindent
\textbf{Object Detection.}
\texttt{ResIST} is tested in the object detection domain on the Pascal VOC dataset \citep{pascalvoc}. %to demonstrate the ability of \texttt{ResIST} to train ResNets on more complex tasks.
Our model, inspired by the Yolo-v2 object detection model \citep{yolov2}, consists of a ResNet101 backbone followed by a detection layer (i.e., a $1 \times 1$ convolution that outputs anchor box predictions).
The ResNet backbone of this model is similar to the classification model described in Sec. \ref{model_arch}, but without the pre-activation structure.
%However, we do not use a pre-activation architecture because the ResNet weights must be pre-trained on \textcolor{magenta}{ImageNet} \citep{imagenet} for the model to converge, and pre-trained weights for pre-activation ResNets are not yet available in PyTorch \citep{pytorch}. 
%\footnote{Because pre-activation architectures yield significant performance improvements for image classification (see Sec. \ref{model_arch}), we hypothesize that the ability to use pre-activation ResNet architectures for object detection would similarly improve the performance of \texttt{ResIST}.}
The model is trained for 100 epochs with an image dimension of $448\times448$ and batch size of 10.
No data augmentation techniques are used.
The learning rate is increased from $10^{-5}$ to $10^{-4}$ over the first 30 epochs, and decreased by $10\times$ at epochs 60 and 90.
Both Pascal VOC 2007 and 2012 training sets are used during training, and performance is evaluated on the Pascal VOC 2007 test set.
We report the wall-clock training time and the best loss achieved on the test set throughout training.
Experiments are conducted on two and four machines using both local SGD and \texttt{ResIST}.

\begin{table}[!htp]
\centering
%\vspace{-0.4cm}
\caption{Test accuracy of baseline LocalSGD versus \texttt{ResIST} on small-scale image classification datasets.} %\vspace{0.1cm}
\setlength{\tabcolsep}{.5\tabcolsep}
\begin{small}
\begin{tabular}{cccc}
\toprule
    & \# Machines & CIFAR10 & CIFAR100 \\ \midrule
    Local SGD & 2 & 92.36\% $\pm$ 0.01 & 70.67\% $\pm$ 0.03  \\
    & 4 & 92.90\% $\pm$ 0.06 & 71.51\% $\pm$ 0.04  \\
    & 8 & 92.00\% $\pm$ 0.07 & 69.64\% $\pm$ 0.05  \\
    \midrule
    \texttt{ResIST} & 2 & 91.95\% $\pm$ 0.32 & 70.06\% $\pm$ 0.51  \\
    & 4 & 92.35\% $\pm$ 0.22 & 71.30\% $\pm$ 0.20  \\
    & 8 & 91.45\% $\pm$ 0.30 & 70.26\% $\pm$ 0.21  \\
 \bottomrule
\end{tabular}
\end{small}
\label{cifar10_results}
\vspace{-0.3cm}
\end{table}

%\section{Ablations and Analysis} \label{ablation}
\vspace{-0.1cm}
\section{Results}
\vspace{-0.1cm}
\subsection{Small-Scale Image Classification}

\begin{table*}[!htp]
\centering
\caption{Performance of baseline models and models trained with \texttt{ResIST} on 1K Imagenet \citep{recht2019imagenet}. MF stands for test set ``MatchedFrequency'' and was sampled to match the MTurk selection frequency distribution of the original ImageNet validation set; T-0.7 stands for test set ``Threshold0.7'' and was built by sampling ten images for each class among the candidates with selection frequency at least 0.7; TI stands for test set ``TopImages'' and contains the ten images with highest selection frequency for each class.} \vspace{0.1cm}
\setlength{\tabcolsep}{.5\tabcolsep}
\begin{small}
\begin{tabular}{cccccccccc}
\toprule
    & \multirow{2}{*}{\# Machines} & \multirow{2}{*}{Imagenet} &  \multicolumn{3}{c}{Imagenet V2 Test Set}  & \multirow{2}{*}{Training Time} & \multirow{2}{*}{Speedup} & \multirow{2}{*}{Communication} & \multirow{2}{*}{Cost Ratio}\\ 
    & & & MF & T-0.7 & TI & & & & \\ \midrule
    Local SGD & 2 & 73.32\% & 60.72\% & 69.47\% & 75.48\% & 48.61 hours & - & 7546.80 GB & -\\
    & 4 & 72.66\% & 59.88\% & 68.34\% & 74.27\% & 29.29 hours & - & 7546.80 GB & -  \\
    \midrule
    \texttt{ResIST} & 2 & 71.60\% & 58.92\% & 67.51\% & 73.56\% & 36.79 hours &  \textbf{1.32$\times$} & 5831.2 GB & \textbf{1.29$\times$}\\
    & 4 & 70.74\% &57.56\% & 66.46\% & 72.65\% & 22.37 hours &  \textbf{1.31$\times$} & 6007.6 GB & \textbf{1.26$\times$}\\
 \bottomrule
\end{tabular}
\end{small}
\label{imagenet_results}
%\vspace{-0.3cm}
\end{table*}

%In this section, we present the results of experiments conducted with both \texttt{ResIST} and local SGD.
%For all experiments, model performance is measured in terms of test accuracy and total training time.
%\texttt{ResIST} is shown to significantly accelerate training in comparison to local SGD without any degredation in model accuracy.

\noindent
\textbf{Accuracy.}
The test accuracy on small-scale image classification datasets is listed in Table \ref{cifar10_results}.
\emph{\texttt{ResIST} achieves comparable test accuracy in all cases where the same number of machines are used.}
\texttt{ResIST} outperforms localSGD on CIFAR100 experiments with eight machines.
% \texttt{ResIST} does outperform local SGD for eight machines on CIFAR100 dataset and slightly outperform four machines on the SVHN dataset, but \texttt{ResIST} does not yield consistent improvements in performance.
The performance of \texttt{ResIST} and local SGD are strikingly similar in terms of test accuracy.
In fact, the performance gap between the two method does not exceed 1\% in any experimental setting.
% In fact, the performance gap between \texttt{ResIST} and local SGD never exceeds 1\% in any of the experimental settings that were tested for image classification.
Furthermore, \texttt{ResIST} performance remains stable as the number of sub-ResNets increases, allowing greater acceleration to be achieved without degraded performance (e.g., see CIFAR100 results in Table \ref{cifar10_results}).
Generally, using four sub-ResNets yields the best performance with \texttt{ResIST}.

% As shown in Table \ref{cifar10_results}, the performance of \texttt{ResIST} remains stable as the number of sub-ResNets is increased.
% For example, on the CIFAR100 dataset, the performance of eight sub-ResNets exceed the performance of two sub-ResNets for \texttt{ResIST}.
% Similarly for other datasets, test accuracy did not degrade significantly as the number of sub-ResNets was increased, thus allowing greater acceleration to be achieved without noticeable detriment to performance.
% Generally, the highest test accuracy was achieved with the use of four sub-ResNets for \texttt{ResIST}. 

\noindent
\textbf{Efficiency.}
In addition to achieving comparable test accuracy, \texttt{ResIST} significantly accelerates training.
This acceleration is due to $i)$ fewer parameters being communicated between machines and $ii)$ locally-trained sub-ResNets being shallower than the global model.
Wall-clock training times for four and eight machine experiments are presented in Tables \ref{small_dataset_results}. %\footnote{We report results specifically for 4 machines because this setting achieves the best performance, in terms of both speed and accuracy, in most cases for \texttt{ResIST}.}
\texttt{ResIST} provides $3.58$ to $3.81\times$ speedup in comparison to local SGD.
For eight machine experiments, a significant speedup over four machine experiments is not observed due to the minimum depth requirement and a reduction in the number of local iterations to improve training stability.
We conjecture that for cases with higher communication cost at each synchronization and a similar number of synchronizations, eight worker \texttt{ResIST} could lead to more significant speedups in comparison to the four worker case.

\begin{table}[!htp]
\centering
\vspace{-0.4cm}
\caption{Training time in seconds of baseline models and models trained with \texttt{ResIST} on small-scale image classification datasets.} \vspace{0.1cm}
\setlength{\tabcolsep}{.5\tabcolsep}
\begin{small}
\begin{tabular}{ccccccc}
\toprule
    & \# Machines & Dataset & Total Time & Speedup \\ \midrule
    Local SGD & 4 & C10 & 5486 $\pm$ 7.05 & -\\
    & & C100 & 5528 $\pm$ 65.90 & - \\
    %& & VOC & 16840 $\pm$ 0.11 & - \\
     & 8 & C10 & 10072 $\pm$ 5.12 & -\\
      &  & C100 & 10058 $\pm$ 8.71 & -\\
    \midrule
        \texttt{ResIST} & 4 & C10 & 1532 $\pm$ 0.83 & \textbf{3.60$\times$ }\\
    & & C100 &  1545 $\pm$ 1.27 & \textbf{3.58$\times$} \\
    %& & VOC & 11264 $\pm$ 49.38 & \textbf{$\times$ 1.49} \\
     & 8 & C10 & 2671 $\pm$ 3.25 & \textbf{3.77$\times$}\\
      &  & C100 & 2639 $\pm$ 3.89 & \textbf{3.81$\times$}\\
 \bottomrule
\end{tabular}
\end{small}
\label{small_dataset_results}
%\vspace{-0.3cm}
\end{table}

\begin{figure}[h]
    %\vspace{-0.7cm}
    \centering
    \includegraphics[width=1\linewidth]{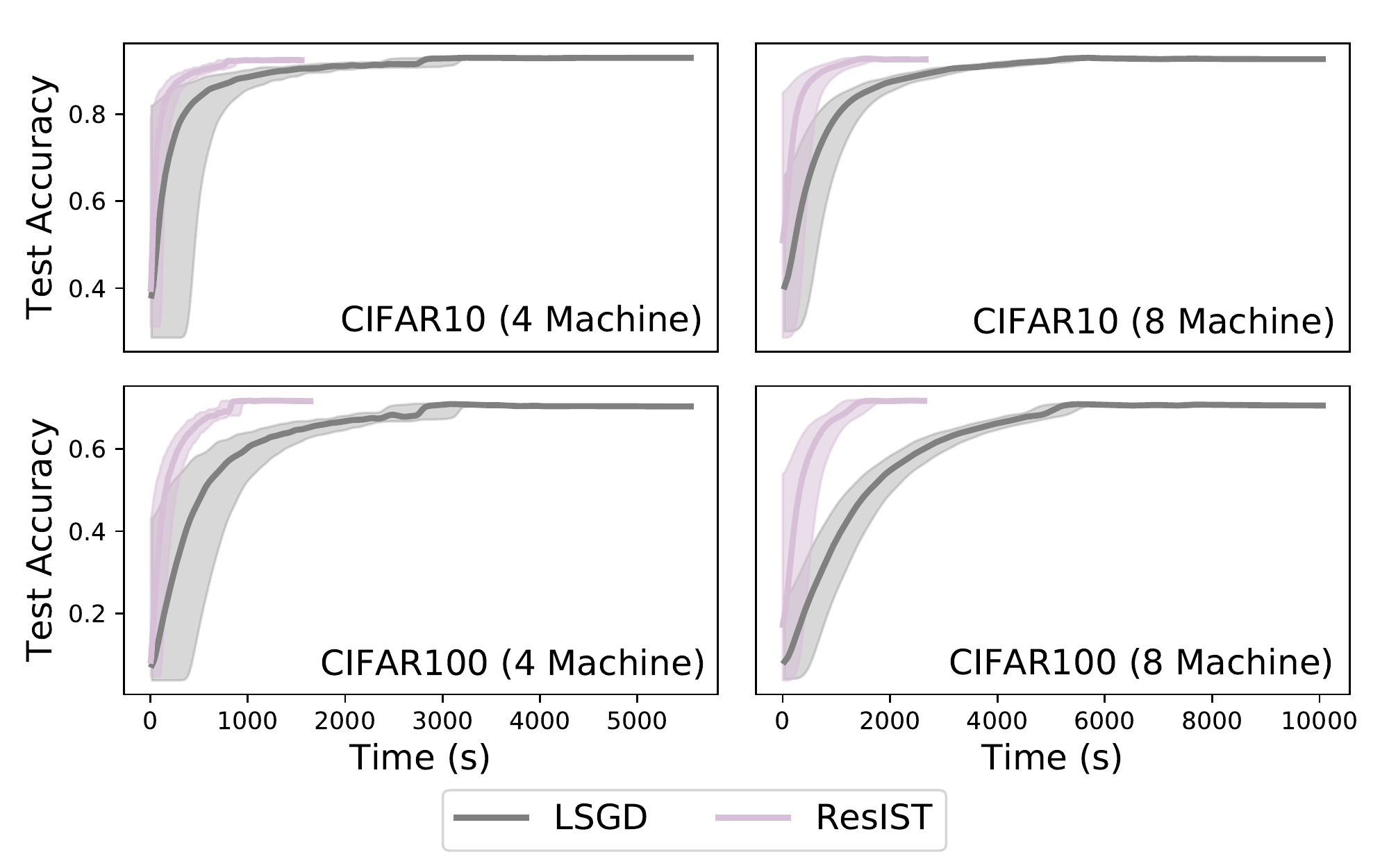}
    \vspace{-0.6cm}
    \caption{%Test accuracy of ResIST and local SGD (LSGD) on CIFAR10 and CIFAR100. 
    Both methodologies complete 160 epochs of training. Accuracy values are smoothed using a 1-D gaussian filter, and shaded regions represent deviations in accuracy.}
    \label{fig:cifar10_times}
\end{figure}

A visualization of the speedup provided by \texttt{ResIST} on the CIFAR10 and CIFAR100 datasets is illustrated in Fig. \ref{fig:cifar10_times}.
% An acceleration over local SGD is achieved using \texttt{ResIST} with both 2- (not shown graphically) and 4-GPU sub-ResNets
%\textcolor{magenta}{Tasos: the figures show only one case - probably the 8-GPU case. Can we have the 2 and 4 GPU cases also? Not sure whether these should be in different figures.}
Models trained with \texttt{ResIST} match the final accuracy of those trained with local SGD.
Furthermore, increasing the number of sub-ResNets yields an improved speedup for \texttt{ResIST} in comparison to localSGD. 
%\emph{In comparison, using four machines for local SGD actually slowed down training in comparison to using two machines (i.e., this slow down is caused by increased communication costs of local SGD in the 4-GPU setting).}
It is clear that the communication-efficiency of \texttt{ResIST} allows the benefit of more devices to be better realized in the distributed setting. 

%\vspace{-0.2cm}
\subsection{Large-Scale Image Classification}
%\vspace{-0.2cm}

\noindent
\textbf{Accuracy.} The test accuracy of models trained with both \texttt{ResIST} and local SGD for different numbers of machines on the ImageNet dataset is listed in Table \ref{imagenet_results}.
As can be seen, \emph{\texttt{ResIST} achieves comparable test accuracy ($<2\%$ difference) to local SGD in all cases.}
Additionally, as shown in \citep{recht2019imagenet}, many current image classification models overfit to the ImageNet test set and cannot generalize well to new data. 
Thus, models trained with both local SGD and \texttt{ResIST} are also evaluated on three different Imagenet V2 testing sets \citep{recht2019imagenet}.
As shown in Table \ref{imagenet_results}, \texttt{ResIST} consistently achieves comparable test accuracy in comparison to local SGD on these supplemental test sets. 

%& Imagenet V2 MatchedFrequency &Imagenet V2 Threshold0.7 & Imagenet V2 TopImages

\noindent
\textbf{Efficiency.} 
As shown in Tables \ref{imagenet_results} and \ref{reach_imagenet_speed}, \texttt{ResIST} significantly accelerates the ImageNet training process.
However, due to the use of fewer local iterations and the local SGD warm-up phase, the speedup provided by \texttt{ResIST} is smaller relative to experiments on small-scale datasets.
In Table \ref{imagenet_results}, \texttt{ResIST} can reduce the total communication volume during training, which is an important feature in the implementation of distributed systems with high computational costs.

\begin{table}[!htp]
\centering
\begin{small}
\caption{Total training time on Imagenet (in hours) of models trained with both local SGD and \texttt{ResIST} using two and four machines to reach a fixed test accuracy.} \vspace{0.1cm}
\setlength{\tabcolsep}{.5\tabcolsep}
\begin{tabular}{ccccc}
\toprule
    \# Machines & Target Accuracy & Local SGD & \texttt{ResIST} & Speedup \\ \midrule
    2  & 71.00 & 33.26 & 26.63 & \textbf{1.25$\times$ }\\
    4  & 70.70 & 18.50 & 18.12 & \textbf{1.02$\times$ }\\

 \bottomrule
% \toprule
%     Dataset & Local SGD & \texttt{ResIST} & Speedup \\ \midrule
%     CIFAR10 & 3735.23s $\pm$ 7.05 & \textbf{952.05s $\pm$ 0.83} & \\
%     CIFAR100 & 3649.98s $\pm$ 65.90 & \textbf{948.94s $\pm$ 1.27} & \\
%     SVHN & 5551.84s $\pm$ 9.27 & \textbf{1374.53s $\pm$ 0.74} & \\
%     %ImageNet & - & -\\
%     PascalVOC & 16840.42s & \textbf{11264.30s $\pm$ 49.38} & \\
%  \bottomrule
\end{tabular}
\label{reach_imagenet_speed}
\end{small}
\vspace{-0.3cm}
\end{table}

\subsection{Object Detection}
\textbf{Loss.} The test loss of models trained with both \texttt{ResIST} and local SGD for different numbers of machines on the Pascal VOC object detection dataset is listed in Table \ref{obj_detection}.
Notably, \texttt{ResIST} achieves a lower test loss in comparison to local SGD for the experiment with two machines.
Although the test loss achieved by \texttt{ResIST} is slightly worse than local SGD in the four machine case, the performance is comparable.
Namely, the difference in test loss achieved by local SGD and \texttt{ResIST} never exceeds a value of one.
% Notably, compared with local SGD, \texttt{ResIST} achieves lower test loss in experiment with two machines and comparable test loss in experiment with four machines. 

\noindent
\textbf{Efficiency.} In addition to achieving comparable or improved test loss in comparison to local SGD, \texttt{ResIST} also provides a significant training acceleration on the PascalVOC dataset.
In particular, models trained with \texttt{ResIST} achieve up to a $1.64\times$ acceleration in comparison to object detection models trained with localSGD. %\texttt{ResIST} achieves comparable and even better test loss, it also accelerates training as shown in Table \ref{obj_detection} up to \textbf{$\times$ 1.64}. 

\begin{table}[!htp]
\centering
\vspace{-0cm}
\begin{small}
\caption{Test loss and total training time in seconds on Pascal VOC for models trained with both local SGD and \texttt{ResIST} using two and four machines. Training time in seconds.} \vspace{0.1cm}
\setlength{\tabcolsep}{.5\tabcolsep}
\begin{tabular}{ccccc}
\toprule
    & \# Machines & Test Loss & Train Time & Speedup \\ \midrule
    Local SGD & 2 & 6.15 $\pm$ 0.03 & 39621 $\pm$ 9.12 & -\\
    & 4 & 6.22 $\pm$ 0.06 & 16840 $\pm$ 0.11 & -\\
    \midrule
    \texttt{ResIST} & 2 & 5.99 $\pm$ 0.01 & 24058 $\pm$ 3.22 & \textbf{1.64$\times$ }\\
    & 4 & 6.69 $\pm$ 0.17 & 11264 $\pm$ 49.38 & \textbf{1.49$\times$}\\
 \bottomrule
\end{tabular}
\label{obj_detection}
\end{small}
\vspace{-0.3cm}
\end{table}

\vspace{-0.1cm}
\subsection{More experiments}
\vspace{-0.1cm}
In the Appendix A, we outline numerous ablation experiments that were performed using \texttt{ResIST}.
These experiments provide an understanding of the algorithm's behavior, as well as empirical support for its design: they include \texttt{ResIST} design decisions (Section \ref{design_ablation}), comparison of \texttt{ResIST} with ensemble methods (Section \ref{shallow_ensembles}), robustness to local iterations (Section \ref{S:local_iter}), applicability of \texttt{ResIST} to deeper architectures (Section \ref{S:deep_arch}), and compatibility to existing quantization/sparsification techniques (Section \ref{S:quant}).

\vspace{-0.3cm} 
\section{Conclusion}
\vspace{-0.1cm}
In the work, we present \texttt{ResIST}, a novel algorithm for synchronous, distributed training of ResNets.
\texttt{ResIST} operates by decomposing a global ResNet model into several shallower sub-ResNets, which are trained independently and itermittently aggregated into the global model. 
 By only communicating parameters of sub-ResNets between machines and training shallower, less expensive networks, \texttt{ResIST} reduces the communication and local training cost of synchronous, distributed training.
 We demonstrate the impact of \texttt{ResIST} on several image classification datasets, as well as in the object detection domain, by highlighting the significant training acceleration it provides in comparison to methods like local SGD \citep{use_local_sgd} without any deterioration in performance. 

%\textcolor{magenta}{Maybe include some future work lines?}
We aim to extend \texttt{ResIST} to other network architectures, as \texttt{ResIST} is fully-extensible to all network architectures with residual connections.
Because residual connections are now standard in most important deep learning architectures (e.g., transformers), many opportunities to extend applications of \texttt{ResIST} exist.
On the other hand, \texttt{ResIST} has been shown to be fully-compatible with various gradient compression methods.
As such, we will investigate the prospect of fully integrating such compression methods within \texttt{ResIST}, both during training and communication phases, to further decrease memory and computation costs.
% As residu connection became standard in most important architectures, such as transformers \citep{transformer, bert, rezero}, and as \texttt{ResIST} is fully extendable to all residue models through layer wise partition, we will further experiments on applying \texttt{ResIST} to those architectures in the future work.
% On the other hand, as \texttt{ResIST} is shown to be fully compatible with various gradient compression method during communication, we will investigate in fully integrating these compression method with \texttt{ResIST} in both training and communication phase to further decrease the memory and computation cost. 
%%

\clearpage
\bibliography{uai2022-resist}

\appendix
% NOTE: necessary when ptmx or no mathfont class option is given
\providecommand{\upGamma}{\Gamma}
\providecommand{\uppi}{\pi}
\onecolumn
\allowdisplaybreaks

\section{Ablations}
%We outline numerous ablation experiments that were performed using \texttt{ResIST}.
These experiments provide an understanding of the algorithm's behavior, as well as empirical support for its design.

\subsection{Designing \texttt{ResIST}} \label{design_ablation}
Extensive ablation experiments are conducted on the CIFAR10 dataset, outlined in Fig. \ref{resist_ablations}, to empirically motivate the design choices made within \texttt{ResIST} (i.e., see Sec. \ref{S:supp_tech}).
For the two sub-ResNet case, the naive implementation of \texttt{ResIST}, which evenly splits all convolutional blocks between subnetworks, is shown to perform poorly (i.e., $<$70\% on CIFAR10).
The accuracy of \texttt{ResIST} is improved over 25\% by only allowing select layers to be partitioned and ensuring activations are scaled correctly when performing inference with the full network.
The pre-activation ResNet is shown to yield an improvement in accuracy, leading \texttt{ResIST} to perform near optimally with two sub-ResNets.

\begin{figure}[!htp]
\centering
\includegraphics[width=5in]{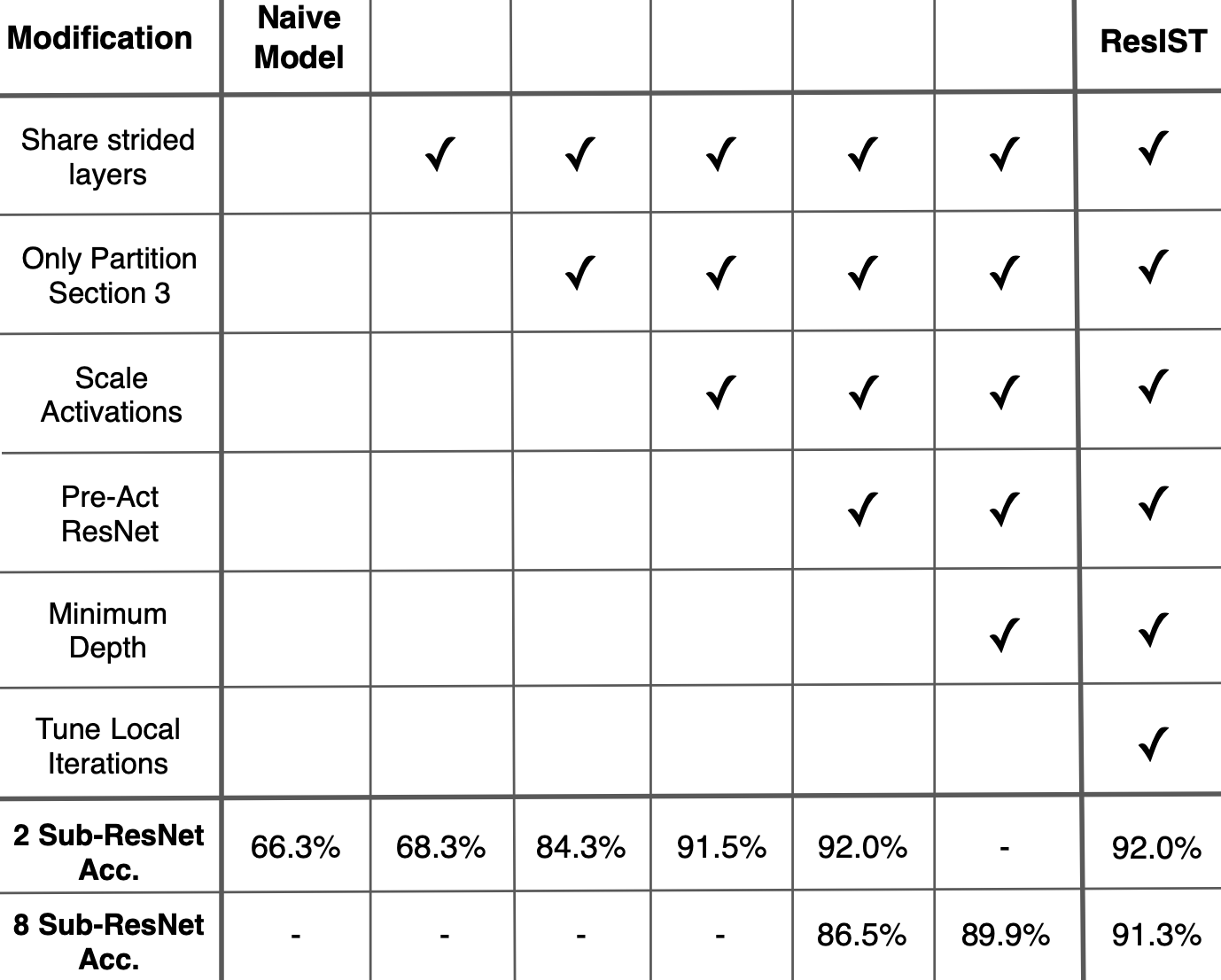} 
\caption{Test accuracies on the CIFAR10 dataset for a single run for the major ablation experiments performed with \texttt{ResIST}.}
\label{resist_ablations}

%\vspace{-0.2cm}
\end{figure}

When \texttt{ResIST} is expanded to eight sub-ResNets, we initially observe a significant decrease in model accuracy.
However, as can be seen in Fig. \ref{resist_ablations}, this gap can be closed by enforcing a minimum depth on sub-ResNets and tuning the number of local iterations.
By making these extra modifications, \texttt{ResIST} begins to perform similarly with two to eight sub-ResNets, yielding compelling performance.% in different experimental settings.

\subsection{Shallow Ensembles}\label{shallow_ensembles}

% \begin{table}[]
% \centering
% \begin{scriptsize}
% \caption{Performance of indpendently-trained ensembles of shallow ResNets on CIFAR10 and CIFAR100 (denoted as C10 and C100, respectively.}
%  \vspace{0.1cm}
% \begin{tabular}{cccccc}
% \toprule

%    Dataset & Method & 2 Model & 4 Model & 8 Model \\ \midrule
%    C10 & Ensemble & 92.27 % & 92.53 & 90.67 \% \\
%    & \texttt{ResIST} & 91.95\% $\pm$ 0.32 & 92.35\% $\pm$ 0.22 & 91.29 $\pm$ 0.12\\
%    \midrule
%    C100 & Ensemble & 72.08\% $\pm$ 0.05 & 72.12\% & 68.10\% \\
%    & \texttt{ResIST} & 70.06\% $\pm$ 0.51 & 71.30\% $\pm$ 0.20 & 70.18\% $\pm$ 0.21 \\

%  \bottomrule
% \end{tabular}
% \label{ensemble_perf}
% \end{scriptsize}
% \end{table}

The \texttt{ResIST} algorithm requires that independently-trained sub-ResNets must have their parameters synchronized intermittently. 
Such synchronization, however, can be completely avoided by training each sub-ResNet separately and forming an ensemble (i.e., \texttt{ResIST} without any aggregation).
Although maintaining an ensemble has several drawbacks (e.g., slower inference, more parameters, etc.), the training time of the ensemble would nonetheless be reduced in comparison to \texttt{ResIST} by avoiding communication altogether. 
Therefore, the performance of such an ensemble should be compared to the models trained with \texttt{ResIST}.

\begin{table}[!ht]
%\vspace{-0.4cm}
\centering
\caption{Performance of indpendently-trained ensembles of shallow ResNets in comparison to \texttt{ResIST} on CIFAR10 and CIFAR100 (denoted as C10 and C100, respectively).}
 \vspace{0.2cm}
 \setlength{\tabcolsep}{.2\tabcolsep}
\begin{tabular}{ccccccccc}
\toprule
%    Dataset & Method & 2 Model & 4 Model & 8 Model \\ \midrule
%    C10 & Ensemble & \textbf{92.27\% $\pm$ 0.00} & 91.19\% $\pm$ 0.01 & 88.04\% $\pm$ 0.02 \\
%    & \texttt{ResIST} & 91.95\% $\pm$ 0.32 & \textbf{91.78\% $\pm$ 0.82} & \textbf{91.29\% $\pm$ 0.12}\\
%    \midrule
%    C100 & Ensemble & \textbf{72.08\% $\pm$ 0.05} & 69.21\% $\pm$ 0.08 & 60.90\% $\pm$ 0.12\\
%    & \texttt{ResIST} & 70.06\% $\pm$ 0.51 & \textbf{71.28\% $\pm$ 0.24}  & \textbf{70.18\% $\pm$ 0.21} \\
    Dataset & Method & & 2 Model & &  4 Model & & 8 Model \\ \midrule
    C10 & Ensemble & & 92.27 \% $\pm$ 0.00 & &  92.56\% $\pm$ 0.03 & & 90.67 \% $\pm$ 0.04 \\    
    & \texttt{ResIST} & & 91.95\% $\pm$ 0.32 & & 92.35\% $\pm$ 0.22 & & 91.45\% $\pm$ 0.30\\
    \midrule
    C100 & Ensemble & & 72.08\% $\pm$ 0.05 & & 72.12\% $\pm$ 0.04 & & 67.98 \% $\pm$ 0.12 \\
    & \texttt{ResIST} & & 70.06\% $\pm$ 0.51 & & 71.30\% $\pm$ 0.20 & & 70.26\% $\pm$ 0.21 \\
 \bottomrule
\end{tabular}
\label{ensemble_perf}
\end{table}

\begin{table*}
\centering
\caption{Test accuracy on CIFAR10 (C10) and CIFAR100 (C100) for deeper architectures trained with \texttt{ResIST} and local SGD (LSGD). All tests were performed with 100 local iterations between synchronization rounds. All models were trained for 80 epochs.}
\vspace{0.1cm}
\begin{small}
\begin{tabular}{ccc|ccc|ccc}
\toprule
     &&& \multicolumn{3}{c|}{ResNet152} & \multicolumn{3}{c}{ResNet200}\\
     Dataset & \# Machines & Method & Time & Test Acc. & Speedup & Time & Test Acc. & Speedup  \\
     \midrule
     C10 & 2 & LSGD & 3512s & 92.27\% $\pm$ 0.003 & & 4575s & 92.31\% $\pm$ 0.001 & \\
     && \texttt{ResIST} & 2215s & 92.01\% $\pm$ 0.002 & \textbf{1.58$\times$} & 2380s & 92.10\% $\pm$ 0.001 & \textbf{1.92$\times$} \\
     %\midrule
     & 4 & LSGD & 3598s & 91.39\% $\pm$ 0.001 &  & 4357s & 91.35\% $\pm$ 0.000 & \\
     && \texttt{ResIST} & 1054s & 90.67\% $\pm$ 0.001 & \textbf{3.41$\times$} &  1161s & 90.27\% $\pm$ 0.001 & \textbf{3.75$\times$} \\
     \midrule
      C100 & 2 & LSGD & 3528s & 70.50\% $\pm$ 0.003 & & 4639s & 71.05\% $\pm$ 0.005 & \\
      && \texttt{ResIST} & 2291s & 70.32\% $\pm$ 0.005 & \textbf{1.53$\times$} & 2202s & 70.71\% $\pm$ 0.002 & \textbf{2.10$\times$} \\
     % \midrule
      & 4 & LSGD & 3518s & 68.39\% $\pm$ 0.004 & & 4391s & 69.05\% $\pm$ 0.003 & \\
      && \texttt{ResIST} & 1164s & 67.27\% $\pm$ 0.003 & \textbf{3.02$\times$} & 1195s & 67.62\% $\pm$ 0.001 & \textbf{3.67$\times$} \\
 \bottomrule
\end{tabular}
\end{small}
\label{tab:deep_net_results}
\end{table*}

The performance of sub-ResNet ensembles in comparison to models trained with \texttt{ResIST} is displayed in Table \ref{ensemble_perf}.
For 8 Sub-ResNets, the shallow ensembles achieve inferior performance in comparison to \texttt{ResIST}.
When two and four Sub-ResNets are used, the performance of shallow ensembles and \texttt{ResIST} is comparable (i.e., $<1\%$ performance difference in most cases).
However, it should be noted that such shallow ensembles of two or four sub-ResNets, in comparison to \texttt{ResIST}, cause a $2\times$ to $4\times$ slowdown in inference time (i.e., inference time for a single Sub-ResNet is not significantly faster than that of the global ResNet).
Furthermore, the ensembles consume more parameters in comparison to global ResNet trained with \texttt{ResIST}. 

\subsection{Robustness to Local Iterations}
\label{S:local_iter}

\begin{figure}[!htp]
    \centering
    %\hspace{-0.4cm} 
    \includegraphics[width=0.7\linewidth]{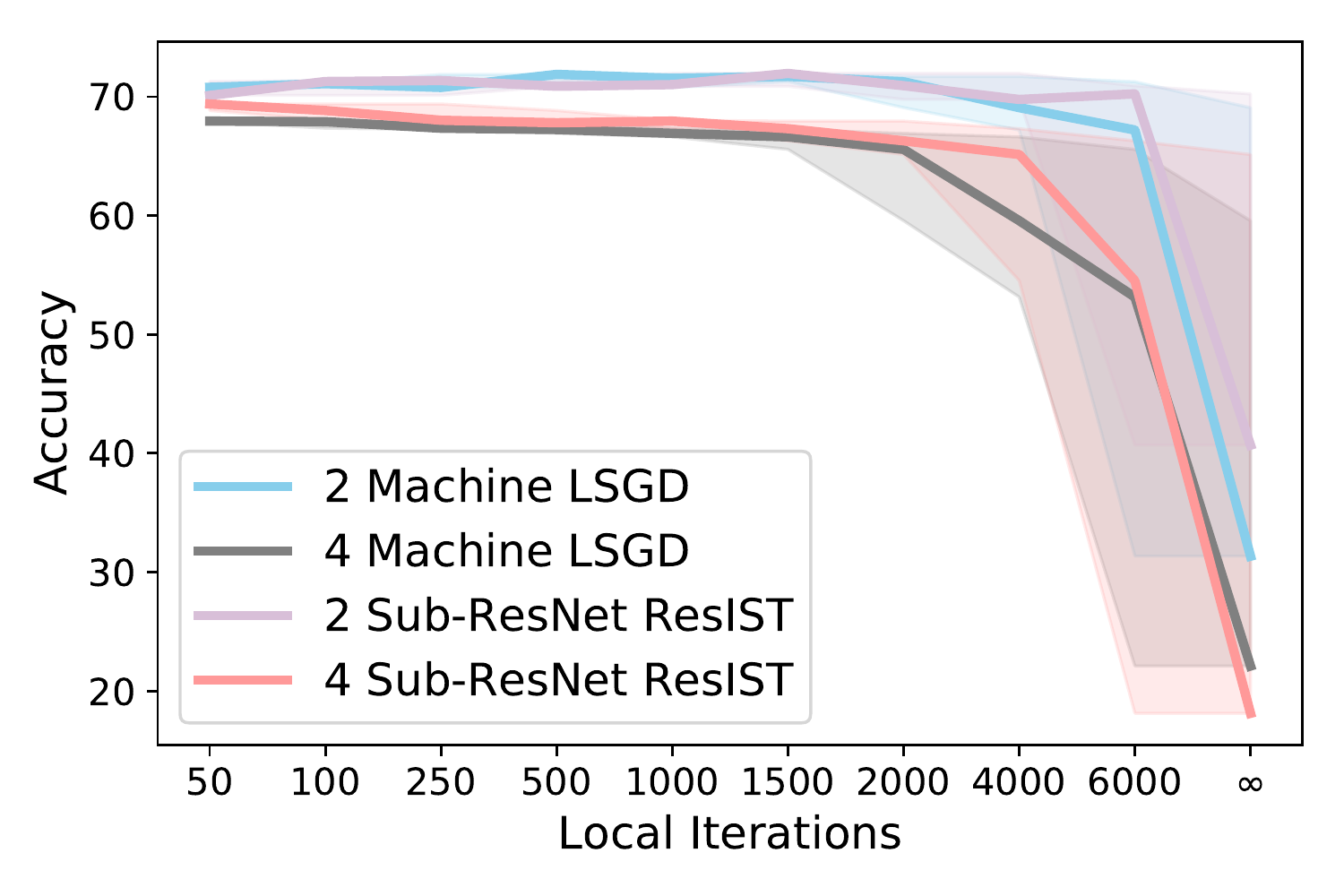} \vspace{-0.4cm}
    \caption{Test accuracy on CIFAR100 for ResNet-101 trained with both \texttt{ResIST} and local SGD (LSGD) with different numbers of local iterations. $\infty$ local iterations refers to aggregating parameters only once at the end of training (i.e., single-shot averaging). Shaded regions reflect deviations in accuracy.}
    \label{fig:local_iter}
%    \Description{Test accuracy on CIFAR100 for ResNet-101 trained with both \texttt{ResIST} and local SGD (LSGD) with different numbers of local iterations}
    \vspace{-0.5cm}
\end{figure}

\texttt{ResIST} is robust to various numbers of local iterations \citep{use_local_sgd, parallel_sgd, fed_avg}.
An extensive sweep over possible values of $\ell$ is performed on CIFAR100.
The results of this experiment are depicted in Fig. \ref{fig:local_iter}.
As can be seen, \texttt{ResIST} achieves high accuracy even with thousands of local SGD iterations (i.e., previous work typically uses much fewer \citep{use_local_sgd}).
However, if more sub-ResNets are used, performance tends to deteriorate more quickly as local iterations increase.
Due to the robustness of \texttt{ResIST} to large numbers of local iterations, training can be accelerated without deteriorating model performance by simply increasing the value of $\ell$.
Local SGD was found to demonstrate similar robustness to the number of local iterations, as shown in Fig. \ref{fig:local_iter}.

\subsection{Deeper architectures}
\label{S:deep_arch}
The \texttt{ResIST} methodology is easily applicable to deeper architectures.
To demonstrate this, results are replicated for CIFAR10 and CIFAR100 datasets with ResNet152 and ResNet200.
These deeper architectures are identical to the original ResNet101 architecture (i.e., see Fig. \ref{model_depict}).
However, more residual blocks are added to the third section of the ResNet (i.e., the highlighted portion of Fig. \ref{model_depict}) to increase the model's depth.
It should be noted that convolutional blocks within the third section of the ResNet are partitioned in \texttt{ResIST} by default (see Sec. \ref{subnet_sec}). 
As a result, all extra residual blocks within these deeper architectures are partitioned to sub-ResNets by \texttt{ResIST} (i.e., no extra blocks are shared between sub-ResNets), allowing \texttt{ResIST} to achieve greater acceleration in comparison to local SGD.

The results of experiments with deeper ResNets are presented in Table \ref{tab:deep_net_results}. %\footnote{Due to limitations in computational resources, all models in this section were trained for 80 epochs.}
\texttt{ResIST} performs competitively with localSGD in all cases.
Furthermore, \texttt{ResIST} achieves a significant speedup in comparison to local SGD that becomes more pronounced as the model becomes deeper.
E.g., for 4-GPUs, \texttt{ResIST} completes training $>3 \times$ faster than local SGD for ResNet200 on both datasets.
This speedup is caused by a greater ratio of total network blocks being partitioned to sub-ResNets in \texttt{ResIST}.
While local SGD must communicate all parameters between machines, \texttt{ResIST} achieves a relative decrease in communication by partitioning all extra residual blocks evenly between sub-ResNets.

\subsection{\texttt{ResIST} and Quantization/Sparse Gradients}
\label{S:quant}

Many quantization \citep{commeff-sgd, double-quant} and sparsification \citep{sparse-comm, linear-speed-quant} techniques have been proposed for reducing communication costs in distributed training.
Such techniques focus on compressing communicated data, and they do not interfere with our methodology, which provides a novel approach to model synchronization and training.
The proposed approach can be easily combined with existing compression techniques to further reduce communication costs and accelerate training \emph{with no extra tuning or modifications}.
To demonstrate that \texttt{ResIST} works well with quantization, we compress all communicated parameters using both four-bit and eight-bit compression.
Table \ref{quantization} shows that \texttt{ResIST} retains its performance until the compression level reaches five-bit and lower.
We also perform experiments with sparsification of communicated weights by only keeping 25\% of total weights within each synchronization round. 
Such a strategy reaches a validation peformance of 71.25\% on CIFAR100.
We summarize the results of all quantization experiments in Fig. \ref{fig:budget}, where we compare communication budgets across different compression techniques with \texttt{ResIST}.
From this figure, it is clear that \texttt{ResIST} is most efficient with six-bit quantization and is compatible with most main-stream compression techniques.

% We also experiment with compressing all communicated weight with weight sparsification by only keeping top 25\% weights which reaches 71.25\% in CIFAR100.
% We summarized all results in Figure \ref{fig:budget}, which compares the communication budget across different compression techniques and \texttt{ResIST} variants combined with these techniques.
% It shows \texttt{ResIST}+6 bit quantization is most communication efficient and \texttt{ResIST} is compatible with both main types of compression techniques.

\begin{table}[!htp]
\centering
\caption{Test Accuracy for \texttt{ResIST} combined with quantization on CIFAR10 and CIFAR100 (denoted as C10 and C100).}
\vspace{0.1cm}
\setlength{\tabcolsep}{.5\tabcolsep}
\begin{tabular}{cccccc}
\toprule
    Dataset  & 8 bit & 7 bit & 6 bit & 5 bit & 4 bit\\ \midrule
    C10   & 92.14\% & 92.26\% & 91.91\% & 91.35\%  & 76.33\%\\
    C100  & 71.38\% & 72.15\% & 71.37\% & 68.29\% & 40.48\%\\
  \bottomrule
 \end{tabular}
 \label{quantization}
 \end{table}
 
 \begin{figure}
    \centering
    \includegraphics[width=0.7\linewidth]{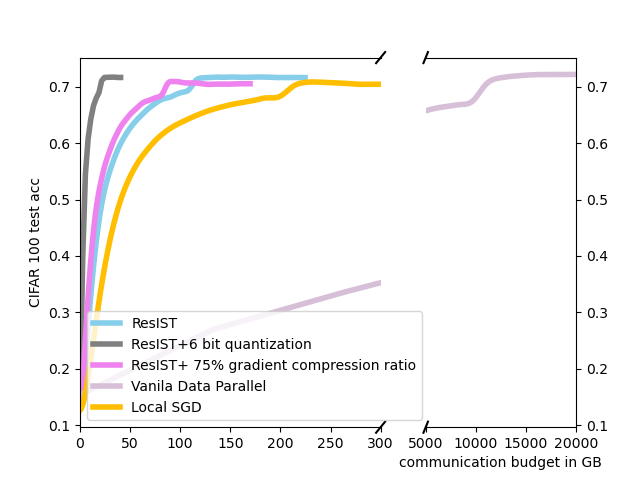}
    \caption{Test accuracy vs. communication budget for \texttt{ResIST}, \texttt{ResIST}+quantization, \texttt{ResIST}+gradient compression, local SGD and vanilla data parallel on CIFAR100. All models are trained over a 4-GPU cluster.}
    \label{fig:budget}
    %\Description{Test accuracy vs. communication budget for \texttt{ResIST}, \texttt{ResIST}+quantization, \texttt{ResIST}+gradient compression, local SGD and vanilla data parallel on CIFAR100.}
    
\end{figure}

\section{Proof for \texttt{ResIST}}{\label{sec:proof}}
Suppose we have $S$ workers, for subnetwork $v$ at local training step $l_t \le \ell$ and global synchronization step $t \le T$:

\begin{align*}
\vect{x}^{(1)}_{v,l_t,t}&=\sqrt{\frac{c_{\sigma}}{m}} \relu{\mat{W}^{(1)}_{v,l_t,t}\vect{x_{v,l_t,t}}}, \nonumber\\
\vect{x}^{(h)}_{v,l_t,t} & =\vect{x}^{(h-1)}_{v,l_t,t}+\frac{c_{res}}{H\sqrt{m}} \relu{\mat{W}^{(h)}_{v,l_t,t}\vect{x}^{(h-1)}_{v,l_t,t}}M^{(h)}_{v,t} \nonumber\\
& ~~~~~~~~~~~~~~~~~~~~~~~~~~~~~~~\text{ for } 2\le h\le H, \nonumber\\
f_{res}(\vect{x},\params)&=\vect{a}_{v,l_t,t}^\top \vect{x}^{(H)}_{v,l_t,t}
\end{align*}
where  $0< c_{res} < 1$ is a small constant and $M^{(h)}_{v,t}$ is random binary variable in layer dropout or the indicator in \texttt{ResIST} that indicates whether this layer is partitioned to this subnetwork. such mask variable is constant during local training steps and re-sampled/re-assigned at global synchronization step. In \texttt{ResIST} and other research on layer dropout for ResNet, last layer is never dropped/paritioned but shared with all workers. Thus, in the following proof, we will follow this setting.

Note here we use a $\frac{c_{res}}{H\sqrt{m}}$ scaling.

The gradient for subnetwork is 
\begin{align*}
\frac{\partial L}{\partial \mat{W}^{(h)}_{v,l_t,t}} =& \frac{c_{res}}{H\sqrt{m}}
\sum_{i=1}^{n}(y_i-u_i)\vect{x}_{i,v,l_t,t}^{(h-1)} \cdot
\left[\vect{a}_{v,l_t,t}^\top \prod_{l=h+1}^{H}\left(\mat{I}+\frac{c_{res}}{H\sqrt{m}}\mat{J}_{i,v,l_t,t}^{(l)}\mat{W}_{v,l_t,t}^{(l)} M^{(l)}_{v,t} \right) \mat{J}_{i,v,l_t,t}^{(h)}  M^{(h)}_{v,t}\right]
\end{align*}
For subnetwork, $\mat{G}^{(H)}$ has the same form as in layer drop ResNet.

The accumulated gradients of all the subnetworks:
\begin{align*}
\mathcal{W}^{(h)}_{t+1} - \mathcal{W}^{(h)}_{t} &= 
\eta \frac{\sum_{v=1}^{S} \sum_{l_t=1}^{\ell} \frac{\partial L}{\partial \mat{W}^{(h)}_{v,l_t,t}}}{\sum_{v=1}^{S}M^{(h)}_{v,t}} \\
&=\frac{\eta}{\sum_{v=1}^{S}M^{(h)}_{v,t}} \sum_{v=1}^{S} \sum_{l_t=1}^{\ell}\frac{c_{res}}{H\sqrt{m}}
\sum_{i=1}^{n}(y_i-u_i)\vect{x}_{i,v,l_t,t}^{(h-1)} \cdot
\left[\vect{a}_{v,l_t,t}^\top \prod_{l=h+1}^{H}\left(\mat{I}+\frac{c_{res}}{H\sqrt{m}}\mat{J}_{i,v,l_t,t}^{(l)}\mat{W}_{v,l_t,t}^{(l)} M^{(l)}_{v,t} \right) \mat{J}_{i,v,l_t,t}^{(h)}  M^{(h)}_{v,t}\right]
\end{align*}

The whole network at global synchronization step t+1

\begin{align*}
\vect{x}^{(1)}_{t}&=\sqrt{\frac{c_{\sigma}}{m}} \relu{\frac{\sum_{v=1}^{S} \mat{W}^{(1)}_{v,\ell,t}}{S}\vect{x_{t}}}, \nonumber\\
\vect{x}^{(h)}_{t} & =\vect{x}^{(h-1)}_{t}+\frac{c_{res}}{H\sqrt{m}} \relu{\frac{\sum_{v=1}^{S} \mat{W}^{(h)}_{v,\ell,t}M^{(h)}_{v,t}}{\sum_{v=1}^{S}M^{(h)}_{v,t}}\vect{x}^{(h-1)}_{t}} \nonumber\\
& ~~~~~~~~~~~~~~~~~~~~~~~~~~~~~~~\text{ for } 2\le h\le H, \nonumber\\
f_{res}(\vect{x},\params)&=\frac{\sum_{v=1}^{S} \vect{a}_{v,\ell,t}}{S}^\top \vect{x}^{(H)}_{t}
\end{align*}
Let $\mathcal{W}^{(h)}_t=\frac{\sum_{v=1}^{S} \mat{W}^{(h)}_{v,\ell,t}M^{(h)}_{v,t}}{\sum_{v=1}^{S}M^{(h)}_{v,t}}$, $\vect{a}_t=\frac{\sum_{v=1}^{S} \vect{a}_{v,\ell,t}}{S}$

The whole network at global synchronization step 0

\begin{align*}
\vect{x}^{(1)}_{0}&=\sqrt{\frac{c_{\sigma}}{m}} \relu{\mat{W}^{(1)}_{0}\vect{x}}, \nonumber\\
\vect{x}^{(h)}_{0} & =\vect{x}^{(h-1)}_{0}+\frac{c_{res}}{H\sqrt{m}} \relu{\mat{W}^{(h)}_{0}\vect{x}^{(h-1)}_{0}} \nonumber\\
& ~~~~~~~~~~~~~~~~~~~~~~~~~~~~~~~\text{ for } 2\le h\le H, \nonumber\\
f_{res}(\vect{x},\params)&=\vect{a}_0^\top \vect{x}^{(H)}_{0}
\end{align*}

\subsection{Proof Sketch}
We can write the loss of the whole network at global syncrhonization step t+1 as \[
L(\params(t),\mat{M}_{1,t},\mat{M}_{2,t}...\mat{M}_{S,t}) = \frac{1}{2}\norm{\vect{y}-\vect{u}(t, \mat{M}_{1,t},\mat{M}_{2,t}...\mat{M}_{S,t})}_2^2.
\]
where $\mat{M}_{v,t}=\{ M^{(1)}_{v,t},M^{(2)}_{v,t}...M^{(H)}_{v,t} \}$
Let $\mathcal{M}_t=\{ \mat{M}_{1,t},\mat{M}_{2,t}...\mat{M}_{S,t} \}$

For convience, we drop all mask notation in the following proof. Let $\hat{u}(t)$ be the output of the whole network at global synchronization step $t+1$.
Now recall the progress of loss function:
\begin{align*}
\norm{\vect{y}-\vect{\hat{u}}(t+1)}_2^2  
= & \norm{\vect{y}-\vect{\hat{u}}(t)}_2^2 -2 \left(\vect{y}-\vect{\hat{u}}(t)\right)^\top \left(\vect{\hat{u}}(t+1)-\vect{\hat{u}}(t)\right) + \norm{\vect{\hat{u}}(t+1)-\vect{\hat{u}}(t)}_2^2
\end{align*}
Following \cite{du2019gradient}, we apply Taylor expansion on $\left(\vect{\hat{u}}(t+1)-\vect{\hat{u}}(t)\right)$ and look at the $i$th coordinate.
\begin{align*}
\hat{u}_{i}(t+1)-\hat{u}_{i}(t) & = -\langle \theta(t+1)-\theta(t), \hat{u}'_i\left(\params(t)\right) \rangle + \int_{s=0}^{1} \langle \theta(t+1)-\theta(t), \hat{u}'_i\left(\params(t)\right) -\hat{u}'_i\left(\params(t)-s (\params(t)-\params(t+1)) \right) \rangle ds \\
& \triangleq I^i_1(t)+I_2^i(t)
\end{align*}
However, it is not obvious that $I_1(t)$ and $I_2(t)$ can be directed bounded to show the decrease of the loss of the whole network as both of them involve the accumulated gradient change from distributed local subnetwork training. Thus, we introduce a new term $I'^i_1(t)$ as below, which relates to the hypothetical global gradient direction as if the whole network trained centrally.  
\begin{align*}
	I'^i_1(t)=&-\eta \ell \langle L'(\params(t)), \hat{u}'_i\left(\params(t)\right) \rangle \\
	=&-\eta \ell \sum_{j=1}^{n}(\hat{u}_j-y_j) \langle \hat{u}'_j(\params(t)), \hat{u}'_i\left(\params(t)\right) \rangle\\
	\triangleq& -\eta \ell \sum_{j=1}^{n}(\hat{u}_j-y_j) \sum_{h=1}^{H+1}\mat{\hat{G}}^{(h)}_{ij}(t)
	\end{align*}
Accordingly, 
\begin{align*}
&\norm{\vect{y}-\vect{\hat{u}}(t+1)}_2^2  \\
= & \norm{\vect{y}-\vect{\hat{u}}(t)}_2^2 -2 \left(\vect{y}-\vect{\hat{u}}(t)\right)^\top \left( \vect{I}_1(t)+\vect{I}_2(t)+\vect{I'}_1(t)-\vect{I'}_1(t) \right) + \norm{\vect{\hat{u}}(t+1)-\vect{\hat{u}}(t)}_2^2\\
= &\norm{\vect{y}-\vect{\hat{u}}(t)}_2^2 - 2 \left(\vect{y}-\vect{\hat{u}}(t)\right)^\top\vect{I'}_1(t) +2 \left(\vect{y}-\vect{\hat{u}}(t)\right)^\top(\vect{I'}_1(t)-\vect{I}_1(t))-2\left(\vect{y}-\vect{\hat{u}}(t)\right)^\top\vect{I}_2(t)+  \norm{\vect{\hat{u}}(t+1)-\vect{\hat{u}}(t)}_2^2 \\
\le & \left(1-\eta \ell \lambda_{\min}\left(\mat{\hat{G}}^{(H)}(t)\right)\right)\norm{\vect{y}-\vect{\hat{u}}(t)}_2^2 +2 \left(\vect{y}-\vect{\hat{u}}(t)\right)^\top(\vect{I'}_1(t)-\vect{I}_1(t))\\
&-2\left(\vect{y}-\vect{\hat{u}}(t)\right)^\top\vect{I}_2(t)+  \norm{\vect{\hat{u}}(t+1)-\vect{\hat{u}}(t)}_2^2.
\end{align*}

% \begin{align*}
% 	I^i_1=&-\langle \mathcal{W}_{t}-\mathcal{W}_{t-1}, \hat{u}'_i\left(\params(t)\right) \rangle \\
% 	\end{align*}

% \begin{align*}
% 	I_2^i(t) =& \int_{s=0}^{1} \langle \mathcal{W}_{t}-\mathcal{W}_{t-1}, \hat{u}'_i\left(\params(t)\right) -\hat{u}'_i\left(\params(t)-s (\params(t)-\params(t+1)) \right) \rangle ds \\
% 	\abs{I_2^i(t)} \le & \max_{0\le s\le 1} \sum_{h=1}^{H}  \norm{\mathcal{W}_{t}-\mathcal{W}_{t-1}}_F \norm{  \hat{u}'^{(h)}_i\left(\params(t)\right) -\hat{u}'^{(h)}_i\left(\params(t)-s (\mathcal{W}_{t-1}-\mathcal{W}_{t})\right)}_F .
% 	\end{align*}

Our hypothesis is:
\begin{condition}\label{cond:linear_converge_resist}
	At the $t+1$-th global synchronization, for the whole network, we have \begin{align*}
	\norm{\vect{y}-\vect{\hat{u}}(t,\mathcal{M}_t)}_2^2 \le (1-\frac{\eta \ell \lambda_0}{2})^{t} \norm{\vect{y}-\vect{\hat{u}}(0)}_2^2.
	\end{align*}
\end{condition}
In order to prove this, we need to show $2 \left(\vect{y}-\vect{\hat{u}}(t)\right)^\top(\vect{I'}_1(t)-\vect{I}_1(t))$,$-2\left(\vect{y}-\vect{\hat{u}}(t)\right)^\top\vect{I}_2(t)$ and  $\norm{\vect{\hat{u}}(t+1)-\vect{\hat{u}}(t)}_2^2$  are proportional to $\eta^2 \norm{\vect{y}-\vect{\hat{u}}(t)}_2^2$
so if we set $\eta$ sufficiently small, this term is smaller than $\eta \lambda_{\min}\left(\mat{\hat{G}}^{(H)}(t)\right)\norm{\vect{y}-\vect{\hat{u}}(t)}_2^2$ and thus the loss function decreases with a linear rate. 

Further, similar to \cite{du2019gradient}, to prove the induction hypothesis, it suffices to prove $\lambda_{\min}\left(\mat{\hat{G}}^{(H)}(t)\right) \ge \frac{\lambda_0}{2}$ for $t'=0,\ldots,t$, where $\lambda_0$ is independent of $m$.
Similar to \cite{du2019gradient}, we can show  at the beginning 
\begin{align*}
	\lambda_{\min}\left(\mat{\hat{G}}^{(H)}(0)\right) \ge \frac{3}{4}\lambda_0.
\end{align*}

Now for the $t$-th global iteration, by matrix perturbation analysis, we know it is sufficient to show $\norm{\mat{\hat{G}}^{(H)}(t)-\mat{\hat{G}}^{(H)}(0)}_2 \le \frac{1}{4}\lambda_0$.
To do this, we show as long as $m$ is large enough, every weight matrix is close its initialization in a relative error sense.

\begin{lem}[Lemma on Initialization Norms for the whole network]
	\label{lem:init_norm_res_global}
	If $\sigma(\cdot)$ is $L-$Lipschitz and $m = \Omega\left(\frac{n}{\delta}\right)$, assuming $\norm{\mathcal{W}^{(h)}_0}_2\le c_{w,0}\sqrt{m}$ for $h\in[2,H]$ and $c_{w,0} \approx 2$ for Gaussian initialization. We have with probability at least $1-\delta$ over random initialization, for every $h\in[H]$ and $i \in [n]$, 
	\[ \frac{1}{c_{x,0}}\le \norm{\vect{x}_{i,0}^{(h)}}_2 \le c_{x,0} \]  for some universal constant $c_{x,0} > 1$
\end{lem}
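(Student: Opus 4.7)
The plan is to prove this lemma by induction on the layer index $h$, establishing a two-sided bound on $\|\vect{x}_{i,0}^{(h)}\|_2$ that does not blow up with the depth $H$, thanks to the $\tfrac{c_{res}}{H\sqrt{m}}$ scaling of the residual branch.

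\emph{Base case $h=1$.} Since $\vect{x}_{i,0}^{(1)} = \sqrt{c_\sigma/m}\,\sigma(\mat{W}^{(1)}_0 \vect{x}_i)$ with $\mat{W}^{(1)}_0$ having i.i.d. Gaussian entries and $\|\vect{x}_i\|_2 = 1$, each coordinate $[\mat{W}^{(1)}_0\vect{x}_i]_k$ is a standard Gaussian and the coordinates are independent across $k\in[m]$. A standard Bernstein / $\chi^2$-concentration argument then gives $\|\vect{x}_{i,0}^{(1)}\|_2 \in [1/c_{x,0}', c_{x,0}']$ with probability at least $1-\delta/(2n)$ for some absolute constant $c_{x,0}'>1$, provided $m = \Omega(\log(n/\delta))$ which is subsumed by the stated $m=\Omega(n/\delta)$. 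A union bound over the $n$ data points controls the base case simultaneously.

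\emph{Inductive step for the upper bound.} From the recursion $\vect{x}_{i,0}^{(h)} = \vect{x}_{i,0}^{(h-1)} + \tfrac{c_{res}}{H\sqrt{m}}\sigma(\mat{W}^{(h)}_0 \vect{x}_{i,0}^{(h-1)})$ together with the $L$-Lipschitz property of $\sigma$ (and $\sigma(0)=0$, which holds for ReLU; otherwise one absorbs the constant), the triangle inequality gives
\begin{align*}
\|\vect{x}_{i,0}^{(h)}\|_2 \;\le\; \|\vect{x}_{i,0}^{(h-1)}\|_2 + \tfrac{c_{res}}{H\sqrt{m}}\, L\, \|\mat{W}^{(h)}_0\|_2 \|\vect{x}_{i,0}^{(h-1)}\|_2 \;\le\; \Bigl(1+\tfrac{c_{res} L c_{w,0}}{H}\Bigr)\|\vect{x}_{i,0}^{(h-1)}\|_2,
\end{align*}
using the hypothesis $\|\mathcal{W}^{(h)}_0\|_2\le c_{w,0}\sqrt{m}$. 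Iterating and using $(1+x/H)^{H-1}\le e^{x}$ yields $\|\vect{x}_{i,0}^{(h)}\|_2 \le c_{x,0}'\, e^{c_{res} L c_{w,0}}$, a depth-independent upper bound.

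\emph{Inductive step for the lower bound.} The reverse triangle inequality gives $\|\vect{x}_{i,0}^{(h)}\|_2 \ge (1-\tfrac{c_{res}Lc_{w,0}}{H})\|\vect{x}_{i,0}^{(h-1)}\|_2$, and iterating yields $\|\vect{x}_{i,0}^{(h)}\|_2 \ge \tfrac{1}{c_{x,0}'}\, e^{-2 c_{res} L c_{w,0}}$ for $H$ large enough that $c_{res} L c_{w,0}/H \le 1/2$. Choosing $c_{x,0}$ to be the larger of the resulting upper and reciprocal lower constants finishes the bound for a fixed $(i,h)$.

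\emph{Finishing by union bound.} Finally, take a union bound over $i\in[n]$ and $h\in[H]$ on the base-case concentration event. Because the only randomness that enters the inductive step in a per-sample way is through the fixed spectral bound $\|\mathcal{W}_0^{(h)}\|_2\le c_{w,0}\sqrt{m}$ (assumed throughout) and the base-case norm concentration, this union bound is what drives the requirement $m=\Omega(n/\delta)$. The main technical point to watch is keeping the constants $c_{res}, L, c_{w,0}$ packaged so that $c_{res} L c_{w,0}/H \le 1/2$, ensuring the multiplicative factor in the lower-bound recursion stays positive and the resulting $c_{x,0}$ is an absolute constant independent of $H$ and $m$.
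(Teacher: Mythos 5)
Your proposal is correct, and it is essentially the same argument as the proof the paper relies on: the paper simply notes that the whole network at initialization coincides with the ResNet of \citet{du2019gradient} and invokes their Lemma C.1, whose proof is exactly your two-sided induction (concentration of $\|\vect{x}^{(1)}_{i,0}\|_2$ at the first layer, then per-layer multiplicative factors $1\pm c_{res}Lc_{w,0}/H$ telescoping to depth-independent constants via $e^{\pm O(c_{res}Lc_{w,0})}$, using the assumed spectral bound on $\mathcal{W}^{(h)}_0$ and a union bound over the $n$ samples). The only caveat worth recording is that the lower-bound recursion needs $c_{res}Lc_{w,0}/H$ bounded away from $1$, which you flag and which is guaranteed by the paper's standing assumption that $c_{res}$ is a small constant.
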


\begin{proof}[Proof of Lemma~\ref{lem:init_norm_res_global}]
As the global model at initialization is the same with original ResNet in \cite{du2019gradient}, we can use the same proof in Lemma C.1 in \cite{du2019gradient}.
\end{proof}

The following lemma lower bounds $\mat{\hat{G}}^{(H)}(0)$'s least eigenvalue.
\begin{lem}[Least Eigenvalue at the Initialization]\label{lem:resnet_least_eigen_whole}
	If $m = \Omega\left(\frac{n^2\log(Hn/\delta)}{\lambda_0^2}\right)$, we have \begin{align*}
	\lambda_{\min}(\mat{\hat{G}}^{(H)}(0)) \ge \frac{3}{4}\lambda_0.
	\end{align*}
\end{lem}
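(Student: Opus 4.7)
The plan is to reduce this claim to the analogous initialization bound in \citet{du2019gradient} (their Lemma 6.1 / Section 6) for standard deep ResNets. At global step $t=0$, no local training has yet been performed and no masks have been applied, so the ``whole network'' with parameters $\params(0)$ coincides with a freshly initialized deep ResNet of depth $H$ and width $m$ whose weights are drawn i.i.d.\ from the standard Gaussian initialization. Consequently $\mat{\hat{G}}^{(H)}(0)$ is exactly the Gram matrix of output-gradients w.r.t.\ $\mat{W}^{(H)}$ for that plain ResNet on the data $\{\vect{x}_i\}_{i=1}^n$, and the ResIST-specific elements (masks $M^{(h)}_{v,t}$, worker averaging, local step index $l_t$) play no role at $t=0$.

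Given that reduction, the first step would be to identify the population Gram matrix $\mat{K}^{(H)}$, as defined in Section~6 of \cite{du2019gradient}. Each entry
\[
\mat{\hat{G}}^{(H)}_{ij}(0) \;=\; \big\langle \tfrac{\partial \hat{u}_i(0)}{\partial \mat{W}^{(H)}(0)},\, \tfrac{\partial \hat{u}_j(0)}{\partial \mat{W}^{(H)}(0)} \big\rangle
\]
takes the form of an empirical average over the $m$ hidden coordinates of products of i.i.d.\ quantities that depend on $\mat{W}^{(h)}(0)$ and on the pre-activations $\vect{x}_{i,0}^{(h-1)}, \vect{x}_{j,0}^{(h-1)}$. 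Using Lemma~\ref{lem:init_norm_res_global} to control $\|\vect{x}_{i,0}^{(h)}\|_2$ uniformly in $h,i$, the relevant summands are bounded random variables, so a Hoeffding (or Bernstein) concentration applied coordinate-wise yields
\[
\Pr\!\left[\,|\mat{\hat{G}}^{(H)}_{ij}(0) - \mat{K}^{(H)}_{ij}|\ge \tfrac{\lambda_0}{4n}\right] \;\le\; \exp\!\big(-\Omega(m\lambda_0^2/n^2)\big).
\]
A union bound over the $O(n^2)$ pairs and the $H$ layers (to control the backward Jacobians that feed into the gradient) gives, with probability $\ge 1-\delta$, $\|\mat{\hat{G}}^{(H)}(0) - \mat{K}^{(H)}\|_F \le \tfrac{\lambda_0}{4}$ as soon as $m = \Omega\!\big(n^2\log(Hn/\delta)/\lambda_0^2\big)$, which matches the hypothesis on $m$.

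The conclusion then comes from Weyl's inequality:
\[
\lambda_{\min}\!\big(\mat{\hat{G}}^{(H)}(0)\big) \;\ge\; \lambda_{\min}\!\big(\mat{K}^{(H)}\big) - \|\mat{\hat{G}}^{(H)}(0) - \mat{K}^{(H)}\|_2 \;\ge\; \lambda_0 - \tfrac{\lambda_0}{4} \;=\; \tfrac{3}{4}\lambda_0,
\]
since by definition $\lambda_0 = \lambda_{\min}(\mat{K}^{(H)})$ and $\|\cdot\|_2 \le \|\cdot\|_F$.

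The main obstacle I anticipate is not the concentration step itself but justifying the ``$t=0$ reduction'' cleanly: in the ResIST setup, one might worry that the population object $\mat{K}^{(H)}$ should itself depend on the mask distribution or on the averaging scheme defining $\mathcal{W}^{(h)}_t$. The resolution is that at $t=0$ the updates $\mat{W}^{(h)}_{v,\ell,t}$ have not been formed, so $\mathcal{W}^{(h)}_0 = \mat{W}^{(h)}_0$ and the forward/backward graph of the whole network matches the standard (non-dropout) ResNet exactly. Once this is spelled out, the proof is a direct invocation of the initialization argument of \cite{du2019gradient} with the width threshold already built into the hypothesis of the lemma.
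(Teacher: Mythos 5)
Your proposal is correct and follows essentially the same route as the paper: both arguments reduce the claim to the initialization bound of \cite{du2019gradient} (their Lemma C.2) by observing that at $t=0$ no masks or worker averaging have been applied, so $\mat{\hat{G}}^{(H)}(0)$ is exactly the Gram matrix of a freshly initialized standard ResNet. The paper simply cites that lemma, while you additionally sketch its internals (entrywise concentration to $\mat{K}^{(H)}$ plus Weyl's inequality), which is consistent with the cited proof.
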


\begin{proof}[Proof of Lemma~\ref{lem:resnet_least_eigen_whole}]
As the global model at initialization is the same with original ResNet in \cite{du2019gradient}, we can use the same proof in Lemma C.2 in \cite{du2019gradient}.
\end{proof}

\begin{lem}\label{lem:pertubation_of_neuron_res_sub}
	Suppose $\sigma(\cdot)$ is $L$-Lipschitz and for $h\in[H]$, $\norm{\mathcal{W}^{(h)}_0}_2 \le c_{w,0}\sqrt{m}$, $\norm{\vect{x}^{(h)}_0}_2 \le c_{x,0}$ and $\norm{\mat{W}^{(h)}_{v,l_t,t}-\mathcal{W}^{(h)}_0}_F \le \sqrt{m} R$ for some constant $c_{w,0},c_{x,0} > 0$ and $R\le c_{w,0}$ .
	Then we have \begin{align*}
	\norm{\vect{x}^{(h)}_{v,l_t,t}-\vect{x}^{(h)}_0}_2 \le \left(\sqrt{c_{\sigma}}L+\frac{c_{x,0}}{c_{w,0}}+\frac{c_{x,0}}{R}\right)e^{2c_{res}c_{w,0}L} R \triangleq c'_xR.
	\end{align*} 
\end{lem}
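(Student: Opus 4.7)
The plan is to proceed by induction on the depth index $h$, mirroring the residual-block recursion. For the base case $h=1$, the difference reduces directly to $\sqrt{c_{\sigma}/m}\,\|\sigma(\mat{W}^{(1)}_{v,l_t,t}\vect{x}) - \sigma(\mat{W}^{(1)}_0\vect{x})\|_2$; applying Lipschitzness of $\sigma$, the weight-perturbation hypothesis $\|\mat{W}^{(1)}_{v,l_t,t}-\mat{W}^{(1)}_0\|_F \le \sqrt{m}\,R$, and the input normalization $\|\vect{x}\|_2 = 1$ immediately yields $\Delta_1 \triangleq \|\vect{x}^{(1)}_{v,l_t,t}-\vect{x}^{(1)}_0\|_2 \le \sqrt{c_{\sigma}}\,L\,R$.

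For the inductive step at $h \ge 2$, subtracting the two residual recursions gives $\Delta_h \le \Delta_{h-1} + \tfrac{c_{res}}{H\sqrt{m}}\bigl\|\sigma(\mat{W}^{(h)}_{v,l_t,t}\vect{x}^{(h-1)}_{v,l_t,t})\,M^{(h)}_{v,t} - \sigma(\mat{W}^{(h)}_0\vect{x}^{(h-1)}_0)\bigr\|_2$. I then split into two sub-cases. When $M^{(h)}_{v,t}=1$, I use Lipschitzness of $\sigma$ together with the splitting $\mat{W}^{(h)}_{v,l_t,t}\vect{x}^{(h-1)}_{v,l_t,t} - \mat{W}^{(h)}_0\vect{x}^{(h-1)}_0 = (\mat{W}^{(h)}_{v,l_t,t}-\mat{W}^{(h)}_0)\vect{x}^{(h-1)}_{v,l_t,t} + \mat{W}^{(h)}_0(\vect{x}^{(h-1)}_{v,l_t,t}-\vect{x}^{(h-1)}_0)$, invoking the assumed bounds $\|\mat{W}^{(h)}_0\|_2 \le c_{w,0}\sqrt{m}$, $\|\vect{x}^{(h-1)}_0\|_2 \le c_{x,0}$ and the triangle-inequality consequence $\|\vect{x}^{(h-1)}_{v,l_t,t}\|_2 \le c_{x,0}+\Delta_{h-1}$, to produce a layer contribution of order $\tfrac{c_{res}\,L}{H}\bigl(c_{x,0}\,R + (R + c_{w,0})\,\Delta_{h-1}\bigr)$. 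When $M^{(h)}_{v,t}=0$, the same expression collapses to the purely additive mask-dropout term $\tfrac{c_{res}}{H\sqrt{m}}\|\sigma(\mat{W}^{(h)}_0\vect{x}^{(h-1)}_0)\|_2 \le \tfrac{c_{res}\,L\,c_{w,0}\,c_{x,0}}{H}$.

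Combining both cases and using $R \le c_{w,0}$, I obtain a uniform linear recursion $\Delta_h \le \bigl(1 + \tfrac{2\,c_{res}\,L\,c_{w,0}}{H}\bigr)\Delta_{h-1} + A_h$, where $A_h$ is bounded by the larger of the two layer-additive pieces identified above. Unrolling from $\Delta_1$ to $\Delta_H$ and invoking the standard inequality $\bigl(1 + \tfrac{2\,c_{res}\,L\,c_{w,0}}{H}\bigr)^{H-1} \le e^{2\,c_{res}\,L\,c_{w,0}}$ produces the exponential factor. The sum of the accumulated additive terms is at most of order $c_{res}\,L\,c_{x,0}(R + c_{w,0})$, which---after factoring out $R$ and combining with the base-case $\sqrt{c_{\sigma}}\,L\,R$ contribution---collects into the three summands $\sqrt{c_{\sigma}}\,L$, $c_{x,0}/c_{w,0}$, and $c_{x,0}/R$ inside the parenthesis of the claimed constant $c'_x$.

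The main obstacle will be the careful bookkeeping of the mask-dropout contributions: unlike the standard ResNet perturbation analysis of \cite{du2019gradient}, the $M^{(h)}_{v,t}=0$ case introduces an $O(1)$ additive term rather than an $O(R)$ term, and it is precisely this $O(1)$ piece that manifests as the $c_{x,0}/R$ summand once $R$ is pulled out of the overall bound. The hypothesis $R \le c_{w,0}$ is crucial both for keeping the geometric growth $\alpha^H \le e^{2 c_{res} L c_{w,0}}$ under control and for uniformly collapsing the two case-dependent additive terms into a single closed-form expression.
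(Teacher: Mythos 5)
Your proposal is correct and follows essentially the same route as the paper's proof: an induction on depth with base case $\sqrt{c_\sigma}LR$, the standard Lipschitz splitting of the weight/activation perturbation for the retained layers, and an $O(1)$ additive term $\tfrac{c_{res}Lc_{w,0}c_{x,0}}{H}$ from the dropped-mask contribution that, after unrolling the recursion with $\bigl(1+\tfrac{2c_{res}c_{w,0}L}{H}\bigr)^{H}\le e^{2c_{res}c_{w,0}L}$, accounts for the $c_{x,0}/R$ summand. The only cosmetic difference is that you case-split on $M^{(h)}_{v,t}\in\{0,1\}$ where the paper writes the single algebraic decomposition $[\sigma(\cdot)-\sigma(\cdot)]M^{(h)}_{v,t}+\sigma(\cdot)(M^{(h)}_{v,t}-1)$; the resulting recursion and its resolution are the same.
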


\begin{proof}[Proof of Lemma~\ref{lem:pertubation_of_neuron_res_sub}]
	We prove this lemma by induction.
	Our induction hypothesis is \begin{align*}
	\norm{\vect{x}^{(h)}_{v,l_t,t}-\vect{x}^{(h)}_{0}}_2 \le g(h)  ,
	\end{align*}	where \begin{align*}
	g(h) = \left[1+\frac{2c_{res}c_{w,0}L}{H}\right]g(h-1) + \frac{c_{res}Lc_{x,0}}{H}(c_{w,0}+R).
	\end{align*}
	For $h=1$, we have
	\begin{align*}
	\norm{\vect{x}^{(1)}_{v,l_t,t}-\vect{x}^{(1)}_0}_2&\le \sqrt{\frac{c_{\sigma}}{m}} \norm{\relu{\mat{W}^{(1)}_{v,l_t,t} \vect{x}} -\relu{\mathcal{W}^{(1)}_0 \vect{x}}}_2\\
&	\le \sqrt{\frac{c_{\sigma}}{m}}L\norm{\mat{W}^{(1)}_{v,l_t,t}-\mathcal{W}^{(1)}_0}_F \le \sqrt{c_{\sigma}}LR ,
	\end{align*}
	which implies $g(1)=\sqrt{c_{\sigma}}LR$, for $2\le h\le H$, we have
	\begin{align*}
&	\norm{\vect{x}^{(h)}_{v,l_t,t}-\vect{x}^{(h)}_0}_2 \le \frac{c_{res}}{H\sqrt{m}} \norm{\relu{\mat{W}^{(h)}_{v,l_t,t} \vect{x}^{(h-1)}_{v,l_t,t}}M^{(h)}_{v,t} -\relu{\mathcal{W}^{(h)}_0 \vect{x}^{(h-1)}_0}1}_2\\&+\norm{\vect{x}^{(h-1)}_{v,l_t,t}-\vect{x}^{(h-1)}_0}_2 \\
&   \le \frac{c_{res}}{H\sqrt{m}} \norm{[\relu{\mat{W}^{(h)}_{v,l_t,t} \vect{x}^{(h-1)}_{v,l_t,t}} -\relu{\mathcal{W}^{(h)}_0 \vect{x}^{(h-1)}_0}]M^{(h)}_{v,t} + \relu{\mathcal{W}^{(h)}_0 \vect{x}^{(h-1)}_0}(M^{(h)}_{v,t}-1)}_2\\&+\norm{\vect{x}^{(h-1)}_{v,l_t,t}-\vect{x}^{(h-1)}_0}_2 \\
&   \le \frac{c_{res}}{H\sqrt{m}} \norm{[\relu{\mat{W}^{(h)}_{v,l_t,t} \vect{x}^{(h-1)}_{v,l_t,t}} -\relu{\mathcal{W}^{(h)}_0 \vect{x}^{(h-1)}_0}]M^{(h)}_{v,t}}_2 + \frac{c_{res}}{H\sqrt{m}} \norm{\relu{\mathcal{W}^{(h)}_0 \vect{x}^{(h-1)}_0}(M^{(h)}_{v,t}-1)}_2\\&+\norm{\vect{x}^{(h-1)}_{v,l_t,t}-\vect{x}^{(h-1)}_0}_2 \\
&   \le \frac{c_{res}}{H\sqrt{m}} \norm{[\relu{\mat{W}^{(h)}_{v,l_t,t} \vect{x}^{(h-1)}_{v,l_t,t}} -\relu{\mathcal{W}^{(h)}_0 \vect{x}^{(h-1)}_0}]}_2\norm{M^{(h)}_{v,t}}_2 + \frac{c_{res}}{H\sqrt{m}} \norm{\relu{\mathcal{W}^{(h)}_0 \vect{x}^{(h-1)}_0}}_2\norm{(M^{(h)}_{v,t}-1)}_2\\&+\norm{\vect{x}^{(h-1)}_{v,l_t,t}-\vect{x}^{(h-1)}_0}_2 \\
&   \le \frac{c_{res}}{H\sqrt{m}} \norm{[\relu{\mat{W}^{(h)}_{v,l_t,t} \vect{x}^{(h-1)}_{v,l_t,t}} -\relu{\mat{W}^{(h)}_{v,l_t,t} \vect{x}^{(h-1)}_0} + \relu{\mat{W}^{(h)}_{v,l_t,t} \vect{x}^{(h-1)}_{v,l_t,t}}-\relu{\mathcal{W}^{(h)}_0 \vect{x}^{(h-1)}_0}]}_2 \\&+ \frac{c_{res}L}{H\sqrt{m}} \norm{\mathcal{W}^{(h)}_0 \vect{x}^{(h-1)}_0}+\norm{\vect{x}^{(h-1)}_{v,l_t,t}-\vect{x}^{(h-1)}_0}_2 \\
& 	\le \frac{c_{res}}{H\sqrt{m}} \norm{\relu{\mat{W}^{(h)}_{v,l_t,t} \vect{x}^{(h-1)}_{v,l_t,t}} -\relu{\mat{W}^{(h)}_{v,l_t,t} \vect{x}^{(h-1)}_0}}_2\\ 
	& + \frac{c_{res}}{H\sqrt{m}} \norm{\relu{\mat{W}^{(h)}_{v,l_t,t} \vect{x}^{(h-1)}_0} -\relu{\mathcal{W}^{(h)}_0 \vect{x}^{(h-1)}_0}}_2\\ &+\norm{\vect{x}^{(h-1)}_{v,l_t,t}-\vect{x}^{(h-1)}_0}_2+ \frac{c_{res}Lc_{w,0}c_{x,0}}{H}\\
&	\le  \frac{c_{res}L}{H\sqrt{m}}\left(
	\norm{\mathcal{W}^{(h)}_0}_2 + \norm{\mat{W}^{(h)}_{v,l_t,t}-\mathcal{W}^{(h)}_0}_F
	\right) \cdot \norm{\vect{x}^{(h-1)}_{v,l_t,t}-\vect{x}^{(h-1)}_0}_2 \\
	& + \frac{c_{res}L}{H\sqrt{m}}\norm{\mat{W}^{(h)}_{v,l_t,t}-\mathcal{W}^{(h)}_0}_F \norm{\vect{x}^{(h-1)}_0}_2 +\norm{\vect{x}^{(h-1)}_{v,l_t,t}-\vect{x}^{(h-1)}_0}_2+ \frac{c_{res}Lc_{w,0}c_{x,0}}{H}\\
&	\le \left[1+\frac{c_{res}L}{H\sqrt{m}}\left(c_{w,0}\sqrt{m}+R\sqrt{m}\right)\right]g(h-1) + \frac{c_{res}L}{H\sqrt{m}} \sqrt{m} R c_{x,0}+ \frac{c_{res}Lc_{w,0}c_{x,0}}{H}\\
&	\le \left(1+\frac{2c_{res}c_{w,0}L}{H}\right)g(h-1)+\frac{c_{res}}{H} Lc_{x,0}R + \frac{c_{res}Lc_{w,0}c_{x,0}}{H}. 
	\end{align*}
	Lastly, simple calculations show $g(h) \le \left(\sqrt{c_{\sigma}}L+\frac{c_{x,0}}{c_{w,0}}+\frac{c_{x,0}}{R}\right)e^{2c_{res}c_{w,0}L} R$. 
	
\end{proof}

\begin{lem}\label{lem:pertubation_of_neuron_res_whole}
	Suppose $\sigma(\cdot)$ is $L$-Lipschitz and for $h\in[H]$, $\norm{\mathcal{W}^{(h)}_0}_2 \le c_{w,0}\sqrt{m}$, $\norm{\vect{x}^{(h)}_0}_2 \le c_{x,0}$ and $\norm{\mathcal{W}^{(h)}_{t}-\mathcal{W}^{(h)}_0}_F \le \sqrt{m} R$ for some constant $c_{w,0},c_{x,0} > 0$ and $R\le c_{w,0}$ .
	Then we have \begin{align*}
	\norm{\vect{x}^{(h)}_{t}-\vect{x}^{(h)}_0}_2 \le \left(\sqrt{c_{\sigma}}L+\frac{c_{x,0}}{c_{w,0}}\right)e^{2c_{res}c_{w,0}L} R \triangleq c_xR.
	\end{align*} 
\end{lem}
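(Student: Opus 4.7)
The plan is to mimic the induction argument used in Lemma~\ref{lem:pertubation_of_neuron_res_sub}, but to exploit the fact that the whole-network recursion carries no mask terms $M^{(h)}_{v,t}$. Concretely, I would induct on the depth $h$ with hypothesis $\|\vect{x}^{(h)}_{t}-\vect{x}^{(h)}_{0}\|_2 \le g(h)$, where $g(\cdot)$ satisfies the linear recurrence
\begin{align*}
g(h) \;\le\; \Bigl(1+\tfrac{2c_{res}c_{w,0}L}{H}\Bigr) g(h-1) + \tfrac{c_{res}L c_{x,0}}{H} R,
\qquad g(1) = \sqrt{c_{\sigma}}\, L R.
\end{align*}
The base case $h=1$ follows immediately from the definition $\vect{x}^{(1)}_{t} = \sqrt{c_\sigma/m}\,\sigma(\mathcal{W}^{(1)}_t \vect{x})$, the Lipschitz property of $\sigma$, and the assumption $\|\mathcal{W}^{(1)}_t - \mathcal{W}^{(1)}_0\|_F \le \sqrt{m}R$.

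For the inductive step $2 \le h \le H$, I would write
\begin{align*}
\vect{x}^{(h)}_{t}-\vect{x}^{(h)}_{0}
&= \bigl(\vect{x}^{(h-1)}_{t}-\vect{x}^{(h-1)}_{0}\bigr)
+ \tfrac{c_{res}}{H\sqrt{m}}\Bigl[\sigma(\mathcal{W}^{(h)}_t \vect{x}^{(h-1)}_{t}) - \sigma(\mathcal{W}^{(h)}_0 \vect{x}^{(h-1)}_{0})\Bigr],
\end{align*}
and split the bracketed term via the triangle inequality into $\sigma(\mathcal{W}^{(h)}_t \vect{x}^{(h-1)}_{t})-\sigma(\mathcal{W}^{(h)}_t \vect{x}^{(h-1)}_{0})$ and $\sigma(\mathcal{W}^{(h)}_t \vect{x}^{(h-1)}_{0})-\sigma(\mathcal{W}^{(h)}_0 \vect{x}^{(h-1)}_{0})$. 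Using $L$-Lipschitzness of $\sigma$, submultiplicativity, the operator-norm bound $\|\mathcal{W}^{(h)}_t\|_2 \le \|\mathcal{W}^{(h)}_0\|_2 + \|\mathcal{W}^{(h)}_t-\mathcal{W}^{(h)}_0\|_F \le (c_{w,0}+R)\sqrt{m}$, the initialization bound $\|\vect{x}^{(h-1)}_0\|_2 \le c_{x,0}$ from Lemma~\ref{lem:init_norm_res_global}, and the perturbation hypothesis $\|\mathcal{W}^{(h)}_t-\mathcal{W}^{(h)}_0\|_F \le \sqrt{m}R$, one obtains exactly the claimed recurrence, crucially \emph{without} the additional $\tfrac{c_{res}Lc_{w,0}c_{x,0}}{H}$ term that arose in Lemma~\ref{lem:pertubation_of_neuron_res_sub} from handling $\|\sigma(\mathcal{W}^{(h)}_0 \vect{x}^{(h-1)}_0)(M^{(h)}_{v,t}-1)\|_2$.

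Finally, solving the linear recurrence by unrolling and using $(1 + 2c_{res}c_{w,0}L/H)^H \le e^{2c_{res}c_{w,0}L}$ gives
\begin{align*}
g(H) \;\le\; \Bigl(\sqrt{c_{\sigma}}\, L + \tfrac{c_{x,0}}{c_{w,0}}\Bigr) e^{2c_{res}c_{w,0}L}\, R,
\end{align*}
which is the stated $c_x R$ bound; the $\tfrac{c_{x,0}}{c_{w,0}}$ term arises from summing the geometric series of the inhomogeneous contributions. The argument is essentially a bookkeeping adaptation of the sub-network lemma, so the main obstacle is not technical but conceptual: one must verify that the averaged/aggregated weights $\mathcal{W}^{(h)}_t = \sum_v \mat{W}^{(h)}_{v,\ell,t}M^{(h)}_{v,t}/\sum_v M^{(h)}_{v,t}$ indeed satisfy the hypothesized perturbation bound $\|\mathcal{W}^{(h)}_t-\mathcal{W}^{(h)}_0\|_F \le \sqrt{m}R$ when the sub-network weights do, which follows from convexity of $\|\cdot\|_F$ and will be used downstream in tying the sub-network analysis to the global-model analysis.
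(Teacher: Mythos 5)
Your proposal is correct and follows essentially the same route as the paper, which simply defers to Lemma C.3 of Du et al.\ — itself exactly this induction on depth with the recurrence $g(h)\le(1+\tfrac{2c_{res}c_{w,0}L}{H})g(h-1)+\tfrac{c_{res}Lc_{x,0}}{H}R$, and your observation that the mask-handling term $\tfrac{c_{res}Lc_{w,0}c_{x,0}}{H}$ (and hence the $\tfrac{c_{x,0}}{R}$ contribution to the constant) disappears for the aggregated whole network is precisely what distinguishes this lemma from Lemma~\ref{lem:pertubation_of_neuron_res_sub}. Your closing remark is also well-placed: the hypothesis $\|\mathcal{W}^{(h)}_t-\mathcal{W}^{(h)}_0\|_F\le\sqrt{m}R$ is not proved here but is supplied separately by the convexity argument of Lemma~\ref{lem:weight_sub_whole}.
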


\begin{proof}[Proof of Lemma~\ref{lem:pertubation_of_neuron_res_whole}]
The proof is exactly the same with proof of C.3 in \cite{du2019gradient}
\end{proof}

Next, we characterize how the perturbation on the weight matrices affect $\mat{\hat{G}}^{(H)}$.
\begin{lem}\label{lem:close_to_init_small_perturbation_res_smooth}
	Suppose $\sigma(\cdot)$ is differentiable, $L-$Lipschitz and $\beta-$smooth. Suppose $\sigma(\cdot)$ is $L-$Lipschitz and $\beta-$smooth. Suppose for $h\in[H]$, $\norm{\mathcal{W}^{(h)}_0}_2\le c_{w,0}\sqrt{m}$, $\norm{\vect{a}_0}_2\le a_{2,0}\sqrt{m}$, $\norm{\vect{a}_0}_4\le a_{4,0}m^{1/4}$ , $\frac{1}{c_{x,0}}\le\norm{\vect{x}^{(h)}_0}_2 \le c_{x,0}$, if $\norm{\mathcal{W}^{(h)}_r-\mathcal{W}^{(h)}_0}_F, \norm{\vect{a}_r-\vect{a}_0}_2\le \sqrt{m}R$ where $R \le c \lambda_0H^2n^{-1}$ and $R\le c$ for some small constant $c$ , we have  \begin{align*}
	\norm{\mat{\hat{G}}^{(H)}(t) - \mat{\hat{G}}^{(H)}(0)}_2 \le \frac{\lambda_0}{2}.
	\end{align*}
\end{lem}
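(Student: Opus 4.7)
The plan is to mirror the strategy of Lemma C.4 in \cite{du2019gradient}, but applied to the ``aggregated'' global weights $\mathcal{W}^{(h)}_t$ produced by synchronizing the sub-ResNet weights at round $t$. Since the global forward pass at synchronization step $t$ has exactly the same functional form as a standard (deep) ResNet with weights $\mathcal{W}^{(h)}_t$, the only thing that matters is the hypothesis $\|\mathcal{W}^{(h)}_t - \mathcal{W}^{(h)}_0\|_F \le \sqrt{m}\,R$ and $\|\vect{a}_t - \vect{a}_0\|_2 \le \sqrt{m}\,R$. We will show entrywise closeness of $\mat{\hat{G}}^{(H)}(t)$ to $\mat{\hat{G}}^{(H)}(0)$, then convert it into an operator norm bound via $\|\cdot\|_2 \le \|\cdot\|_F \le n \cdot \max_{ij}|\cdot_{ij}|$.

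The first step is to write $\mat{\hat{G}}^{(H)}_{ij}(t)$ in closed form via backpropagation through the residual structure: it is a sum over $h\in[H]$ of inner products of two chains of the form
\begin{align*}
\vect{a}_t^{\top}\!\!\prod_{l=h+1}^{H}\!\!\Bigl(\mat{I} + \tfrac{c_{res}}{H\sqrt{m}}\,\mat{J}^{(l)}_{i,t}\,\mathcal{W}^{(l)}_t\Bigr)\,\mat{J}^{(h)}_{i,t}\,\vect{x}^{(h-1)}_{i,t},
\end{align*}
and the analogous chain for index $j$. Subtracting the same expression at $t=0$ yields a sum of four kinds of perturbation terms: (i) perturbation of the last-layer weights $\vect{a}_t - \vect{a}_0$; (ii) perturbation of the hidden activations $\vect{x}^{(h-1)}_t - \vect{x}^{(h-1)}_0$; (iii) perturbation of the diagonal Jacobian factors $\mat{J}^{(l)}_{i,t} - \mat{J}^{(l)}_{i,0}$, which comes from the smoothness of $\sigma$; and (iv) perturbation of the matrix product $\prod_l \bigl(\mat{I} + \tfrac{c_{res}}{H\sqrt{m}}\,\mat{J}^{(l)}\,\mathcal{W}^{(l)}\bigr)$.

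For (i) and (ii) I would just plug in the assumptions $\|\vect{a}_t - \vect{a}_0\|_2 \le \sqrt{m}R$ and Lemma~\ref{lem:pertubation_of_neuron_res_whole} (which gives $\|\vect{x}^{(h)}_t - \vect{x}^{(h)}_0\|_2 \le c_x R$). For (iii), $\beta$-smoothness of $\sigma$ gives $\|\mat{J}^{(l)}_{i,t} - \mat{J}^{(l)}_{i,0}\|_2 \le \beta \|\mathcal{W}^{(l)}_t \vect{x}^{(l-1)}_{i,t} - \mathcal{W}^{(l)}_0 \vect{x}^{(l-1)}_{i,0}\|_2$, which splits into a weight-perturbation and an activation-perturbation piece and is again controlled by $R$ via the assumptions and Lemma~\ref{lem:pertubation_of_neuron_res_whole}. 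The main obstacle, and the heart of the argument, is (iv): bounding the change in the long matrix product $\prod_{l=h+1}^{H}(\mat{I}+\tfrac{c_{res}}{H\sqrt{m}}\mat{J}^{(l)}\mathcal{W}^{(l)})$. The key is to exploit the $1/H$ scaling: each factor has norm at most $1 + O(c_{res} L c_{w,0}/H)$, so the entire product has norm bounded by a constant independent of $H$ (via $(1+x/H)^H \le e^x$); moreover, by a standard telescoping identity $\prod A_l - \prod B_l = \sum_l (\prod_{l'>l}B_{l'})(A_l-B_l)(\prod_{l'<l}A_{l'})$, the product perturbation is bounded by the sum of $H$ per-layer perturbations, each of order $O(R/H)$, giving a total of $O(R)$.

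Assembling all four contributions, each entry satisfies $|\mat{\hat{G}}^{(H)}_{ij}(t) - \mat{\hat{G}}^{(H)}_{ij}(0)| \le C_H\,R$ where $C_H$ is a constant that depends polynomially on $c_{res}, c_{w,0}, c_{x,0}, a_{2,0}, a_{4,0}, L, \beta$ but is independent of $m$ and $n$; the $\tfrac{1}{H^2}$-type gain coming from the two-sided $1/(H\sqrt{m})$ residual scaling (one for the gradient prefactor, one from the $c_{res}/H$ inside the smoothness estimate) gives $|\mat{\hat{G}}^{(H)}_{ij}(t)-\mat{\hat{G}}^{(H)}_{ij}(0)| \lesssim R/H^2$ up to constants. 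Therefore
\begin{align*}
\|\mat{\hat{G}}^{(H)}(t) - \mat{\hat{G}}^{(H)}(0)\|_2 \,\le\, \|\mat{\hat{G}}^{(H)}(t) - \mat{\hat{G}}^{(H)}(0)\|_F \,\le\, n \cdot \max_{ij}\bigl|\mat{\hat{G}}^{(H)}_{ij}(t) - \mat{\hat{G}}^{(H)}_{ij}(0)\bigr| \,\lesssim\, \frac{n R}{H^2}.
\end{align*}
Under the assumption $R \le c\,\lambda_0\,H^2 n^{-1}$ with a sufficiently small absolute constant $c$, this is at most $\lambda_0/2$, completing the proof. The delicate point to watch is keeping the constants independent of $m$: every appearance of $\sqrt{m}$ from $\|\mathcal{W}^{(l)}\|_2 \le (c_{w,0}+R)\sqrt{m}$ must cancel with a $1/\sqrt{m}$ from the residual scaling, which is exactly why the $c_{res}/(H\sqrt{m})$ normalization is essential.
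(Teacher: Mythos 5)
Your high-level plan---apply the argument of Lemma C.4 of \cite{du2019gradient} to the aggregated global weights $\mathcal{W}^{(h)}_t$, bound the Gram matrix entrywise, convert via $\norm{\cdot}_2\le\norm{\cdot}_F\le n\max_{i,j}\abs{\cdot}$, and finish with $R\le c\lambda_0H^2n^{-1}$---is exactly the paper's. But you have misidentified the object $\mat{\hat{G}}^{(H)}$. In this paper (following Du et al.), $\mat{\hat{G}}^{(H)}$ is \emph{not} the full parameter Gram matrix $\sum_{h=1}^{H+1}\mat{\hat{G}}^{(h)}$; it is the single block associated with the last hidden layer, $\mat{\hat{G}}^{(H)}_{ij}=\inner{\partial \hat{u}_i/\partial\mathcal{W}^{(H)},\,\partial \hat{u}_j/\partial\mathcal{W}^{(H)}}=\tfrac{c_{res}^2}{H^2m}\,\vect{x}_{i,t}^{(H-1)\top}\vect{x}_{j,t}^{(H-1)}\sum_{q}a_q(t)^2\sigma'(z_{i,q}(t))\sigma'(z_{j,q}(t))$. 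For $h=H$ the product $\prod_{l=h+1}^{H}(\cdot)$ is empty, so no long matrix-product chains appear at all. The paper's proof therefore needs only three perturbation terms: $I_1$, the activation overlap $\vect{x}_i^{(H-1)\top}\vect{x}_j^{(H-1)}$ controlled by Lemma~\ref{lem:pertubation_of_neuron_res_whole}; $I_2$, the $\sigma'$ factors controlled by $\beta$-smoothness together with the $\ell_4$ bound $\norm{\vect{a}_0}_4\le a_{4,0}m^{1/4}$ (this is where that hypothesis, which your sketch never actually uses, enters); and $I_3$, the change $a_q(t)^2-a_q(0)^2$. Each is $O(R)$ with the explicit prefactor $c_{res}^2/H^2$.

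This is not merely extra work: as written, your version does not close quantitatively. Each layer-$h$ summand indeed carries a $c_{res}^2/H^2$ prefactor and an $O(R)$ perturbation, but you then sum over $h\in[H]$, so the entrywise bound degrades to $O(R/H)$; then $n\cdot O(R/H)$ under $R\le c\lambda_0H^2n^{-1}$ gives only $O(\lambda_0H)$, which is not $\le\lambda_0/2$. The $1/H^2$ gain you invoke survives only because the lemma concerns the single $h=H$ block. (The telescoping control of $\prod_l\bigl(\mat{I}+\tfrac{c_{res}}{H\sqrt{m}}\mat{J}^{(l)}\mathcal{W}^{(l)}\bigr)$ that you develop is the right tool for Lemmas~\ref{lem:dist_from_wholenetwork} and~\ref{lem:i2}, where full backpropagation chains genuinely appear, but it is not needed here.) Restricted to the $h=H$ block, your argument collapses to the paper's and is correct.
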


\begin{proof}[Proof of Lemma~\ref{lem:close_to_init_small_perturbation_res_smooth}]
	Similar to C.4 in \cite{du2019gradient} Because Frobenius-norm of a matrix is bigger than the operator norm, it is sufficient to bound $\norm{\mat{\hat{G}}^{(H)}(t) - \mat{\hat{G}}^{(H)}(0)}_F$. 
	For simplicity define $z_{i,q}(t) = \mathcal{W}_{t,q}^{{(H)}\top} \vect{x}_{i,t}^{(H-1)}$, we have
	\begin{align*}
	&\abs{\mat{\hat{G}}_{i,j}^{(H)}(t)-\mat{\hat{G}}_{i,j}^{(H)}(0)} \\
	= & \frac{c_{res}^2}{H^2m} \big{|}\vect{x}_{i,t}^{(H-1)\top} \vect{x}_{j,t}^{(H-1)}
	  \sum_{q=1}^{m}a_q(t)^2\sigma'\left(z_{i,q}(t)\right)\sigma'\left(z_{j,q}(t)\right)
	\\
	&-\vect{x}_{i,0}^{(H-1)\top} \vect{x}_{j,0}^{(H-1)}
	 \sum_{q=1}^{m}a_q(0)^2\sigma'\left(z_{i,q}(0)\right)\sigma'\left(z_{j,q}(0)\right)
	\big{|} \\
\le & \frac{c_{res}^2}{H^2}L^2a_{2,0}^2\abs{\vect{x}_{i,t}^{(H-1)\top} \vect{x}_{j,t}^{(H-1)} - \vect{x}_{i,0}^{(H-1)\top} \vect{x}_{j,0}^{(H-1)}} \\
	&+ \frac{c_{res}^2}{H^2}\frac{c_{x,0}^2}{m}\abs{\sum_{q=1}^{m}a_q(0)^2\left(	\sigma'\left(z_{i,q}(t)\right)\sigma'\left(z_{j,q}(t)\right)
		- 
		\sigma'\left(z_{i,q}(0)\right)\sigma'\left(z_{j,q}(0)\right)\right)} \\
&+\frac{c_{res}^2}{H^2m} \abs{\vect{x}_{i,t}^{(H-1)\top} \vect{x}_{j,t}^{(H-1)}}  \abs{\sum_{q=1}^{m} \left(a_q(t)^2-a_q(0)^2\right)
		\sigma'\left(z_{i,q}(t)\right)\sigma'\left(z_{j,q}(t)\right)
	}\\
	\triangleq& \frac{c_{res}^2}{H^2}(I_1^{i,j} + I_2^{i,j} + I_3^{i,j}).
	\end{align*}
	
	For $I_1^{i,j}$, using Lemma~\ref{lem:pertubation_of_neuron_res_whole}, we have \begin{align*}
	I_1^{i,j} = &L^2a_{2,0}^2\abs{\vect{x}_{i,t}^{(H-1)\top} \vect{x}_{j,t}^{(H-1)} - \vect{x}_{i,0}^{(H-1)\top} \vect{x}_{j,0}^{(H-1)}}  \\
	\le & L^2a_{2,0}^2\abs{
		(\vect{x}_{i,t}^{(H-1)}-\vect{x}_{i,0}^{(H-1)}\top) \vect{x}_{j,t}^{(H-1)}} + L^2a_{2,0}^2\abs{
		\vect{x}_{i,0}^{(H-1)\top}(\vect{x}_{i,t}^{(H-1)}-\vect{x}_{i,0}^{(H-1)})}  \\
	\le & c_{x}L^2 a_{2,0}^2R \cdot (c_{x,0} + c_{x} R) + c_{x,0} c_x L^2a_{2,0}^2R \\
	\le &3 c_{x,0} c_x L^2a_{2,0}^2 R,
	\end{align*}
	
	Same with C.4 \cite{du2019gradient}, to bound $I_{2}^{i,j}$, we have 
	\begin{align*}
	I_2^{i,j} =&c_{x,0}^2 \frac{1}{m} \abs{\sum_{q=1}^{m}
		a_q(0)^2\sigma'\left(z_{i,q}(t)\right)\sigma'\left(z_{j,q}(t)\right)
		- 
		a_q(0)^2\sigma'\left(z_{i,q}(0)\right)\sigma'\left(z_{j,q}(0)\right)
	}\\
% 	\le &c_{x,0}^2 \frac{1}{m}\sum_{t=1}^{m}		a_q(0)^2\abs{\left( \sigma'\left(z_{i,t}(t)\right)-\sigma'\left(z_{i,t}(0)\right) \right)\sigma'\left(z_{j,t}(t)\right)} +	a_q(0)^2\abs{\left( \sigma'\left(z_{j,t}(t)\right)-\sigma'\left(z_{j,t}(0)\right) \right)\sigma'\left(z_{i,t}(0)\right)}\\
% 	\le & \frac{\beta L c_{x,0}^2}{m} \left(
% 	\sum_{t=1}^{m} 	a_q(0)^2\abs{z_{i,t}(t)-z_{i,t}(0)}+	a_q(0)^2\abs{z_{j,t}(t)-z_{j,t}(0)}
% 	\right) \\
	\le & \frac{\beta La_{4,0}^2 c_{x,0}^2}{\sqrt{m}}\left(\sqrt{\sum_{q=1}^{m}\abs{z_{i,q}(t)-z_{i,q}(0)}^2}+\sqrt{\sum_{q=1}^{m}\abs{z_{j,q}(t)-z_{j,q}(0)}^2}\right) .
	\end{align*}
	Using the same proof for Lemma~\ref{lem:pertubation_of_neuron_res_whole}, it is easy to see \begin{align*}
	\sum_{q=1}^{m}\abs{z_{i,q}(t)-z_{i,q}(0)}^2 \le \left(2c_xc_{w,0}+c_{x,0}\right)^2L^2mR^2 .
	\end{align*}
	Thus
	\begin{align*}
	I_2^{i,j}\le 2\beta c_{x,0}^2 \left(2c_xc_{w,0}+c_{x,0}\right)L^2 R .
	\end{align*}
	The bound of $I_3^{i,j}$ is the same to that $I_3^{i,j}$ in \cite{du2019gradient} C.4,
		\begin{align*}
	I_3^{i,j}
	&\le 12L^2c_{x,0}^2 a_{2,0}R.
	\end{align*}
	Therefore we can bound the perturbation\begin{align*}
	\norm{\mat{\hat{G}}^{(H)}(t) - \mat{\hat{G}}^{(H)}(0)}_F=&\sqrt{\sum_{(i,j)}^{{n,n}} \abs{\mat{\hat{G}}_{i,j}^{(H)}(t)-\mat{\hat{G}}_{i,j}^{(H)}(0)}^2} \\
	&\le \frac{c_{res}^2}{H^2}\sqrt{n^2(3 c_{x,0} c_xL^2a_{2,0}^2 R+2\beta c_{x,0}^2 \left(2c_xc_{w,0}+c_{x,0}\right)L^2 R+12L^2c_{x,0}^2 a_{2,0}R}) \\
	&= \frac{c_{res}^2}{H^2}n(3 c_{x,0} c_xL^2a_{2,0}^2 R+2\beta c_{x,0}^2 \left(2c_xc_{w,0}+c_{x,0}\right)L^2 R+12L^2c_{x,0}^2 a_{2,0}R) \\. \\
	\end{align*}
	Plugging in the bound on $R$, we have the desired result.
\end{proof}

Now we prove theorem \ref{thm:resist_gd} by induction, assume the condition \ref{cond:linear_converge_resist}, we want to bound the change of weight to satisfy lemma \ref{lem:close_to_init_small_perturbation_res_smooth} and then we want to show 
$2 \left(\vect{y}-\vect{\hat{u}}(t)\right)^\top(\vect{I'}_1(t)-\vect{I}_1(t))$,$-2\left(\vect{y}-\vect{\hat{u}}(t)\right)^\top\vect{I}_2(t)$ and  $\norm{\vect{\hat{u}}(t+1)-\vect{\hat{u}}(t)}_2^2$  are proportional to $\eta^2 \norm{\vect{y}-\vect{u}(t)}_2^2$
so if we set $\eta$ sufficiently small, this term is smaller than $\eta \lambda_{\min}\left(\mat{\hat{G}}^{(H)}(t)\right)\norm{\vect{y}-\vect{u}(t)}_2^2$ and thus the loss function decreases with a linear rate.
\begin{lem}\label{lem:dist_from_init_resnet_sub}
	If Condition~\ref{cond:linear_converge_resist} holds for $t'=0,\ldots,t-1$, we have for any $1  \le v \le S$, $0 \le l_t \le \ell$
	\begin{align*}
	&\norm{\mat{W}^{(h)}_{v,l_t,t}-\mathcal{W}^{(h)}_0}_F, \norm{\vect{a}_{v,l_t,t}-\vect{a}_0}_2\le  R'\sqrt{m},\\
	&\norm{\mat{W}^{(h)}_{v,l_t,t}-\mat{W}^{(h)}_{v,l_t-1,t}}_F,\norm{\vect{a}_{v,l_t,t}-\vect{a}_{v,l_t-1,t}}_2\le \eta Q'(l_t-1,t),
	\end{align*} where $R'=\frac{16 c_{res}c_{x,0}a_{2,0}Le^{2c_{res}c_{w,0}L} \sqrt{n} \norm{\vect{y}-\vect{u}(0)}_2}{H\lambda_0\sqrt{m}} <c$ for some small constant $c$ , \\$ Q'(l_t,t)= 4c_{res}c_{x,0}a_{2,0}Le^{2c_{res}c_{w,0}L}\sqrt{n} \norm{\vect{y}-\vect{u}_{t,l_t}}_2/H$ and \\$ Q'(t)= 4c_{res}c_{x,0}a_{2,0}Le^{2c_{res}c_{w,0}L}\sqrt{n} \norm{\vect{y}-\vect{\hat{u}}_{t}}_2/H$.
\end{lem}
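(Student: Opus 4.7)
The plan is to prove the lemma by a nested induction: an outer induction on the global synchronization index $t$, with the outer hypothesis that Condition~\ref{cond:linear_converge_resist} holds through round $t-1$ and that the stated bounds hold for every $t' < t$; and an inner induction on the local step $l_t$ within round $t$. The base case $t = 0$, $l_t = 0$ is immediate because every weight coincides with its random initialization.

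First, I observe that at the start of round $t$ the worker weights $\mat{W}^{(h)}_{v,0,t}$ equal the freshly aggregated $\mathcal{W}^{(h)}_{t-1}$, which is itself an average of matrices within $R'\sqrt{m}$ of $\mathcal{W}^{(h)}_0$ by the outer hypothesis; so the inner induction hypothesis $\norm{\mat{W}^{(h)}_{v,l_t,t} - \mathcal{W}^{(h)}_0}_F \le R'\sqrt{m}$ holds at $l_t = 0$. Under this hypothesis, Lemma~\ref{lem:pertubation_of_neuron_res_sub} yields the activation bound $\norm{\vect{x}^{(h)}_{i,v,l_t,t}}_2 \le c_{x,0} + c'_x R' \le 2 c_{x,0}$ at every depth $h$ and every data index $i$, a bound on which all subsequent estimates hinge.

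Next, I would derive the per-step displacement bound directly from the closed-form gradient displayed earlier in Section~\ref{sec:proof}. Applying Cauchy--Schwarz to the sum over data indices extracts a factor $\sqrt{n}\,\norm{\vect{y} - \vect{u}_{t,l_t}}_2$. The activation factor is bounded by $2 c_{x,0}$; the last-layer vector $\vect{a}_{v,l_t,t}$ satisfies $\norm{\vect{a}_{v,l_t,t}}_2 \le (a_{2,0} + R')\sqrt{m}$; and the product of Jacobian--weight matrices is controlled by the standard ResNet estimate $\prod_{l = h+1}^{H}\bigl(1 + 2 c_{res} c_{w,0} L / H\bigr) \le e^{2 c_{res} c_{w,0} L}$. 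Collecting these factors together with the prefactor $c_{res}/(H\sqrt{m})$ and multiplying by $\eta$ yields $\norm{\mat{W}^{(h)}_{v,l_t,t} - \mat{W}^{(h)}_{v,l_t-1,t}}_F \le \eta Q'(l_t - 1, t)$, with the constant $4 c_{res} c_{x,0} a_{2,0} L e^{2 c_{res} c_{w,0} L}$ matching the statement; the argument for $\vect{a}_{v,l_t,t}$ is analogous and relies on the analogous last-layer gradient expression.

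Finally, to close the cumulative bound I would telescope
\begin{align*}
\norm{\mat{W}^{(h)}_{v,l_t,t} - \mathcal{W}^{(h)}_0}_F \le \sum_{t' = 0}^{t-1} \norm{\mathcal{W}^{(h)}_{t' + 1} - \mathcal{W}^{(h)}_{t'}}_F + \eta \sum_{k = 0}^{l_t - 1} Q'(k, t),
\end{align*}
and bound the right-hand side using the geometric decay of the global loss. A key auxiliary fact is that throughout the $\ell$ local steps of any round $t$, $\norm{\vect{y} - \vect{u}_{t,k}}_2 = O(\norm{\vect{y} - \hat{\vect{u}}(t)}_2)$, i.e.\ the sub-ResNet loss stays comparable to the whole-model loss at the start of the round; this I would prove by combining the activation-perturbation bound above with a one-step estimate showing that the sub-ResNet loss cannot grow by more than a factor $1 + O(\eta)$. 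Substituting the condition's geometric decay $\norm{\vect{y} - \hat{\vect{u}}(t')}_2 \le (1 - \eta\ell\lambda_0/2)^{t'/2}\norm{\vect{y} - \hat{\vect{u}}(0)}_2$ and the theorem's step-size choice $\eta = O(\lambda_0 H^2/(n^2\ell^2 S))$ into this sum produces a geometric series that collapses to at most $R'\sqrt{m}$, closing the outer induction.

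The main obstacle, I expect, is the uniform-in-$k$ coupling $\norm{\vect{y} - \vect{u}_{t,k}}_2 \lesssim \norm{\vect{y} - \hat{\vect{u}}(t)}_2$ between the sub-ResNet and whole-model losses. Because each worker carries a distinct random mask $\mat{M}_{v,t}$, and because the global forward pass is defined with an extra $1/S$ activation scaling in the third section, matching the two forward passes at $l_t = 0$ requires propagating perturbations through all $H$ residual layers and controlling the discrepancy introduced by the discrete averaging. Verifying that $\ell$ subsequent local SGD steps cannot amplify this mismatch then amounts to a discrete Gr\"onwall argument on the local iterates. Once this coupling is in place, the remaining work---plugging the step-size and overparameterization bounds from Theorem~\ref{thm:resist_gd} into the geometric sum and checking that the accumulated constants match $R'$ and $Q'$ exactly---is essentially routine.
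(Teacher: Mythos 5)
Your proposal follows essentially the same route as the paper's proof: a nested induction (outer on $t$, inner on $l_t$), the per-step displacement bounded via the explicit gradient formula with the product estimate $\prod_{k}\left(1+\tfrac{2c_{res}c_{w,0}L}{H}\right)\le e^{2c_{res}c_{w,0}L}$ and a Cauchy--Schwarz factor $\sqrt{n}\norm{\vect{y}-\vect{u}_{l_t,t}}_2$, the cumulative bound closed by a geometric series, and the restart at each new global round handled by the observation that an average of matrices within $R'\sqrt{m}$ of $\mathcal{W}^{(h)}_0$ stays within $R'\sqrt{m}$ (the paper's Lemma~\ref{lem:weight_sub_whole}). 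The coupling $\norm{\vect{y}-\vect{u}_{t,l_t}}_2\lesssim\norm{\vect{y}-\vect{\hat{u}}(t)}_2$ that you flag as the main obstacle is indeed used by the paper but asserted without proof, so your plan to supply it via a Gr\"onwall-type argument only makes the argument more complete.
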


\begin{proof}[Proof of Lemma~\ref{lem:dist_from_init_resnet_sub}]
	We will prove this corollary by induction. The induction hypothesis is
		\begin{align*}
	\norm{\mat{W}^{(h)}_{v,l_t,t}-\mathcal{W}^{(h)}_0}_F &\le  R'\sqrt{m}\\
	\norm{\vect{a}_{v,l_t,t}-\vect{a}_0}_2 &\le  R'\sqrt{m}.
	\end{align*}
	First we want to prove it holds for $t'=0$ and $0 \le l_t \le \ell$. 
	
	We prove it by induction w.r.t $l_t$:
	It is easy to see that it holds for $t'=0$ and $l_t'=0$.
	Suppose it holds for $0 \le l_t' \le l_t$, we want to prove it holds for $l_t'=l_t+1$
	Following C.5 in \cite{du2019gradient}, note $\norm{\mat{J}_{i,v,l_t,t}^{(k)}}_2 \le L$.
	We have
	\begin{align*} 
	&\norm{\mat{W}^{(h)}_{v,l_t+1,t}-\mat{W}^{(h)}_{v,l_t,t}}_F\\
	\le&  \eta \frac{c_{res}}{H\sqrt{m}} \norm{\vect{a_{v,l_t,t}}}_2 \sum_{i=1}^{n}\abs{y_i-u_{i,v,l_t,t}}\norm{\vect{x}^{(h-1)}_{i,v,l_t,t}}_2 \prod_{k=h+1}^H \norm{\mat{I}+\frac{c_{res}}{H\sqrt{m}}\mat{J}_{i,v,l_t,t}^{(k)}\mat{W}^{(k)}_{v,l_t,t}M_{v,t}^{(k)} }_2 \norm{\mat{J}_{i,v,l_t,t}^{(k)}}_2 \norm{M_{v,t}^{(h)}}_2 \\
	\le &\eta \frac{Lc_{res}}{H\sqrt{m}} \norm{\vect{a_{v,l_t,t}}}_2 \sum_{i=1}^{n}\abs{y_i-u_{i,v,l_t,t}}\norm{\vect{x}^{(h-1)}_{i,v,l_t,t}}_2 \prod_{k=h+1}^H \norm{\mat{I}+\frac{c_{res}}{H\sqrt{m}}\mat{J}_{i,v,l_t,t}^{(k)}\mat{W}^{(k)}_{v,l_t,t}M_{v,t}^{(k)} }_2  \\
	\end{align*}
	
	Further
	\begin{align*} 
	&\prod_{k=h+1}^H \norm{\mat{I}+\frac{c_{res}}{H\sqrt{m}}\mat{J}_{i,v,l_t,t}^{(k)}\mat{W}^{(k)}_{v,l_t,t} M_{v,t}^{(k)}}_2 \\
	\le & \prod_{k=h+1}^H \norm{\mat{I}}_2+\norm{\frac{c_{res}}{H\sqrt{m}}\mat{J}_{i,v,l_t,t}^{(k)}\mat{W}^{(k)}_{v,l_t,t} M_{v,t}^{(k)}}_2 \\
	\le & \prod_{k=h+1}^H \norm{\mat{I}}_2+\frac{c_{res}}{H\sqrt{m}}\norm{\mat{J}_{i,v,l_t,t}^{(k)}}_2\norm{\mat{W}^{(k)}_{v,l_t,t}}_2\norm{ M_{v,t}^{(k)}}_2 \\
	\le & \prod_{k=h+1}^H \norm{\mat{I}}_2+\frac{c_{res}L}{H\sqrt{m}}(\norm{\mathcal{W}^{(k)}_0}_F+\norm{\mat{W}^{(k)}_{v,l_t,t}-\mathcal{W}^{(k)}_0}_F) \\
	\le & \prod_{k=h+1}^H 1+\frac{c_{res}L}{H}(c_{w,0}+R') \\
	\le & \prod_{k=h+1}^H 1+\frac{c_{res}L}{H}2c_{w,0} \\
	\le & e^{2c_{res}x_{w,0}L}
	\end{align*}

	Thus
	\begin{align*} 
	&\norm{\mat{W}^{(h)}_{v,l_t+1,t}-\mat{W}^{(h)}_{v,l_t,t}}_F\\
	\le &\eta \frac{Lc_{res}}{H\sqrt{m}} \norm{\vect{a}_{v,l_t,t}}_2 \sum_{i=1}^{n}\abs{y_i-u_i(s)}\norm{\vect{x}^{(h-1)}_{i,v,l_t,t}}_2 \prod_{k=h+1}^H \norm{\mat{I}+\frac{c_{res}}{H\sqrt{m}}\mat{J}_{i,v,l_t,t}^{(k)}\mat{W}^{(k)}_{v,l_t,t} M_{v,t}^{(k)}}_2\\
	\le &\eta \frac{Lc_{res}}{H\sqrt{m}} \norm{\vect{a}_{v,l_t,t}}_2 \sum_{i=1}^{n}\abs{y_i-u_i(s)}\norm{\vect{x}^{(h-1)}_{i,v,l_t,t}}_2 e^{2c_{res}x_{w,0}L}\\
	\le &\eta c_{res}(c_{x,0}+c_{x}R')La_{2,0}e^{2c_{res}c_{w,0}L}\sqrt{n} \norm{\vect{y}-\vect{u}_{l_t,t}}_2/H\\
	\le &3\eta c_{res}c_{x,0}La_{2,0}e^{2c_{res}c_{w,0}L}\sqrt{n} \norm{\vect{y}-\vect{u}_{l_t,t}}_2/H\\
	\le & \eta Q'(l_t,t)\\
	\le& (1-\frac{\eta \lambda_0}{2})^{s/2}\frac{1}{4}\eta \lambda_0 R'\sqrt{m}
	\end{align*}
		Similarly, we have \begin{align*}
	\norm{\vect{a}_{v,l_t+1,t}-\vect{a}_{v,l_t,t}}_2 \le& 3\eta c_{x,0} \sum_{i=1}^{n}\abs{y_i-u_{l_t,t}}\\
	\le& \eta Q'(l_t,t)\\
	\le& (1-\frac{\eta \lambda_0}{2})^{l_t/2}\frac{1}{4}\eta \lambda_0 R'\sqrt{m}.
	\end{align*}
	
	Thus
	\begin{align*}
	&\norm{\mat{W}^{(h)}_{v,l_t+1,t}-\mathcal{W}^{(h)}_0}_F\\
	\le& \norm{\mat{W}^{(h)}_{v,l_t+1,t}-\mathcal{W}^{(h)}_{v,l_t,t}}_F+\norm{\mat{W}^{(h)}_{v,l_t,t}-\mathcal{W}^{(h)}_0}_F\\
	\le&\sum_{l_t'=0}^{l_t} \eta (1-\frac{\eta \lambda_0}{2})^{l_t'/2}\frac{1}{4}\eta \lambda_0 R'\sqrt{m}.\\
	\end{align*}
		Similarly,
	\begin{align*}
	&\norm{\vect{a}_{v,l_t+1,t}-\vect{a}_0}_2\\
	\le&\sum_{l_t'=0}^{l_t} \eta (1-\frac{\eta \lambda_0}{2})^{l_t'/2}\frac{1}{4}\eta \lambda_0 R'\sqrt{m}.\\
	\end{align*}
	
	Now suppose the hypothesis hold for t'=0,1..,t and for $0 \le l_t \le \ell$. We want to prove for $t'=t+1$, the hypothesis holds.
	By Lemma ~\ref{lem:weight_sub_whole}, we know $\norm{\mathcal{W}^{(h)}_{t}-\mathcal{W}^{(h)}_0}_F \le \sqrt{m} R'$ 
	Thus, $\norm{\mat{W}^{(h)}_{v,l_t=0,t+1}-\mathcal{W}^{(h)}_0}_F \le \sqrt{m} R'$
	Thus, by using the same induction on $l_t$ above, we can prove the hypothesis for $t+1$.
	
\end{proof}

\begin{lem}\label{lem:weight_sub_whole}
Assume 
\begin{align*}
% &\norm{\mathcal{W}^{(h)}_0}_F\le c_{w,0}\sqrt{m} \\
&\norm{\mat{W}^{(h)}_{v,l_t,t}-\mathcal{W}^{(h)}_0}_F, \norm{\vect{a}_{v,l_t,t}-\vect{a}_0}_2 \le \sqrt{m} R' \\
\end{align*} 
We have
\begin{align*}
\norm{\mathcal{W}^{(h)}_{t}-\mathcal{W}^{(h)}_0}_F,\norm{\vect{a}_{t}-\vect{a}_0}_2 \le \sqrt{m} R'
\end{align*} 

\end{lem}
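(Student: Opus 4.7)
The plan is to observe that this lemma is purely a convex-combination/triangle-inequality statement about the aggregation operator, with no dynamics or random-initialization machinery needed. From the definitions given right before the statement, the post-synchronization global weights are
\begin{align*}
\mathcal{W}^{(h)}_{t} = \frac{\sum_{v=1}^{S} \mat{W}^{(h)}_{v,\ell,t}\, M^{(h)}_{v,t}}{\sum_{v=1}^{S} M^{(h)}_{v,t}}, \qquad \vect{a}_{t} = \frac{1}{S}\sum_{v=1}^{S}\vect{a}_{v,\ell,t},
\end{align*}
so each is a weighted average of the per-worker outputs of local training. The key trick is to write $\mathcal{W}^{(h)}_0$ itself as the \emph{same} weighted average of the constant value $\mathcal{W}^{(h)}_0$, i.e.\ $\mathcal{W}^{(h)}_0 = \big(\sum_v M^{(h)}_{v,t}\mathcal{W}^{(h)}_0\big)/\big(\sum_v M^{(h)}_{v,t}\big)$ (assuming $\sum_v M^{(h)}_{v,t}\ge 1$, which follows from the construction in Section~\ref{subnet_sec} enforcing that every partitionable block is assigned to at least one sub-ResNet).

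With that rewriting, I would subtract inside the averaging:
\begin{align*}
\mathcal{W}^{(h)}_{t} - \mathcal{W}^{(h)}_0 = \frac{\sum_{v=1}^{S} M^{(h)}_{v,t}\big(\mat{W}^{(h)}_{v,\ell,t} - \mathcal{W}^{(h)}_0\big)}{\sum_{v=1}^{S} M^{(h)}_{v,t}},
\end{align*}
and then apply the triangle inequality for the Frobenius norm together with the hypothesis $\norm{\mat{W}^{(h)}_{v,\ell,t}-\mathcal{W}^{(h)}_0}_F \le \sqrt{m}\,R'$ uniformly over $v$. Because the coefficients $M^{(h)}_{v,t}/\sum_{v'} M^{(h)}_{v',t}$ are nonnegative and sum to one, the resulting bound is simply $\sqrt{m}\,R'$. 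The argument for $\vect{a}_t$ is identical, using uniform weights $1/S$ instead of $M^{(h)}_{v,t}/\sum_{v'}M^{(h)}_{v',t}$, and the hypothesis $\norm{\vect{a}_{v,\ell,t}-\vect{a}_0}_2 \le \sqrt{m}\,R'$.

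There is no real obstacle here — this is the standard observation that a convex combination of points lying in a ball around a center also lies in that ball. The only thing to be slightly careful about is the degenerate case $\sum_v M^{(h)}_{v,t} = 0$, which is ruled out by the sub-ResNet construction (and by the minimum-depth rule in Section~\ref{S:supp_tech}) that ensures every partitioned layer has at least one owner per round. This lemma is exactly what is needed to bridge the per-worker perturbation bound of Lemma~\ref{lem:dist_from_init_resnet_sub} (used to verify Lemma~\ref{lem:pertubation_of_neuron_res_sub}) with the whole-network perturbation bound required by Lemma~\ref{lem:pertubation_of_neuron_res_whole} and Lemma~\ref{lem:close_to_init_small_perturbation_res_smooth}, so the same constant $R'$ propagates to the aggregated model without any loss.
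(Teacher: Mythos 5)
Your proposal is correct and matches the paper's own proof exactly: both rewrite $\mathcal{W}^{(h)}_0$ as the same mask-weighted average of itself, pull the difference inside the sum, and apply the triangle inequality with convex-combination coefficients (and uniform weights $1/S$ for $\vect{a}_t$). Your additional remark about the degenerate case $\sum_v M^{(h)}_{v,t}=0$ is a small bonus the paper leaves implicit.
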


\begin{proof}[Proof of Lemma~\ref{lem:weight_sub_whole}]
\begin{align*}
\norm{\mathcal{W}^{(h)}_{t}-\mathcal{W}^{(h)}_0}_F &= \norm{\frac{\sum_{v=1}^{S} \mat{W}^{(h)}_{v,\ell,t}M^{(h)}_{v,t}}{\sum_{v=1}^{S}M^{(h)}_{v,t}} - \mathcal{W}^{(h)}_0}_F \\
& \le \frac{\sum_{v: M^{(h)}_{v,t}=1} \norm{\mat{W}^{(h)}_{v,\ell,t}-\mathcal{W}^{(h)}_0}_F}{\sum_{v=1}^{S}M^{(h)}_{v,t}} \\
& \le \sqrt{m} R'
\end{align*} 
Similarly,
\begin{align*}
\norm{\vect{a}_{t}-\vect{a}_0}_2 & \le \frac{\sum_{v=1}^{S} \norm{\vect{a}_{v,l_t,t}-\vect{a}_0}_2}{S} \\
& \le \sqrt{m} R'
\end{align*} 
\end{proof}

\begin{lem}\label{lem:dist_from_wholenetwork}
If Condition~\ref{cond:linear_converge_resist} holds for $t'=0,\ldots,t-1$ and $\eta\le c\lambda_0H^2n^{-2} \ell^{-2} S^{-1}$ for some small constant $c$, we have
$\norm{\vect{I'}^{i}_1(t)-\vect{I}^{i}_1(t)}_2 \le  C_{I_1}^{*} \eta^2 \norm{y_i-\hat{u}_{i,t-1}}_2$ where $C_{I_1}^{*}$ is a constant and thus $\norm{\vect{I'}_1(t)-\vect{I}_1(t)}_2 \le \frac{1}{16}\eta \lambda_0 \norm{\vect{y}-\vect{\hat{u}}(k)}_2$.
\end{lem}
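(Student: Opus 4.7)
\textbf{Proof plan for Lemma~\ref{lem:dist_from_wholenetwork}.}
The plan is to view $\vect{I}_1(t) - \vect{I}'_1(t)$ as the inner product of $\hat{u}'_i(\theta(t))$ with the discrepancy between the actual \texttt{ResIST} update $\theta(t+1)-\theta(t)$ and the idealized centralized update $-\eta\ell L'(\theta(t))$. Using the update rule
\[
\mathcal{W}^{(h)}_{t+1} - \mathcal{W}^{(h)}_{t} = -\eta \frac{\sum_{v=1}^{S}\sum_{l_t=0}^{\ell-1}\tfrac{\partial L}{\partial \mat{W}^{(h)}_{v,l_t,t}}}{\sum_{v=1}^{S} M^{(h)}_{v,t}},
\]
the discrepancy decomposes into two sources: (a) a \emph{masking/architecture} error, because at step $l_t=0$ each worker evaluates the loss gradient on a sub-ResNet (with masks $M^{(h)}_{v,t}$) rather than on the full network, and (b) a \emph{drift} error, because for $l_t\ge 1$ the local gradients are evaluated at $\mat{W}^{(h)}_{v,l_t,t}$ rather than at the synchronized weights $\mathcal{W}^{(h)}_t$.

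For (a), I would compare the sub-ResNet gradient at $l_t=0$ with the whole-network gradient of $L$ at $\theta(t)$, using the structural identity that both gradients share the same $\frac{c_{\mathrm{res}}}{H\sqrt{m}}$ prefactor together with the product form $\prod_{l=h+1}^{H}(\mat{I}+\frac{c_{\mathrm{res}}}{H\sqrt{m}}\mat{J}\mat{W}M)$. Lemmas~\ref{lem:pertubation_of_neuron_res_sub} and \ref{lem:pertubation_of_neuron_res_whole} already bound the forward activations $\vect{x}^{(h)}_{v,0,t}$ and $\vect{x}^{(h)}_t$ in terms of $R'$; I would show that each factor in the Jacobian product is equal up to $O(c_{\mathrm{res}}/H)$ corrections, so that after taking the product over $H$ layers and averaging over the $\sum_v M^{(h)}_{v,t}$ workers the masks drop out to leading order and the residual is $O(1/H)$ in operator norm, times the usual $\sqrt{n}\norm{\vect{y}-\vect{\hat{u}}(t)}_2$ scaling of the gradient.

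For (b), I would use Lemma~\ref{lem:dist_from_init_resnet_sub} together with the step bound $\norm{\mat{W}^{(h)}_{v,l_t,t}-\mat{W}^{(h)}_{v,l_t-1,t}}_F \le \eta Q'(l_t-1,t)$. A Lipschitz/perturbation argument on the per-layer Jacobians (of the same type used in Lemma~\ref{lem:close_to_init_small_perturbation_res_smooth}) shows that the gradient at local step $l_t$ differs from the gradient at $l_t=0$ by an amount of order $\eta l_t$ (times constants involving $c_{x,0}$, $c_{w,0}$, $a_{2,0}$, $L$, $\beta$), multiplied by $\sqrt{n}\norm{\vect{y}-\vect{\hat{u}}(t)}_2/H$. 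Summing over $l_t=1,\dots,\ell$ and over $v=1,\dots,S$ (with the normalization $1/\sum_v M^{(h)}_{v,t}\le 1$) yields an aggregate discrepancy of order $\eta^2\ell^2 S\cdot C/H^2$ times $\sqrt{n}\norm{\vect{y}-\vect{\hat{u}}(t)}_2$. Taking the inner product with $\hat{u}'_i(\theta(t))$, which has norm $O(\sqrt{n})$ uniformly under the perturbation bounds, and then the vector norm over $i\in[n]$, gives a factor of $n$ and produces $\norm{\vect{I}'_1(t)-\vect{I}_1(t)}_2 \le C_{I_1}^{*}\eta^2\ell^2 S n^2/H^2 \cdot \norm{\vect{y}-\vect{\hat{u}}(t)}_2$. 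Plugging in $\eta \le c\lambda_0 H^2 n^{-2}\ell^{-2} S^{-1}$ collapses this to the advertised $\tfrac{1}{16}\eta\lambda_0\norm{\vect{y}-\vect{\hat{u}}(t)}_2$.

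The main obstacle is part (a): quantifying how subnetwork gradients, which use the binary masks $M^{(h)}_{v,t}$ inside a long product of $H$ near-identity Jacobian factors, reconstruct the full-network gradient after averaging across workers. The residual scaling $c_{\mathrm{res}}/(H\sqrt{m})$ is essential because it makes every omitted layer contribute only $O(1/H)$, so that the telescoping of the product up to $H$ factors yields $e^{O(c_{\mathrm{res}} c_{w,0}L)}$-size constants independent of $H$; without this cancellation the masking error would dominate the $\eta^2$ scaling. Once this step is handled, the drift bound (b) is a mostly mechanical application of the already-established Lemmas~\ref{lem:pertubation_of_neuron_res_sub}, \ref{lem:pertubation_of_neuron_res_whole}, and \ref{lem:dist_from_init_resnet_sub}, together with the step-size condition.
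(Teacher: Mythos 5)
Your overall skeleton matches the paper's: write $\vect{I}^{i}_1(t)-\vect{I'}^{i}_1(t)$ as the inner product of $\hat{u}'_i(\params(t))$ with the gap between the aggregated local updates and $-\eta\ell L'(\params(t))$, bound that gap layer by layer with the perturbation machinery, and invoke the step-size condition at the end. Your part (b) -- controlling the drift of the local iterates via Lemma~\ref{lem:dist_from_init_resnet_sub} and summing over $l_t$ and $v$ -- is essentially what the paper does: it asserts bounds $\norm{\mathcal{W}^{(l)}_{t-1}-\mat{W}^{(l)}_{v,l_t,t}}_F\le\eta\ell Q'(0,t)$ and analogous $O(\eta\ell Q'(0,t))$ bounds for $\vect{x}$, $\mat{J}$, $\vect{a}$ and $u$, then feeds them into Lemma G.1 of \cite{du2019gradient}.

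The genuine gap is in your part (a), and as written it is fatal to the stated bound. You estimate the mask/architecture mismatch as a residual of order $O(1/H)$ in operator norm, obtained by arguing that each masked Jacobian factor deviates by $O(c_{res}/H)$ and that the masks ``drop out to leading order'' after averaging over workers. Two problems. First, the averaging claim has no deterministic content: $\tfrac{1}{\sum_v M^{(h)}_{v,t}}\sum_{v:M^{(h)}_{v,t}=1}$ applied to the masked products is not the unmasked product -- an average of products is not a product of averages -- and the lemma is a statement for a fixed realization of the partition, not an expectation over masks (the concentration argument for the Gram matrix lives at initialization and does not transfer here). Second, and more importantly, even granting an $O(1/H)$ residual, that quantity carries no factor of $\eta$: the update discrepancy is $\eta$ times a bracketed gradient difference, so an $\eta$-independent masking residual contributes a term of order $\eta\,\ell\cdot O(1/H)\cdot\abs{y_i-\hat{u}_{i,t-1}}$ to $\norm{\vect{I'}^{i}_1(t)-\vect{I}^{i}_1(t)}_2$, i.e.\ order $\eta$, not the required $C^{*}_{I_1}\eta^2$. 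The hypothesis $\eta\le c\lambda_0H^2n^{-2}\ell^{-2}S^{-1}$ cannot collapse an $O(\eta/H)$ term into the $\tfrac{1}{16}\eta\lambda_0$ budget. The mechanism the paper actually relies on is different: it treats \emph{every} sub-network quantity at $(v,l_t,t)$ -- activations $\vect{x}^{(h-1)}_{i,v,l_t,t}$, Jacobians $\mat{J}^{(l)}_{i,v,l_t,t}$, predictions $u_{i,v,l_t,t}$ -- as an $O(\eta\ell Q'(0,t))$ perturbation of the corresponding whole-network quantity at round $t-1$ (see also Lemma~\ref{lem:dist_u}), so that the entire bracketed difference is itself proportional to $\eta$ and the outer factor of $\eta$ then produces the advertised $\eta^2$. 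Your plan does not supply perturbation bounds of that form for the masked-versus-unmasked comparison, and without them the conclusion of the lemma does not follow. A minor bookkeeping point: $\norm{\hat{u}'_i(\params(t))}_2$ is $O(1)$ (the paper computes $2c_{res}c_{x,0}a_{2,0}Le^{2c_{res}c_{w,0}L}$), not $O(\sqrt{n})$; the $n$-dependence enters through the sum over data points inside the gradient and through $\norm{\vect{y}-\vect{\hat{u}}}_2=O(\sqrt{n})$.
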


\begin{proof}[Proof of Lemma~\ref{lem:dist_from_wholenetwork}]
\begin{align*}
\norm{\vect{I'}^{i}_1(t)-\vect{I}^{i}_1(t)}_2 & =\norm{\langle \eta \ell L'(\params(t))-(\theta(t+1)-\theta(t)), \hat{u}'_i\left(\params(t)\right) \rangle}_2 \\
	& \le \sum_{h=1}^{H} \norm{\eta \frac{\sum_{v=1}^{S} \sum_{l_t=1}^{\ell} \frac{\partial L}{\partial \mat{W}^{(h)}_{v,l_t,t}}}{\sum_{v=1}^{S}M^{(h)}_{v,t}}-\eta \ell \frac{\partial L}{\partial\mathcal{W}_{t-1}^{(h)}}}_F \norm{\hat{u}'_i\left(\params(t\right)}_2 \\
	& + \norm{\eta \frac{\sum_{v=1}^{S} \sum_{l_t=1}^{\ell} \frac{\partial L}{\partial \vect{a}_{v,l_t,t}}}{S}-\eta \ell \frac{\partial L}{\partial\vect{a}_t}}_2 \norm{\hat{u}'_i\left(\params(t\right)}_2
\end{align*}
	Let $M_{t,h}=\sum_{v=1}^{S}M^{(h)}_{v,t}$
	\begin{align*}
	&\norm{\eta \frac{\sum_{v=1}^{S} \sum_{l_t=1}^{\ell} \frac{\partial L}{\partial \mat{W}^{(h)}_{v,l_t,t}}}{\sum_{v=1}^{S}M^{(h)}_{v,t}}-\eta \ell \frac{\partial L}{\partial\mathcal{W}_{t-1}^{(h)}}}_F \\
	\le &   \eta 1/M_{t,h} \sum_{v=1}^{S} \sum_{l_t=1}^{\ell} \norm{\frac{\partial L}{\partial \mat{W}^{(h)}_{v,l_t,t}}-\frac{\partial L}{\partial\mathcal{W}_{t-1}^{(h)}}}_F \\
	\le & \eta 1/M_{t,h} \sum_{v=1}^{S} \sum_{l_t=1}^{\ell} \big{||} \frac{c_{res}}{H\sqrt{m}}
\sum_{i=1}^{n}(y_i-u_{i,v,l_t,t})\vect{x}_{i,v,l_t,t}^{(h-1)} \cdot
\left[\vect{a}_{v,l_t,t}^\top \prod_{l=h+1}^{H}\left(\mat{I}+\frac{c_{res}}{H\sqrt{m}}\mat{J}_{i,v,l_t,t}^{(l)}\mat{W}_{v,l_t,t}^{(l)} M^{(l)}_{v,t} \right) \mat{J}_{i,v,l_t,t}^{(h)}  M^{(h)}_{v,t}\right] \\
& - \frac{c_{res}}{H\sqrt{m}}
\sum_{i=1}^{n}(y_i-\hat{u}_{i,t-1})\vect{x}_{i,t-1}^{(h-1)} \cdot
\left[\vect{a}_{t-1}^\top \prod_{l=h+1}^{H}\left(\mat{I}+\frac{c_{res}}{H\sqrt{m}}\mat{J}_{i,t-1}^{(l)}\mathcal{W}_{t-1}^{(l)}  \right) \mat{J}_{i,t-1}^{(h)}  \right] \big{||}_F \\
\le & \eta 1/M_{t,h} \sum_{v=1}^{S} \sum_{l_t=1}^{\ell} \frac{c_{res}}{H\sqrt{m}} \sum_{i=1}^{n} \big{||} 
(y_i-u_{i,v,l_t,t})\vect{x}_{i,v,l_t,t}^{(h-1)} \cdot
\left[\vect{a}_{v,l_t,t}^\top \prod_{l=h+1}^{H}\left(\mat{I}+\frac{c_{res}}{H\sqrt{m}}\mat{J}_{i,v,l_t,t}^{(l)}\mat{W}_{v,l_t,t}^{(l)} M^{(l)}_{v,t} \right) \mat{J}_{i,v,l_t,t}^{(h)}  M^{(h)}_{v,t}\right] \\
& - (y_i-\hat{u}_{i,t-1})\vect{x}_{i,t-1}^{(h-1)} \cdot
\left[\vect{a}_{t-1}^\top \prod_{l=h+1}^{H}\left(\mat{I}+\frac{c_{res}}{H\sqrt{m}}\mat{J}_{i,t-1}^{(l)}\mathcal{W}_{t-1}^{(l)}  \right) \mat{J}_{i,t-1}^{(h)}  \right] \big{||}_F \\
	\end{align*}
	Through standard calculations, we have
	\begin{align*}
	\norm{	\mathcal{W}_{t-1}^{(l)}-\mat{W}_{v,l_t,t}^{(l)}}_F \le &\eta \ell Q'(0,t),\\
		\norm{	\vect{a}_{t-1}-\vect{a}_{v,l_t,t}}_F \le &\eta \ell Q'(0,t),\\
	\norm{	\vect{x}_{i,t-1}^{(h-1)}-\vect{x}_{i,v,l_t,t}^{(h-1)}}_F \le &\eta \ell c'_x \frac{Q'(0,t)}{\sqrt{m}},\\
	\norm{	\mat{J}_{i,t-1}^{(l)}-\mat{J}_{i,v,l_t,t}^{(l)}}_F \le &2 \ell \left(c_{x,0}+c_{w,0}c'_x\right)\eta \beta   Q'(0,t),
	\end{align*}
	where $c'_x\triangleq\left(\sqrt{c_{\sigma}}L+\frac{c_{x,0}}{c_{w,0}}+\frac{c_{x,0}}{R}\right)e^{2c_{res}c_{w,0}L}$.
As we know $\norm{y_i-u_{i,v,l_t,t}} \le \norm{y_i-\hat{u}_{i,t-1}} $, suppose $\norm{u_{i,v,l_t,t}-\hat{u}_{i,t-1}} \le C_u $
	
	According to Lemma G.1 in \cite{du2019gradient}, we have
	\begin{align*}
	& \eta 1/M_{t,h} \sum_{v=1}^{S} \sum_{l_t=1}^{\ell} \frac{c_{res}}{H\sqrt{m}} \sum_{i=1}^{n} \big{||} 
(y_i-u_{i,v,l_t,t})\vect{x}_{i,v,l_t,t}^{(h-1)} \cdot
\left[\vect{a}_{v,l_t,t}^\top \prod_{l=h+1}^{H}\left(\mat{I}+\frac{c_{res}}{H\sqrt{m}}\mat{J}_{i,v,l_t,t}^{(l)}\mat{W}_{v,l_t,t}^{(l)} M^{(l)}_{v,t} \right) \mat{J}_{i,v,l_t,t}^{(h)}  M^{(h)}_{v,t}\right] \\
& - (y_i-\hat{u}_{i,t-1})\vect{x}_{i,t-1}^{(h-1)} \cdot
\left[\vect{a}_{t-1}^\top \prod_{l=h+1}^{H}\left(\mat{I}+\frac{c_{res}}{H\sqrt{m}}\mat{J}_{i,t-1}^{(l)}\mathcal{W}_{t-1}^{(l)}  \right) \mat{J}_{i,t-1}^{(h)}  \right] \big{||}_F \\
	\le & \eta 1/M_{t,h} S \ell n 
	\frac{4}{H}c_{res}c_{x,0}La_{2,0}e^{2Lc_{w,0}}(C_u \\& +\eta \ell\frac{Q'(0,t)}{\sqrt{m}}\left(\frac{c_x}{c_{x,0}}+\frac{2}{L}\left(c_{x,0}+c_{w,0}c_x\right)\beta \sqrt{m}+4c_{w,0}\left(c_{x,0}+c_{w,0}c_x\right)\beta+L+1 \right))\norm{y_i-\hat{u}_{i,t-1}}_2\\
	\end{align*}
On the other hand,
\begin{align*}
    \norm{\eta \frac{\sum_{v=1}^{S} \sum_{l_t=1}^{\ell} \frac{\partial L}{\partial \vect{a}_{v,l_t,t}}}{S}-\eta \ell \frac{\partial L}{\partial\vect{a}_t}}_2 & \le   \eta 1/S \sum_{v=1}^{S} \sum_{l_t=1}^{\ell} \norm{\frac{\partial L}{\partial \vect{a}_{v,l_t,t}}-\frac{\partial L}{\partial\vect{a}_t}}_2 \\
    & \le \eta 1/S \sum_{v=1}^{S} \sum_{l_t=1}^{\ell} \sum_{i=1}^{n} \norm{(y_i-u_{i,v,l_t,t})\vect{x}_{i,v,l_t,t}^{(H)}-(y_i-\hat{u}_{i,t})\vect{x}_{i,t-1}^{(H)}}_2 \\
    & \le \eta \ell n (C_{u}+\eta \ell c'_x \frac{Q'(0,t)}{\sqrt{m}}) \norm{y_i-\hat{u}_{i,t-1}}_2
\end{align*}
Also,
\begin{align*}
\norm{\hat{u}'_i\left(\params(t)\right)}_2 & \le \frac{c_{res}}{H\sqrt{m}}\sum_{h=1}^H \norm{\frac{\partial \hat{u}_i\left(\params(t)\right)}{\partial \mathcal{W}^{(h)}_{t-1}}}_2 \\
& = \frac{c_{res}}{H\sqrt{m}}\sum_{h=1}^H \norm{\vect{x}_{i,t-1}^{(h-1)} \cdot
\left[\vect{a}_{t-1}^\top \prod_{l=h+1}^{H}\left(\mat{I}+\frac{c_{res}}{H\sqrt{m}}\mat{J}_{i,t-1}^{(l)}\mathcal{W}_{t-1}^{(l)}  \right) \mat{J}_{i,t-1}^{(h)}  \right]}_2 \\
& \le  \frac{c_{res}}{H\sqrt{m}}\sum_{h=1}^H \norm{\vect{x}_{i,t-1}^{(h-1)}}_2 \norm{\vect{a}_{t-1}}_2 \norm{\prod_{l=h+1}^{H}\left(\mat{I}+\frac{c_{res}}{H\sqrt{m}}\mat{J}_{i,t-1}^{(l)}\mathcal{W}_{t-1}^{(l)}  \right)}_2 \norm{\mat{J}_{i,t-1}^{(h)}}_2 \\
& \le \frac{c_{res}}{H}  H 2 c_{x,0} a_{2,0} L e^{2c_{res}x_{w,0}L} \\
& = 2c_{res}c_{x,0} a_{2,0} L e^{2c_{res}x_{w,0}L}
\end{align*}

Thus, combine all above and also according to Lemma \ref{lem:dist_u}
\begin{align*}
	\norm{\vect{I'}^{i}_1(t)-\vect{I}^{i}_1(t)}_2 & \le \sum_{h=1}^{H} \norm{\eta \frac{\sum_{v=1}^{S} \sum_{l_t=1}^{\ell} \frac{\partial L}{\partial \mat{W}^{(h)}_{v,l_t,t}}}{\sum_{v=1}^{S}M^{(h)}_{v,t}}-\eta \ell \frac{\partial L}{\partial\mathcal{W}_{t-1}^{(h)}}}_2 \norm{\hat{u}'_i\left(\params(t\right)}_2 \\
	& + \norm{\eta \frac{\sum_{v=1}^{S} \sum_{l_t=1}^{\ell} \frac{\partial L}{\partial \vect{a}_{v,l_t,t}}}{S}-\eta \ell \frac{\partial L}{\partial\vect{a}_t}}_2 \norm{\hat{u}'_i\left(\params(t\right)}_2 \\
	& \le C_{I_1}^{*} \eta^2 \norm{y_i-\hat{u}_{i,t-1}} \text{where $C_{I_1}^{*}$ is a constant}
\end{align*}
Using the bound on $\eta$ and following \cite{du2019gradient} $\norm{\vect{y}-\vect{\hat{u}}}_2=O(\sqrt{n})$,
\begin{align*}
\norm{\vect{I'}_1(t)-\vect{I}_1(t)} \le \frac{1}{16}\eta \lambda_0 \norm{\vect{y}-\vect{\hat{u}}(k)}_2
\end{align*}
\end{proof}

\begin{lem}\label{lem:dist_u}
	\begin{align*}
	\norm{u_{i,v,l_t,t}-\hat{u}_{i,t}}_2 \le \eta \ell Q'(0,t) B \text{ where B is a constant}
	\end{align*}
\end{lem}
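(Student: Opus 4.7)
The plan is to bound the deviation of the subnetwork's output during local training from the whole-network reference $\hat{u}_{i,t}$ by combining the telescoping per-iteration parameter bounds in Lemma \ref{lem:dist_from_init_resnet_sub} with a layerwise Gronwall-type argument analogous to Lemmas \ref{lem:pertubation_of_neuron_res_sub} and \ref{lem:pertubation_of_neuron_res_whole}. First I would split the output gap into final-layer and hidden-layer contributions:
\begin{align*}
|u_{i,v,l_t,t} - \hat{u}_{i,t}| \le \norm{\vect{a}_{v,l_t,t} - \vect{a}_t}_2 \cdot \norm{\vect{x}^{(H)}_{i,v,l_t,t}}_2 + \norm{\vect{a}_t}_2 \cdot \norm{\vect{x}^{(H)}_{i,v,l_t,t} - \vect{x}^{(H)}_{i,t}}_2,
\end{align*}
where the activation norm $\norm{\vect{x}^{(H)}_{i,v,l_t,t}}_2 = O(1)$ follows from Lemmas \ref{lem:init_norm_res_global} and \ref{lem:pertubation_of_neuron_res_sub}, and $\norm{\vect{a}_t}_2 = O(\sqrt{m})$ via Lemma \ref{lem:weight_sub_whole}.

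Next I would control the last-layer displacement by summing the per-iteration telescoping bound $\norm{\vect{a}_{v,l_t,t} - \vect{a}_{v,l_t-1,t}}_2 \le \eta Q'(l_t{-}1, t)$ from Lemma \ref{lem:dist_from_init_resnet_sub}. Since the loss $\norm{\vect{y} - \vect{u}_{l_t,t}}_2$ is monotone in $l_t$ (up to constant factors), $Q'(l_t{-}1, t)$ is uniformly bounded by $O(Q'(0,t))$, yielding $\norm{\vect{a}_{v,l_t,t} - \vect{a}_t}_2 = O(\eta l_t Q'(0, t))$. The same argument applied layer-by-layer gives $\norm{\mat{W}^{(h)}_{v,l_t,t} - \mathcal{W}^{(h)}_t}_F = O(\eta l_t Q'(0, t))$.

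The hidden-representation gap is then handled by induction on $h$, mirroring the proof of Lemma \ref{lem:pertubation_of_neuron_res_sub}. At each depth we invoke the Lipschitz property of $\sigma$ to extract two error terms: an inherited input perturbation amplified by the factor $(1 + 2c_{res}c_{w,0}L/H)$, and a weight-drift term bounded by $\frac{c_{res}L}{H\sqrt{m}} \cdot \norm{\mat{W}^{(h)}_{v,l_t,t} - \mathcal{W}^{(h)}_t}_F \cdot \norm{\vect{x}^{(h-1)}_{i,t}}_2$. Unrolling the recursion telescopes into the familiar $e^{2c_{res}c_{w,0}L}$ envelope and yields $\norm{\vect{x}^{(H)}_{i,v,l_t,t} - \vect{x}^{(H)}_{i,t}}_2 = O(\eta l_t Q'(0,t)/\sqrt{m})$. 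Multiplying by $\norm{\vect{a}_t}_2 = O(\sqrt{m})$ and combining with the last-layer term produces the desired bound $\norm{u_{i,v,l_t,t} - \hat{u}_{i,t}}_2 \le \eta \ell Q'(0,t) B$, with $B$ collecting the constants $c_{x,0}$, $a_{2,0}$, $c_{w,0}$, $L$, and $c_{res}$.

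The main obstacle I expect is cleanly handling the mask contribution: at each depth where $M^{(h)}_{v,t} = 0$ the subnetwork skips the residual block entirely while the whole network retains the averaged block, producing a systematic bias of order $c_{res}/H$ per missing layer that, summed over depth, is $O(c_{res})$ and independent of $\eta$. To absorb this term into the $O(\eta \ell Q')$ envelope one must interpret $\hat{u}_{i,t}$ as the subnetwork's own starting representation at $l_t = 0$ (so that the mask bias is zero by construction and only the subsequent parameter drift contributes) or, equivalently, invoke the activation-rescaling scheme of Section \ref{S:supp_tech} so that the reference whole-network prediction is the one that matches the expected sub-ResNet depth. Once this interpretation is fixed the remaining analysis is routine Gronwall propagation through the $H$ residual layers.
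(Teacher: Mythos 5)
Your decomposition of the output gap into a last-layer term and a hidden-representation term,
\begin{align*}
\abs{u_{i,v,l_t,t}-\hat u_{i,t}} \le \norm{\vect{a}_{v,l_t,t}-\vect{a}_t}_2\,\norm{\vect{x}^{(H)}_{i,v,l_t,t}}_2 + \norm{\vect{a}_t}_2\,\norm{\vect{x}^{(H)}_{i,v,l_t,t}-\vect{x}^{(H)}_{i,t}}_2,
\end{align*}
is exactly the route the paper takes: its entire proof is the two-line computation $\norm{u_{i,v,l_t,t}-\hat u_{i,t}}_2=\norm{\vect{a}_{v,l_t,t}^\top\vect{x}^{(H)}_{v,l_t,t}-\vect{a}_t^\top\vect{x}^{(H)}_t}_2\le \eta\,(2a_{2,0}\cdot 3c_{x,0}\,\ell\, Q'(0,t)(1+c_x/\sqrt m))$, with the telescoped $O(\eta\ell Q')$ drift of $\vect a$ and of the hidden activations left implicit. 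So in substance your argument is the paper's argument, spelled out in full.

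The obstacle you flag at the end is genuine, and the paper's proof does not address it. The sub-ResNet forward pass multiplies each residual branch by $M^{(h)}_{v,t}$, while the reference whole network applies every (averaged) block; hence already at $l_t=0$, before any local gradient step, $\vect{x}^{(H)}_{v,0,t}$ and $\vect{x}^{(H)}_{t}$ differ by a mask-induced contribution of order $c_{res}Lc_{w,0}c_{x,0}/H$ per dropped layer --- this is precisely the extra additive term that the recursion in Lemma~\ref{lem:pertubation_of_neuron_res_sub} carries, and the reason its final bound contains the non-vanishing $c_{x,0}/R$ factor. That discrepancy does not scale with $\eta$, so a bound of the form $\eta\ell Q'(0,t)B$ cannot hold for the two networks as literally defined; and the downstream use of this lemma (bounding $C_u$ inside Lemma~\ref{lem:dist_from_wholenetwork}) genuinely needs the $O(\eta)$ scaling. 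One of the reinterpretations you propose is therefore required: either compare $u_{i,v,l_t,t}$ against the \emph{masked} reference network built from $\mathcal{W}_t$ with the same $M^{(h)}_{v,t}$ (so the gap is zero at $l_t=0$ and only parameter drift contributes), or fold the $1/S$ activation rescaling of Section~\ref{S:supp_tech} into the reference model. The paper silently assumes the two forward passes agree at $l_t=0$; your write-up at least names the assumption needed to make that true, so modulo this shared gap your proof matches the paper's.
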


\begin{proof}[Proof of Lemma~\ref{lem:dist_u}]
\begin{align*}
\norm{u_{i,v,l_t,t}-\hat{u}_{i,t}}_2 &= \norm{\vect{a}_{v,l_t,t}^\top \vect{x}^{(H)}_{v,l_t,t}-\vect{a}_{t}^\top \vect{x}^{(H)}_{t}}_2 \\
& \le \eta (2 a_{2,0}  3 c_{x,0}  \ell Q'(0,t) (1 + \frac{c_x}{\sqrt{m}}))
\end{align*}
\end{proof}

\begin{lem}\label{lem:i2}
If Condition~\ref{cond:linear_converge_resist} holds for $t'=0,\ldots,t-1$ and $\eta\le c\lambda_0H^2n^{-2} \ell^{-2} S^{-1}$ for some small constant $c$, we have
$\norm{\vect{I}_2(t)}_2 \le  C_{I_2}^{*} \eta^2 \norm{y_i-\hat{u}_{i,t-1}}_2$ where $C_{I_2}^{*}$ is a constant and thus $\norm{\vect{I}_2(t)}_2 \le \frac{1}{8}\eta \lambda_0 \norm{\vect{y}-\vect{\hat{u}}(k)}_2$.
\end{lem}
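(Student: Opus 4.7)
The plan is to bound $\vect{I}_2(t)$ using its second-order structure in $\theta(t{+}1)-\theta(t)$. Writing $\theta_s := \theta(t) - s(\theta(t)-\theta(t{+}1))$ for the interpolation, Cauchy--Schwarz gives
\[
|I_2^i(t)| \;\le\; \|\theta(t{+}1)-\theta(t)\|\cdot \sup_{s\in[0,1]}\|\hat{u}'_i(\theta(t))-\hat{u}'_i(\theta_s)\|,
\]
so the task reduces to (a) a bound on the total parameter move and (b) a Lipschitz-type bound on the prediction Jacobian along the interpolation segment. This is in direct analogy with the treatment of $\vect{I}_1$ in Lemma \ref{lem:dist_from_wholenetwork}, except that here both factors in the product contribute first-order smallness.

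For (a), aggregating the per-local-step bound from Lemma \ref{lem:dist_from_init_resnet_sub} across $\ell$ iterations and averaging across the $S$ workers (as in Lemma \ref{lem:weight_sub_whole}) gives $\|\mathcal{W}_t^{(h)}-\mathcal{W}_{t-1}^{(h)}\|_F \le \eta \ell\, Q'(t)$ and similarly for $\vect{a}$, where $Q'(t)=O(\sqrt{n}\,\|\vect{y}-\vect{\hat{u}}(t{-}1)\|_2/H)$ with the usual constants. Summing over the $H{+}1$ blocks of $\theta$ yields $\|\theta(t{+}1)-\theta(t)\| = O(\eta\ell\sqrt{H})\,Q'(t)$. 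For (b), the key observation is that $\theta_s$ is a convex combination of $\theta(t)$ and $\theta(t{+}1)$, both of which lie within $R'\sqrt{m}$ of $\theta(0)$, so $\theta_s$ does too; hence Lemmas \ref{lem:pertubation_of_neuron_res_whole} and \ref{lem:close_to_init_small_perturbation_res_smooth} apply at $\theta_s$, controlling feature norms and Jacobian norms there. Writing $\partial \hat{u}_i/\partial\mathcal{W}^{(h)}$ as $\vect{x}_i^{(h-1)}\cdot[\vect{a}^\top\prod_{l=h+1}^H(\mat{I}+\tfrac{c_{res}}{H\sqrt{m}}\mat{J}^{(l)}\mathcal{W}^{(l)})\mat{J}^{(h)}]$ and expanding the difference $\hat{u}'_i(\theta(t))-\hat{u}'_i(\theta_s)$ telescopically — exactly as in the calculation of $\|\mathcal{W}_{t-1}^{(l)}-\mat{W}^{(l)}_{v,l_t,t}\|_F$ in the proof of Lemma \ref{lem:dist_from_wholenetwork} — each telescoped term picks up a factor of $\|\theta(t)-\theta_s\|\le \|\theta(t{+}1)-\theta(t)\|$, divided by $\sqrt{m}$ where appropriate, plus a $\beta$-smoothness factor where the perturbation passes through a Jacobian $\mat{J}^{(l)}$. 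This produces $\|\hat{u}'_i(\theta(t))-\hat{u}'_i(\theta_s)\| = O(\eta\ell)\cdot\sqrt{n}\,\|\vect{y}-\vect{\hat{u}}(t{-}1)\|_2$ times polynomial factors in $n,H$. Multiplying (a) and (b), summing over $i\in[n]$, and folding constants into $C_{I_2}^*$ gives $\|\vect{I}_2(t)\|_2 \le C_{I_2}^*\,\eta^2\,\|\vect{y}-\vect{\hat{u}}(t{-}1)\|_2$; the step-size bound $\eta\le c\lambda_0 H^2/(n^2\ell^2 S)$ then immediately collapses this to $\tfrac{1}{8}\eta\lambda_0\|\vect{y}-\vect{\hat{u}}(t{-}1)\|_2$.

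The main obstacle will be handling the interpolated parameter $\theta_s$: it is not one of the \texttt{ResIST} iterates and carries no mask/averaging structure, so I cannot quote any lemma about it directly — I must argue that every perturbation estimate (feature-map norm, Jacobian norm, residual-product norm) used in the proof of Lemma \ref{lem:close_to_init_small_perturbation_res_smooth} only depends on $\theta$ through the bound $\|\theta-\theta(0)\|_F\le R'\sqrt{m}$ and therefore carries over to $\theta_s$. Coupled with this is the layer-by-layer telescoping of the residual product $\prod_{l=h+1}^H(\mat{I}+\tfrac{c_{res}}{H\sqrt{m}}\mat{J}^{(l)}\mathcal{W}^{(l)})$: every layer contributes two perturbation channels — one through $\mathcal{W}^{(l)}$ and one through $\mat{J}^{(l)}$ (the latter requiring $\beta$-smoothness of $\sigma$) — and one must verify that the accumulated bound still grows only like $e^{2c_{res}c_{w,0}L}$ rather than $H$, so that the final factor in the overall bound is $O(\eta\ell)$ rather than $O(\eta\ell H)$. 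Once this careful telescoping is established, the step-size choice absorbs all remaining polynomial constants exactly as in Lemma \ref{lem:dist_from_wholenetwork}.
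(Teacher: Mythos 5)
Your proposal is correct and follows essentially the same route as the paper: bound the Taylor remainder as a product of the parameter displacement $\|\theta(t{+}1)-\theta(t)\|$ (from Lemmas \ref{lem:dist_from_init_resnet_sub} and \ref{lem:weight_sub_whole}) and a Lipschitz-type bound on $\hat{u}'_i$ along the interpolation segment, obtained by propagating perturbations of $\vect{x}$, $\mat{J}$, $\vect{a}$ through the residual product with the $\tfrac{c_{res}}{H\sqrt{m}}$ prefactor keeping the accumulation at $e^{O(1)}$ rather than $H$. The only cosmetic difference is that you apply one global Cauchy--Schwarz over the concatenated parameters while the paper sums block-by-block over $h=1,\dots,H{+}1$; the $H$-factors cancel identically either way, and your identification of the interpolated point $\theta_s$ (controlled only through $\|\theta_s-\theta(0)\|\le R'\sqrt{m}$ by convexity) as the main technical point matches the paper's treatment.
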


\begin{proof}[Proof of Leamma~\ref{lem:i2}]
\begin{align*}
I_2^i(t) =& \int_{s=0}^{1} \langle \theta(t+1)-\theta(t), \hat{u}'_i\left(\params(t)\right) -\hat{u}'_i\left(\params(t)-s (\params(t)-\params(t+1)) \right) \rangle ds
\end{align*}
Define for $1 \le h \le H$
\begin{align*}
    \hat{u}'^{(h)}_i\left(\params(t)\right) = \frac{\partial \hat{u}(\params(t)) }{\mathcal{W}_{t}^{(h)}}
\end{align*}
And 
\begin{align*}
\hat{u}'^{(H+1)}_i\left(\params(t)\right) = \frac{\partial \hat{u}(\params(t)) }{\vect{a}_t}
\end{align*}
\begin{align*}
	\abs{I_2^i(t)} \le  \max_{0\le s\le 1} & \sum_{h=1}^{H}  \norm{\mathcal{W}_{t}^{(h)}-\mathcal{W}_{t-1}^{(h)}}_F \norm{  \hat{u}'^{(h)}_i\left(\params(t)\right) -\hat{u}'^{(h)}_i\left(\params(t)-s (\theta(t+1)-\theta(t))\right)}_F\\
	& + \norm{\vect{a}_t-\vect{a}_{t-1}}_2 \norm{\hat{u}'^{(H+1)}_i\left(\params(t)\right) -\hat{u}'^{(H+1)}_i\left(\params(t)-s (\theta(t+1)-\theta(t))\right)}_2.
	\end{align*}
From Lemma \ref{lem:dist_from_wholenetwork} and Lemma \ref{lem:dist_from_init_resnet_sub}, 
\begin{align*}
\norm{\mathcal{W}_{t}^{(h)}-\mathcal{W}_{t-1}^{(h)}}_F & \le \eta \ell \hat{Q}'(t-1) \\
\norm{\vect{a}_t-\vect{a}_{t-1}}_2 & \le \eta \ell \hat{Q}'(t-1)
\end{align*}

Let $\vect{x}_{i,t-1,s}^{(l)}$ be the activation of global network with $\mathcal{W}_{t-1,s}=\mathcal{W}_{t-1}-s(\mathcal{W}_{t-1}-\mathcal{W}_{t})$. We similarly define $\mat{J}_{i,t-1,s}^{(l)}$ and $\vect{a}_{t-1,s}$
\begin{align*}
& \norm{  \hat{u}'^{(h)}_i\left(\params(t)\right) -\hat{u}'^{(h)}_i\left(\params(t)-s (\mathcal{W}_{t-1}-\mathcal{W}_{t})\right)}_F \\
\le & \frac{c_{res}}{H\sqrt{m}} \big{||}\vect{x}_{i,t-1,s}^{(h-1)} \cdot
\left[\vect{a}_{t-1,s}^\top \prod_{l=h+1}^{H}\left(\mat{I}+\frac{c_{res}}{H\sqrt{m}}\mat{J}_{i,t-1,s}^{(l)}\mathcal{W}_{t-1,s}^{(l)}  \right) \mat{J}_{i,t-1,s}^{(h)}  \right] \\
& -\vect{x}_{i,t-1}^{(h-1)} \cdot
\left[\vect{a}_{t-1}^\top \prod_{l=h+1}^{H}\left(\mat{I}+\frac{c_{res}}{H\sqrt{m}}\mat{J}_{i,t-1}^{(l)}\mathcal{W}_{t-1}^{(l)}  \right) \mat{J}_{i,t-1}^{(h)}  \right]\big{||}_F \\
\end{align*}

Through similar calculation in Lemma \ref{lem:dist_from_wholenetwork},
\begin{align*}
\norm{\mathcal{W}_{t-1,s}^{(l)}-\mathcal{W}_{t-1}^{(l)}}_F = & s\norm{(\mathcal{W}_{t-1}^{(l)}-\mathcal{W}_{t}^{(l)})}_F \\
\le &  \norm{(\mathcal{W}_{t-1}^{(l)}-\mathcal{W}_{t}^{(l)})}_F \\
\le & \eta \ell \hat{Q}'(t-1)
\end{align*}
\begin{align*}
\norm{\vect{x}_{i,t-1,s}^{(l)}-\vect{x}_{i,t-1}^{(l)}}_2 & \le \frac{c_{res}}{H\sqrt{m}} \norm{\relu{\mathcal{W}^{(l)}_{t-1,s} \vect{x}^{(l-1)}_{t-1,s}} -\relu{\mathcal{W}^{(l)}_{t-1} \vect{x}^{(l-1)}_{t-1}}}_2+\norm{\vect{x}^{(l-1)}_{t-1,s}-\vect{x}^{(l-1)}_{t-1}}_2 \\
& \le  \frac{c_{res}}{H\sqrt{m}} \norm{\relu{\mathcal{W}^{(l)}_{t-1,s} \vect{x}^{(l-1)}_{t-1,s}} -\relu{\mathcal{W}^{(l)}_{t-1,s} \vect{x}^{(l-1)}_{t-1}}}_2 \\ & +\frac{c_{res}}{H\sqrt{m}} \norm{\relu{\mathcal{W}^{(l)}_{t-1,s} \vect{x}^{(l-1)}_{t-1}} -\relu{\mathcal{W}^{(l)}_{t-1} \vect{x}^{(l-1)}_{t-1}}}_2 + \norm{\vect{x}^{(l-1)}_{t-1,s}-\vect{x}^{(l-1)}_{t-1}}_2 \\
& \le \frac{c_{res}L}{H\sqrt{m}} \norm{\mathcal{W}_{t-1,s}^{(l)}}_F\norm{\vect{x}^{(l-1)}_{t-1,s}-\vect{x}^{(l-1)}_{t-1}}_2 \\
& + \frac{c_{res}L}{H\sqrt{m}} \norm{\mathcal{W}_{t-1,s}^{(l)}-\mathcal{W}_{t-1}^{(l)}}_F \norm{\vect{x}^{(l-1)}_{t-1}}_2 + \norm{\vect{x}^{(l-1)}_{t-1,s}-\vect{x}^{(l-1)}_{t-1}}_2 \\
& \le (1 +  \frac{c_{res}L}{H\sqrt{m}} (c_{w,0}\sqrt{m}+R'\sqrt{m}+\eta \ell \hat{Q}'(t-1))) \norm{\vect{x}^{(l-1)}_{t-1,s}-\vect{x}^{(l-1)}_{t-1}}_2 \\
& + \frac{c_{res}L}{H\sqrt{m}} \eta \ell \hat{Q}'(t-1)(c_x R'+c_{x,0}) \\
\end{align*}
Also
\begin{align*}
\norm{\vect{x}_{i,t-1,s}^{(0)}-\vect{x}_{i,t-1}^{(0)}}_2 & = \frac{c_{res}}{H\sqrt{m}} \norm{\relu{\mathcal{W}^{(1)}_{t-1,s} \vect{x}_i} -\relu{\mathcal{W}^{(0)}_{t-1} \vect{x}_i}}_2 \\
& \le \frac{c_{res}}{H\sqrt{m}} L \norm{\mathcal{W}^{(1)}_{t-1,s}  -\mathcal{W}^{(0)}_{t-1} }_2 \\
& \le \frac{c_{res}}{H\sqrt{m}} L \eta \ell \hat{Q}'(t-1)
\end{align*}
Thus
\begin{align*}
\norm{\vect{x}_{i,t-1,s}^{(l)}-\vect{x}_{i,t-1}^{(l)}}_2 & \le (\frac{c_{res}}{H\sqrt{m}} L \eta \ell \hat{Q}'(t-1)+\frac{\frac{c_{res}L}{H\sqrt{m}} \eta \ell \hat{Q}'(t-1)(c_x R'+c_{x,0})}{\frac{c_{res}L}{H\sqrt{m}} (c_{w,0}\sqrt{m}+R'\sqrt{m})}) e^{\frac{c_{res}L}{H\sqrt{m}} (c_{w,0}\sqrt{m}+R'\sqrt{m}+\eta \ell \hat{Q}'(t-1))} \\
& \le \eta \ell \hat{Q}'(t-1) (\frac{c_{res}}{H\sqrt{m}} L+\frac{ (c_x R'+c_{x,0})}{ (c_{w,0}\sqrt{m}+R'\sqrt{m})})e^{\frac{c_{res}L}{\sqrt{m}} (c_{w,0}\sqrt{m}+R'\sqrt{m}+\eta \ell \hat{Q}'(t-1))} \\
& \triangleq \eta \ell \hat{Q}'(t-1) C^{*}_x
\end{align*}
Similarly, through standard calculation we can get
\begin{align*}
\norm{\vect{a}_{t-1,s}-\vect{a}_{t-1}}_2 \le \eta \ell \hat{Q}'(t-1)
\end{align*}
Lastly,
\begin{align*}
\norm{\mat{J}_{i,t-1,s}^{(l)}-\mat{J}_{i,t-1}^{(l)}}_2 & = \norm{\sigma'(\mathcal{W}^{(l)}_{t-1,s} \vect{x}^{(l-1)}_{t-1,s})-\sigma'(\mathcal{W}^{(l)}_{t-1} \vect{x}^{(l-1)}_{t-1})}_2 \\
& \le \beta \norm{\mathcal{W}^{(l)}_{t-1,s} \vect{x}^{(l-1)}_{t-1,s}-\mathcal{W}^{(l)}_{t-1} \vect{x}^{(l-1)}_{t-1}}_2 \\
& \le \beta (\norm{\mathcal{W}_{t-1,s}^{(l)}}_F\norm{\vect{x}^{(l-1)}_{t-1,s}-\vect{x}^{(l-1)}_{t-1}}_2+\norm{\mathcal{W}_{t-1,s}^{(l)}-\mathcal{W}_{t-1}^{(l)}}_F \norm{\vect{x}^{(l-1)}_{t-1}}_2) \\
& \le \beta (\frac{c_{res}L}{H\sqrt{m}} (c_{w,0}\sqrt{m}+R'\sqrt{m}+\eta \ell \hat{Q}'(t-1))\eta \ell \hat{Q}'(t-1) C^{*}_x + \eta \ell \hat{Q}'(t-1)(c_x R'+c_{x,0})) \\
& = \eta \ell \hat{Q}'(t-1) \beta (\frac{c_{res}L}{H\sqrt{m}} (c_{w,0}\sqrt{m}+R'\sqrt{m}+\eta \ell \hat{Q}'(t-1)) C^{*}_x + (c_x R'+c_{x,0})) \\
& \triangleq \eta \ell \hat{Q}'(t-1) \beta  C^{*}_J
\end{align*}
Thus, according to Lemma G.1 in \cite{du2019gradient}, we have
\begin{align*}
& \norm{  \hat{u}'^{(h)}_i\left(\params(t)\right) -\hat{u}'^{(h)}_i\left(\params(t)-s (\mathcal{W}_{t-1}-\mathcal{W}_{t})\right)}_F \\
\le & \frac{c_{res}}{H\sqrt{m}} \big{||}\vect{x}_{i,t-1,s}^{(h-1)} \cdot
\left[\vect{a}_{t-1,s}^\top \prod_{l=h+1}^{H}\left(\mat{I}+\frac{c_{res}}{H\sqrt{m}}\mat{J}_{i,t-1,s}^{(l)}\mathcal{W}_{t-1,s}^{(l)}  \right) \mat{J}_{i,t-1,s}^{(h)}  \right] \\
& -\vect{x}_{i,t-1}^{(h-1)} \cdot
\left[\vect{a}_{t-1}^\top \prod_{l=h+1}^{H}\left(\mat{I}+\frac{c_{res}}{H\sqrt{m}}\mat{J}_{i,t-1}^{(l)}\mathcal{W}_{t-1}^{(l)}  \right) \mat{J}_{i,t-1}^{(h)}  \right]\big{||} \\
& \le \eta \ell \hat{Q}'(t-1) \frac{c_{res}}{H \sqrt{m}} 2 c_{x,0} 2 a_{2,0} L e^{2Lc_{w,0}}  (\frac{C^{*}_x}{2 c_{x,0}}+\frac{1}{2a_{2,0}}+\frac{c_{res}}{\sqrt{m}}\beta C^{*}_J )
\end{align*}
On the other hand
\begin{align*}
    \norm{\hat{u}'^{(H+1)}_i\left(\params(t)\right) -\hat{u}'^{(H+1)}_i\left(\params(t)-s (\theta(t+1)-\theta(t))\right)}_2 & \le \norm{\vect{x}^{(H)}_{t-1,s}-\vect{x}^{(H)}_{t-1}}_2 \\
    & \le \eta \ell \hat{Q}'(t-1) C^{*}_x
\end{align*}

In the end,
\begin{align*}
\norm{\vect{I}_2(t)}_2 & \le \eta \ell \hat{Q}'(t-1) \eta \ell \hat{Q}'(t-1) (\frac{c_{res}}{\sqrt{m}} 2 c_{x,0} 2 a_{2,0} L e^{2Lc_{w,0}}  (\frac{C^{*}_x}{2 c_{x,0}}+\frac{1}{2a_{2,0}}+\frac{c_{res}}{\sqrt{m}}\beta C^{*}_J )+  C^{*}_x)\\
& \le \eta^2 \ell^2 \hat{Q}'(t-1)^2  (\frac{c_{res}}{\sqrt{m}} 2 c_{x,0} 2 a_{2,0} L e^{2Lc_{w,0}}  (\frac{C^{*}_x}{2 c_{x,0}}+\frac{1}{2a_{2,0}}+\frac{c_{res}}{\sqrt{m}}\beta C^{*}_J )+C^{*}_x)\\
& \le  \eta^2 C_{I_2}^{*} \norm{\vect{y}-\vect{\hat{u}}(t)}_2 \\
& \le \frac{1}{16}\eta \lambda_0 \norm{\vect{y}-\vect{\hat{u}}(t)}_2
\end{align*}
\end{proof}
\begin{lem}\label{lem:quadratic_resnet_whole}
If Condition~\ref{cond:linear_converge_resist} holds for $t'=0,\ldots,t-1$ and $\eta\le c\lambda_0H^2n^{-2} \ell^{-2} S^{-1}$ for some small constant $c$, we have
$\norm{\vect{\hat{u}}(t+1)-\vect{\hat{u}}(t)}_2^2 \le \frac{1}{16}\eta \lambda_0 \norm{\vect{y}-\vect{\hat{u}}(k)}_2^2$.
\end{lem}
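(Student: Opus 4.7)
The plan is to bound $\norm{\vect{\hat{u}}(t+1)-\vect{\hat{u}}(t)}_2^2$ by reusing the Taylor decomposition $\hat{u}_i(t+1)-\hat{u}_i(t) = I_1^i(t) + I_2^i(t)$ already introduced in the proof sketch, and then applying $\norm{a+b}_2^2 \le 2\norm{a}_2^2 + 2\norm{b}_2^2$. The contribution from $\vect{I}_2$ is immediate: Lemma~\ref{lem:i2} already gives $\norm{\vect{I}_2(t)}_2 \le \tfrac{1}{16}\eta \lambda_0 \norm{\vect{y}-\vect{\hat{u}}(t)}_2$, so $\norm{\vect{I}_2(t)}_2^2$ is a higher-order term in $\eta$ that is trivially absorbed into the target right-hand side.

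The real work is bounding $\norm{\vect{I}_1(t)}_2^2$. For each coordinate I would expand the inner product layer by layer,
\[
\abs{I_1^i(t)} \le \sum_{h=1}^H \norm{\mathcal{W}_{t+1}^{(h)}-\mathcal{W}_t^{(h)}}_F \cdot \norm{\hat{u}_i'^{(h)}(\params(t))}_F + \norm{\vect{a}_{t+1}-\vect{a}_t}_2 \cdot \norm{\hat{u}_i'^{(H+1)}(\params(t))}_2.
\]
For the parameter displacements I will invoke Lemma~\ref{lem:dist_from_init_resnet_sub} combined with Lemma~\ref{lem:weight_sub_whole}, which together yield $\norm{\mathcal{W}_{t+1}^{(h)}-\mathcal{W}_t^{(h)}}_F, \norm{\vect{a}_{t+1}-\vect{a}_t}_2 \le \eta \ell \hat{Q}'(t)$ with $\hat{Q}'(t) = O(\sqrt{n}/H)\cdot \norm{\vect{y}-\vect{\hat{u}}(t)}_2$. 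For the per-layer gradient norm I reuse the structural bound already derived inside Lemma~\ref{lem:dist_from_wholenetwork}, namely $\norm{\hat{u}_i'^{(h)}(\params(t))}_F = O(c_{res}/H)$ per layer, with the $\vect{a}$-term of $O(1)$. Summing over $h$ and then taking the Euclidean norm over $i \in [n]$ delivers
\[
\norm{\vect{I}_1(t)}_2^2 \;\le\; O\!\left(\tfrac{n^2 \eta^2 \ell^2}{H^2}\right)\cdot\norm{\vect{y}-\vect{\hat{u}}(t)}_2^2.
\]

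Finally I would plug in the step-size constraint $\eta \le c\, \lambda_0 H^2 / (n^2 \ell^2 S)$ assumed in the hypothesis. This collapses $\eta^2 \ell^2 n^2/H^2$ to $c\eta\lambda_0/S$, which for a sufficiently small universal constant $c$ gives $\norm{\vect{I}_1(t)}_2^2 \le \tfrac{1}{32}\eta \lambda_0 \norm{\vect{y}-\vect{\hat{u}}(t)}_2^2$. Combining $\vect{I}_1$ and $\vect{I}_2$ through $\norm{a+b}_2^2 \le 2\norm{a}_2^2 + 2\norm{b}_2^2$ then closes the $\tfrac{1}{16}\eta\lambda_0$ bound.

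The main obstacle, as in the companion lemmas, is the careful book-keeping of the $1/H$ factors: both the per-layer displacement bound $\hat{Q}'(t)$ and the per-layer gradient each carry a factor $1/H$, and these must combine correctly against the summation over the $H$ layers to produce an overall $1/H^2$ in the final estimate. Only with this tightness does the step-size condition $\eta = O(\lambda_0 H^2 / (n^2 \ell^2 S))$ suffice; any looser accounting would force a smaller $\eta$ and hence a slower convergence rate in Theorem~\ref{thm:resist_gd}.
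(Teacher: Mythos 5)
Your proposal is correct, but it takes a different decomposition than the paper's proof, so a short comparison is in order. The paper does not route this bound through the Taylor splitting $\hat{u}_i(t+1)-\hat{u}_i(t)=I_1^i(t)+I_2^i(t)$ at all. Instead it writes the output difference directly from the network's closed form, $\hat{u}_i(t+1)-\hat{u}_i(t)=\vect{a}_{t+1}^\top\vect{x}_{i,t+1}^{(H)}-\vect{a}_t^\top\vect{x}_{i,t}^{(H)}=\left[\vect{a}_{t+1}-\vect{a}_t\right]^\top\vect{x}_{i,t+1}^{(H)}+\vect{a}_t^\top\left[\vect{x}_{i,t+1}^{(H)}-\vect{x}_{i,t}^{(H)}\right]$, applies $(a+b)^2\le 2a^2+2b^2$, and then only needs two ingredients: the displacement $\norm{\vect{a}_{t+1}-\vect{a}_t}_2\le\eta\ell Q'(t)$ and the forward-pass perturbation bound of Lemma~\ref{lem:pertubation_of_neuron_res_whole}, which converts the aggregate weight displacement into $\norm{\vect{x}_{i,t+1}^{(H)}-\vect{x}_{i,t}^{(H)}}_2\lesssim c_x\,\eta\ell Q'(t)$. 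This yields the same $O(n^2\eta^2\ell^2/H^2)\norm{\vect{y}-\vect{\hat{u}}(t)}_2^2$ order as your estimate, without ever invoking Lemma~\ref{lem:i2}, the integral remainder, or the per-layer gradient norms $\norm{\hat{u}_i'^{(h)}}_F=O(c_{res}/H)$. Your route is heavier but equally valid: the $1/H$ book-keeping you flag (per-layer displacement $\eta\ell\hat{Q}'(t)=O(\eta\ell\sqrt{n}/H)\norm{\vect{y}-\vect{\hat{u}}(t)}_2$ times per-layer gradient $O(1/H)$, summed over $H$ layers) lands on the same $n^2/H^2$ prefactor that the paper obtains through $Q'(t)^2$, so the step-size condition closes the bound identically; the absorption of $\norm{\vect{I}_2(t)}_2^2$ as a higher-order term is also fine since $\eta\lambda_0=O(1)$. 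The paper's version is preferable mainly because it avoids circular-looking dependence on Lemma~\ref{lem:i2} (both lemmas feed into the same induction step), whereas yours reuses machinery already built and so costs nothing new. One cosmetic note: the paper's own displayed proof actually terminates at $\tfrac{1}{8}\eta\lambda_0\norm{\vect{y}-\vect{\hat{u}}(t)}_2^2$ despite the statement claiming $\tfrac{1}{16}$; your accounting reaches $\tfrac{1}{16}$ only if each of $2\norm{\vect{I}_1(t)}_2^2$ and $2\norm{\vect{I}_2(t)}_2^2$ is pushed below $\tfrac{1}{32}\eta\lambda_0\norm{\vect{y}-\vect{\hat{u}}(t)}_2^2$, which a slightly smaller universal constant $c$ delivers, so this is a constants issue shared with the paper rather than a gap in your argument.
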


\begin{proof}[Proof of Lemma~\ref{lem:quadratic_resnet_whole}]
	\begin{align*}
\norm{\vect{\hat{u}}(t+1)-\vect{\hat{u}}(t)}_2^2 = & \sum_{i=1}^{n}\left(\vect{a}_{t+1}^\top \vect{x}_{i,t+1}^{(H)}-\vect{a}_{t}^\top \vect{x}_{i,t}^{(H)}\right)^2 \\
= & \sum_{i=1}^{n}\left(\left[\vect{a}_{t+1}-\vect{a}_{t}\right]^\top \vect{x}_{i,t+1}^{(H)}+\vect{a}_{t}^\top \left[\vect{x}_{i,t+1}^{(H)}-\vect{x}_{i,t}^{(H)}\right] \right)^2 \\
\le &2\norm{\vect{a}_{t+1}-\vect{a}_{t}}_2^2\sum_{i=1}^{n}\norm{\vect{x}_{i,t+1}^{(H)}}_2^2+2\norm{\vect{a}_{t}}_2^2\sum_{i=1}^{n}\norm{\vect{x}_{i,t+1}^{(H)}-\vect{x}_{i,t}^{(H)}}_2^2\\
\le &18n\eta^2 \ell^2 c_{x,0}^2Q'(t)^2+4 n \left(\eta  \ell a_{2,0}c_xQ'(t)\right)^2\\
\le &\frac{1}{8}\eta \lambda_0 \norm{\vect{y}-\vect{\hat{u}}(t)}_2^2.
\end{align*}
\end{proof}

\end{document}